\documentclass{article}

     \PassOptionsToPackage{numbers, compress}{natbib}
\usepackage[preprint]{neurips_2026}

\usepackage[utf8]{inputenc} 
\usepackage[T1]{fontenc}    
\usepackage{hyperref}       
\usepackage{url}            
\usepackage{booktabs}       
\usepackage{amsfonts}       
\usepackage{nicefrac}       
\usepackage{microtype}      
\usepackage{xcolor}         

\usepackage{amsmath}
\usepackage[capitalize,noabbrev]{cleveref}
\usepackage{amssymb}
\usepackage{mathtools}
\usepackage{amsthm}
\usepackage{dsfont}
\theoremstyle{plain}
\usepackage{enumitem}

\newtheorem{theorem}{Theorem}[section]
\newtheorem*{theorem*}{Theorem}

\theoremstyle{definition}

\newtheorem{assumption}[theorem]{Assumption}
\theoremstyle{remark}

\crefname{assumption}{Assumption}{Assumptions}
\Crefname{assumption}{Assumption}{Assumptions}

\newcommand{\theoremGD}{Consider the gradient flow dynamics $\dot{\bU} = -\nabla_{\bU} L$ and $\dot{\bV} = -\nabla_{\bV} L$ on the loss $L(\bU, \bV)$ starting from infinitesimal initialization. Then:
\begin{enumerate}
    \item Fixed Points: Any point $(\bU, \bV)$ such that $\bV \bU = \sum_{k=1}^r s_k q_k r_k^\top$ (for $r \le D$) with $u_i\in\text{span}\{r_k\}_{k=1}^r$, $v_i\in \text{span}\{q_k\}_{k=1}^r$ is a fixed point of the dynamics.
    \item Sequential Learning: The $k$-th singular value $\sigma_k(t)$ of the product matrix $\bV \bU$ evolves according to a sigmoid function, with a convergence time $\propto s_k^{-1}$. Consequently, for any $k$ such that $s_k > s_{k+1}$, there exists a time interval when the $k$-th mode is fully learned ($\sigma_k \approx s_k$) while the $(k+1)$-th mode is negligible ($\sigma_{k+1} \approx 0$).
\end{enumerate}}

\newcommand{\theoremSpecGD}{Observe the spectral gradient flow $\dot{\bU} = - \text{orth}(\nabla_\bU L)$ and $\dot{\bV} = - \text{orth}(\nabla_\bV L)$ where $\text{orth}(\cdot)$ is the orthogonalization of a matrix. Starting from infinitesimal initialization, we have:
    \begin{enumerate}
    \item Solution Trajectory: The learning trajectory of $\bW=\bV \bU$ will sequentially pass through all ${\bW_r := \sum_{k=1}^{r} s_{r+1} q_k r_k^\top + \sum_{k=r+1}^D s_kq_k r_k^\top}$ for $r\le D$ (from higher to lower) such that ${s_{r}>s_{r+1}}$.
    \item Equal Learning: To get from one $\bW_r$ to the next, $\bW$ learns singular spaces of $\Sigma_{yx}$ in the same time, at the same rate, until saturation. Furthermore, the evolution of $k$-th singular value $\sigma_k(t)$ of $\bW$ follows a quadratic curve, leading to convergence time $\propto \sqrt{s_k}$.
\end{enumerate}}

\definecolor{dgreen}{rgb}{0.0, 0.52, 0.34}

\newcommand{\bV}{{\boldsymbol{V}}}
\newcommand{\bL}{\boldsymbol{\Lambda}}
\newcommand{\bLh}{\hat{\boldsymbol{\Lambda}}}
\newcommand{\bK}{{\boldsymbol{K}}}
\newcommand{\bQ}{{\boldsymbol{Q}}}
\newcommand{\be}{{\boldsymbol{e}}}
\newcommand{\bg}{{\boldsymbol{g}}}
\newcommand{\bx}{{\boldsymbol{x}}}

\newcommand{\Rl}{\mathbb{R}}
\newcommand{\bk}{{\boldsymbol{k}}}
\newcommand{\bq}{{\boldsymbol{q}}}
\newcommand{\bw}{{\boldsymbol{w}}}
\newcommand{\bX}{{\boldsymbol{X}}}
\newcommand{\R}{\mathbb{R}}
\newcommand{\bI}{{\boldsymbol{I}}}
\newcommand{\bW}{{\boldsymbol{W}}}
\newcommand{\bU}{{\boldsymbol{U}}}
\newcommand{\bS}{{\boldsymbol{S}}}
\newcommand{\bG}{{\boldsymbol{G}}}
\newcommand{\bC}{{\boldsymbol{C}}}

\usepackage{algorithm}
\usepackage{algorithmic}
\usepackage{wrapfig}

\title{To Use or not to Use Muon: \\How Simplicity Bias in Optimizers Matters}

\author{%
  Sara Dragutinović$^{1,}$\thanks{Correspondence to: \texttt{sara.dragutinovic@nyu.edu}} \quad
  Yedi Zhang$^2$ \quad
  Rajesh Ranganath$^1$ \\[2ex]
  $^1$Courant Institute School of Mathematics, Computing, and Data Science, New York University \\
  $^2$Gatsby Computational Neuroscience Unit, University College London
}

\begin{document}

\maketitle

\begin{abstract}
  While Adam has long been the ubiquitous default optimizer for deep neural networks, Muon has recently seen rapid adoption due to its superior training speed. Although much of the literature focuses on validating the benefits of Muon, our work investigates the potential downsides of the mechanism driving this speedup. On the theoretical front, we analyze the learning dynamics of simplified Muon on deep linear networks and linear attention. Our analysis reveals that Muon gains speed by avoiding saddle points, but does so at the expense of the simplicity bias characteristic of Gradient Descent (GD), where the complexity of the functional solution learned grows sequentially. Experiments demonstrate the consequences of losing the simplicity bias, showing that Muon struggles to uncover common underlying structure across tasks and may be prone to fitting spurious features. More broadly, this paper serves as a reminder that faster optimization is rarely a free lunch; improvements in optimization can come at the cost of changes in the inductive biases that shape generalization.
\end{abstract}

\section{Introduction and Background}
Employing deep learning in practice usually involves careful decisions on the model architecture and data (e.g. deciding on augmentations or feature engineering), while the choice of the optimizer typically relies on established defaults such as Adam, or more recently, Muon. This paper aims to spotlight the impact of optimizer choice: different optimizers take different optimization paths in the loss landscape, each with their own inductive biases. Despite this, prior works primarily focus on the benefit of faster optimization \citep{wen2026fantastic}. There remains a lack of theory or intuition on how the learning trajectory taken impacts functional properties of the learned solution.

Specifically, we focus on the Muon optimizer \citep{jordan2024muon} which has attracted a large audience due to its performance on the NanoGPT Speedrun competition \citep{modded_nanogpt_2024}, learning much faster (in wallclock time) than other optimizers. It has since been adopted as one of the new defaults, for example, in training NanoChat \citep{nanochat} and DeepSeek-V4 \citep{deepseekai2026deepseekv4}, with speculations it may be used for language models in industry as well \citep{su2025isotropic}. However, we still lack an understanding of Muon's inductive biases, the trajectories it takes through the loss landscape to be fast, and the implications of these dynamics on the solution Muon-optimized model converges to.

The goal of this paper, apart from emphasizing asking those questions, is to make initial steps in answering them. Prior work has shown that Muon \textit{does} learn all the features, e.g. much faster than GD in a setting where their presence is largely imbalanced \citep{vasudeva2026how, wang2026muon}. In our work, we focus on the question \textit{how} does Muon learn all these features. In a non-linear toy setting, we show that a good mental model for feature learning dynamics is the singular value evolution in deep linear networks: Muon learns all features simultaneously at an equal rate, while SGD learns them one by one, in the order of descending strength. In other words, Muon does \textit{not} possess this type of dynamical simplicity bias observed in SGD \citep{saxe2013exact, gidel2019implicit, nakkiran2019sgd, refinetti2023neural, zhang2026saddletosaddle}. Investigating the consequences of this, we find that Muon's equal learning of features can result in failure to capture generalizable structures. Interested in whether these findings hold more broadly, for more modern architectures, we explore the transformer model. Our theoretical analysis of learning dynamics on linear attention indicates the lack of yet another type of simplicity bias in Muon: Muon learns heads all together, rather than one-by-one. Our experiments on a two-layer softmax transformer reveal downsides of Muon-learned solution. 
By presenting these findings, we aim to encourage a more nuanced evaluation of Muon and other optimizers that looks beyond its empirical speed to also consider the functional properties of the learned solution and how they generalize. 
We summarize our contributions as follows:
\begin{itemize}[leftmargin=20pt]
    \item Motivated by the theory of GD- \citep{saxe2013exact} and simplified Muon- \citep{vasudeva2026how} optimized deep linear networks (Section \ref{sec: dlnn}), we show on a non-linear toy example that feature learning with both optimizers behaves similarly to the dynamics of singular values described by the theory (Section \ref{sec: non-lin}). Importantly, both theory and empirics indicate that Muon loses the simplicity bias of GD, and learns all features equally, at the same time.
    \item Contrary to prior beliefs \cite{shah2020pitfalls, vasudevarich, vasudeva2026how}, we show that optimizers with simplicity bias can surpass the ones without in a setting with spurious features. The bigger picture is that the winner is not determined only by the optimizer, but by data properties as well (Section \ref{sec: spur}).
    \item Wanting to understand better the potential downsides of Muon's equal feature learning, in a toy version of a multimodal task, we show that simplified Muon tends to \emph{not} learn representations that generalize (Section \ref{sec: routing}).
    \item Shifting to more modern architectures, we theoretically analyze the learning dynamics of GD and simplified Muon on multi-head linear attention. Theory states that while GD learns the heads one-by-one, simplified Muon learns all the heads simultaneously (Section \ref{sec: transformer_theory}).
    \item Learning all heads at the same time could impact model's generalization, as related concepts may be learned by separate heads instead of having a common representation. We demonstrate this with a two-layer softmax transformer, which, when trained with Muon, does not extrapolate to unseen (but related) data, unlike its SGD-trained counterpart (Section \ref{sec: transformer}).
\end{itemize}

The key takeaway here is that dynamical simplicity bias acts as an implicit curriculum, forcing learning one by one, with the order of learning given by geometric properties of the model and data. Muon abandons this curriculum to optimize across multiple axes at once; while this rapidly minimizes error, it prevents the model from stepping through simpler solutions that would have revealed the latent structures required for generalization---and this happens in practice!

\paragraph{Muon Optimizer and Spectral GD.}
We briefly introduce the optimization step of \textbf{Muon} (MomentUm Orthogonalized by Newton-Schulz), as introduced in \citet{jordan2024muon}. Unlike SGD and Adam, Muon takes advantage of the usual setup in deep learning, where most of the parameters are organized in two-dimensional weight matrices. Let ${\bW}^{(t)}\in \Rl^{d_{\text{out}}\times d_{\text{in}}}$ be a weight matrix after optimization step $t$, and ${\bG}^{(t)}$ the usual GD update of the weights with momentum $\mu$, i.e. ${{\bG}^{(t)} = \mu {\bG}^{(t-1)}+\nabla_{\bW} \mathcal L({\bW}^{(t)})}$. The update of Muon at time $t+1$ and learning rate $\eta$ aims to be $${\bW}^{(t+1)} = {\bW}^{(t)} - \eta {\bU}^{(t)}{\bV}^{(t)\top},$$
where ${\bG}^{(t)} = {\bU}^{(t)}{\bS}^{(t)}{\bV}^{(t)\top}$ is the Singular Value Decomposition (SVD) of ${\bG}^{(t)}$\footnote{The SVD performed is compact, meaning the diagonal matrix ${\bS}^{(t)}$ is of dimension $r\times r$ ($r$ being the rank of ${\bG}^{(t)}$) and contains only positive singular values.}. In words, Muon has the goal to set all non-zero singular values of the update matrix to be equal to 1. This process is termed \textit{orthogonalization} of ${\bG}^{(t)}$. However, computing the SVD at every step is computationally prohibitive for deep learning. To address this, Muon employs Newton-Schulz iterations to efficiently approximate orthogonalization \citep{doi:10.1137/0707031, 59139e86-5891-3aa5-b8e6-33b9eeb82afa}. 

We also introduce \textbf{Spectral GD} \citep{pmlr-v38-carlson15, 7347351}, a simplification of Muon. Compared to Muon, Spectral GD (i) uses exact SVD to orthogonalize the update, instead of approximate SVD with Newton-Schulz iterations; (ii) doesn't use momentum, but orthogonalizes $\nabla_{\bW} L({\bW}^{(t)})$ directly. Spectral GD retains important properties of Muon (i.e., orthogonalization) and is more mathematically tractable. We study Spectral GD for the theory and switch back to the full Muon for experiments.
\section{Deep Linear Networks: Spectral GD Loses Simplicity Bias}
\label{sec: dlnn}
\begin{figure*}
    \centering
    \includegraphics[width=0.85\linewidth]{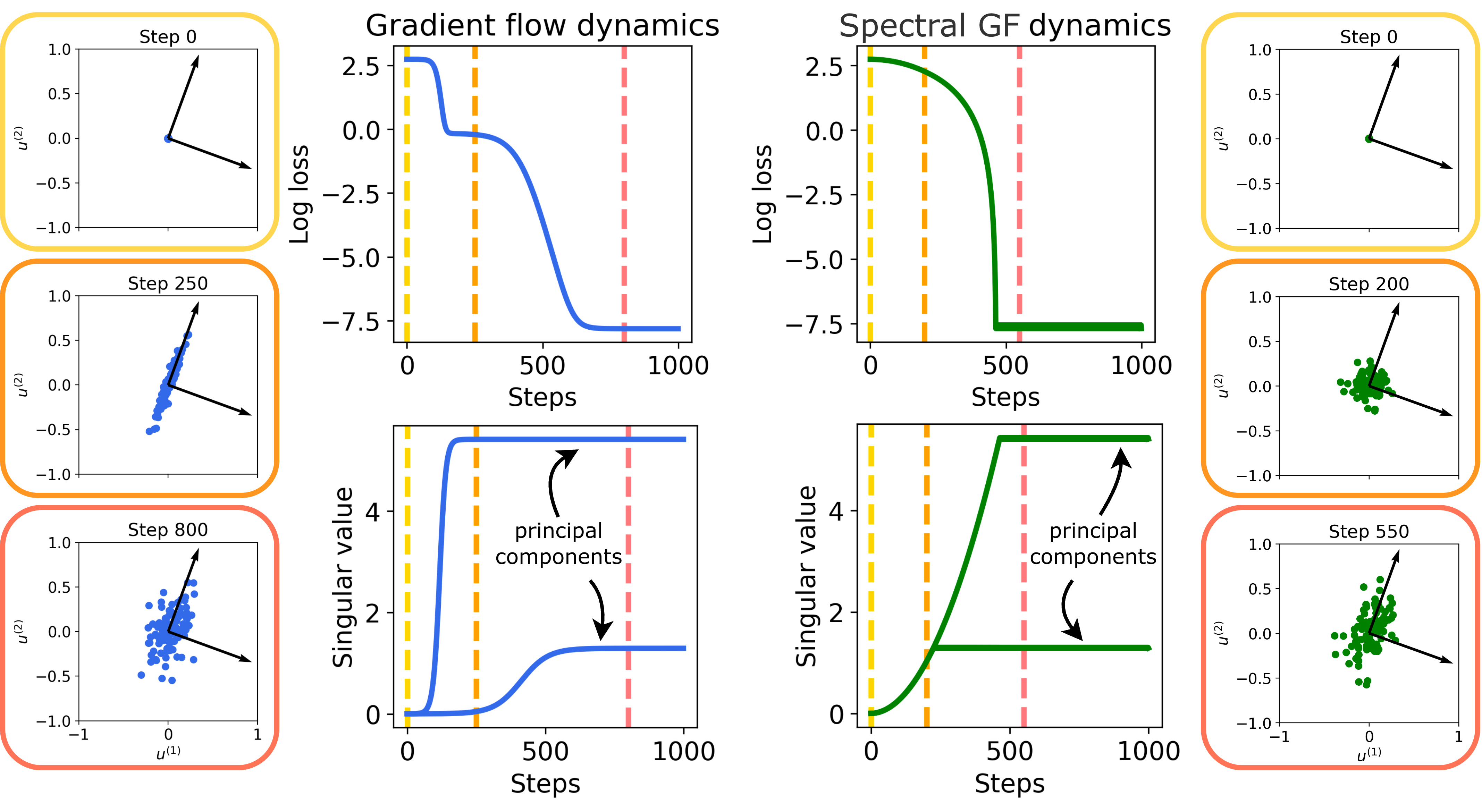}
    \caption{Illustration of the theory presented in Section \ref{sec: dlnn}, for gradient flow (\textit{Left}) and Spectral GD (\textit{Right}). The top row depicts the loss curve, the bottom one the evolution of singular values of the neural network. We also show the evolution of each row $u_i$ of the first layer matrix $U\in \Rl^{H\times d_{in}}$ for $H=100$ hidden neurons (dots in the circled plots). Black arrows are right singular vectors of the input-output data correlation matrix (see Appendix \ref{appendix: theory} for better understanding). We observe that for GD, first singular vector is fully learned first, and only then the second one is learned. On the other hand, Spectral GD learns both of them in the same time, and after it saturates on the smaller one, then $u_i$s progress only in the direction of the larger one. This simulation is closely following theorems \ref{theorem: gd}, \ref{theorem: spectral gd} and proofs in Appendix \ref{appendix: theory}, also containing the experimental setup details.}
    \label{fig:theory}
\end{figure*}
Our journey towards understanding Muon's biases starts with theoretical analysis on deep linear networks trained with mean squared error loss. We informally present it here, attribute the results to \citet{vasudeva2026how}, and compare to classical results for GD \citep{saxe2013exact}. Formal statements and proofs are provided in Appendix \ref{appendix: theory}.

\begin{theorem}[Gradient flow dynamics---Informal]
\label{theorem: gd}
Gradient flow dynamics in two-layer deep linear network exhibits gradual increase in complexity: starting from small initialization, principal components are learned one by one, from larger singular values $s$ to smaller, sequentially increasing the rank of the learned solution. For each principal component to be learned, a saddle point needs to be escaped, taking $\propto 1/s$ time.
\end{theorem}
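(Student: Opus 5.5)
We follow the classical analysis of \citet{saxe2013exact}, formalized as in the \texttt{\textbackslash theoremGD} statement of Appendix~\ref{appendix: theory}. Take the two-layer linear network $\bW=\bV\bU$ with loss $L(\bU,\bV)=\tfrac12\mathbb E\|y-\bV\bU x\|^2$. Assume whitened inputs, $\Sigma_{xx}=\bI$, and write the SVD $\Sigma_{yx}=\sum_k s_k q_k r_k^\top$. Then $L$ equals $\tfrac12\|\Sigma_{yx}-\bV\bU\|_F^2$ up to a constant, and the flow reads
\[
\dot{\bU}=\bV^\top(\Sigma_{yx}-\bV\bU),\qquad \dot{\bV}=(\Sigma_{yx}-\bV\bU)\bU^\top .
\]

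\textbf{Fixed points.} Suppose $\bV\bU=\sum_{k\le r}s_kq_kr_k^\top$ and the rows $u_i$ of $\bU$ lie in $\mathrm{span}\{r_k\}_{k\le r}$. Then the residual $\Sigma_{yx}-\bV\bU=\sum_{k>r}s_kq_kr_k^\top$ annihilates every $u_i$, so $\dot{\bV}=0$. Similarly, the columns $v_i$ of $\bV$ lie in $\mathrm{span}\{q_k\}_{k\le r}$, so $\bV^\top$ annihilates the residual and $\dot{\bU}=0$. These points are saddles: perturbing along an unlearned mode $q_{r+1}r_{r+1}^\top$ gives a locally linear unstable direction whose growth rate is $s_{r+1}$.

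\textbf{Decoupled dynamics.} The flow conserves $\bU\bU^\top-\bV^\top\bV$, and with infinitesimal initialization this quantity is approximately zero, so the layers are balanced. We then assume initialization aligned with the singular vectors (the standard Saxe et al.\ ansatz), $\bU=\bR\,\mathrm{diag}(a_k)\,[r_k]^\top$ and $\bV=[q_k]\,\mathrm{diag}(a_k)\,\bR^\top$ with $\bR$ orthogonal. Under this ansatz the alignment is preserved and each mode evolves independently as $\dot a_k=a_k(s_k-a_k^2)$. Writing $\sigma_k=a_k^2$ gives the logistic equation $\dot\sigma_k=2\sigma_k(s_k-\sigma_k)$, whose solution is
\[
\sigma_k(t)=\frac{s_k e^{2s_kt}}{e^{2s_kt}-1+s_k/\sigma_k(0)} .
\]
This is a sigmoid with transition time $t_k\approx \frac{1}{2s_k}\log\frac{s_k}{\sigma_k(0)}$, which is $\propto 1/s_k$ up to the logarithmic factor in the initialization scale.

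\textbf{Sequential learning.} Let $\sigma_k(0)=\epsilon\to0$. If $s_k>s_{k+1}$, the ratio $t_{k+1}/t_k\to s_k/s_{k+1}>1$. The window between the two transitions therefore has length $\Theta(\log(1/\epsilon))$. During this window $\sigma_k\approx s_k$ while $\sigma_{k+1}\approx\epsilon^{1-s_{k+1}/s_k}\to0$. Along the way the trajectory visits neighborhoods of the fixed points of part~1 in order of increasing rank, which gives the saddle-to-saddle picture in Theorem~\ref{theorem: gd}.

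\textbf{Main obstacle.} The hard step is justifying the decoupled ansatz for generic small initialization rather than aligned initialization. The first thing I would try is to linearize around the origin. There the dynamics are governed by $\Sigma_{yx}$ acting on the pair $(\bU,\bV)$, so the top mode grows at rate $s_1$ and dominates by a factor $\epsilon^{-(1-s_2/s_1)}$. After this linear escape phase, the trajectory is close to the aligned manifold with error that is polynomially small in $\epsilon$. Balancedness plus a Grönwall bound then keeps it close through the nonlinear phase, as in saddle-to-saddle analyses \citep{zhang2026saddletosaddle}. Iterating this argument at each saddle, with the learned modes treated as frozen, yields the general statement in the limit $\epsilon\to0$.
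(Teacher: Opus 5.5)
Your proposal is correct and follows essentially the same route as the paper. The fixed-point check is the same residual-annihilation argument. In both, balancedness plus the jointly diagonalizable (aligned) initialization reduces the flow to the per-mode logistic equation $\dot\sigma_k=2\sigma_k(s_k-\sigma_k)$; the paper gets there from the end-to-end ODE for $\bW$ using $(\bW^\top\bW)^{1/2}$ and $(\bW\bW^\top)^{1/2}$, while you use factor amplitudes $a_k$, which is equivalent. Your explicit $\log(s_k/\sigma_k(0))$ factor and window estimate sharpen the paper's ``time $\propto 1/s_k$''. Your generic-initialization sketch is heuristic and not needed, because the paper simply assumes joint diagonalizability and justifies it by silent alignment.
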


\begin{theorem}[Spectral gradient flow dynamics---Informal]
\label{theorem: spectral gd}
Spectral gradient flow learns all principal components with the same speed, at the same time. Hence the ones with the smaller singular values $s$ are fully learned first, in time $\propto \sqrt{s}$.
\end{theorem}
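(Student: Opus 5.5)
The strategy is to reduce the spectral gradient flow to decoupled scalar dynamics, one per singular mode of $\Sigma_{yx}=\sum_k s_k q_k r_k^\top$, in the same spirit as \citet{saxe2013exact} for Theorem \ref{theorem: gd}. The orthogonalization then turns each mode's growth from exponential into linear, at a common rate.

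Set up the loss as $L=\tfrac12\|\Sigma_{yx}-\bV\bU\|_F^2$, assuming whitened inputs. The gradients are
\[
\nabla_\bU L=-\bV^\top \bE,\qquad \nabla_\bV L=-\bE\bU^\top,\qquad \bE=\Sigma_{yx}-\bV\bU .
\]
I would use an aligned ansatz: $\bU=\sum_k a_k\, p_k r_k^\top$ and $\bV=\sum_k b_k\, q_k p_k^\top$, with orthonormal hidden directions $p_k$ and $a_k=b_k$ (a balanced, small initialization). The first step is to check that this ansatz is invariant under the flow. Writing $\sigma_k=a_kb_k$, we get $\bE=\sum_k (s_k-\sigma_k) q_k r_k^\top$. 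Then $\nabla_\bU L=-\sum_k b_k(s_k-\sigma_k)\, p_k r_k^\top$, and its orthogonalization is $-\sum_{k:\,s_k\neq\sigma_k} \operatorname{sign}(s_k-\sigma_k)\, p_k r_k^\top$, because every mode with a nonzero coefficient receives singular value exactly $1$. The same holds for $\bV$. This gives the scalar system
\[
\dot a_k=\dot b_k=\operatorname{sign}(s_k-\sigma_k)\,\mathds{1}[\sigma_k\neq s_k].
\]
So every unsaturated mode moves at unit speed, independently of $s_k$. This is the \emph{equal learning} claim.

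Integrating from an infinitesimal initialization gives $a_k(t)=b_k(t)\approx t$, hence $\sigma_k(t)\approx t^2$: the singular values follow a common quadratic curve. Mode $k$ stops once $\sigma_k=s_k$, at time $t_k=\sqrt{s_k}$. So the smallest singular values saturate first, in time $\propto\sqrt{s_k}$. At $t=\sqrt{s_{r+1}}$, every mode with $s_k\ge s_{r+1}$ has $\sigma_k=s_{r+1}$, while the smaller ones are already at $s_k$. This is exactly $\bW_r$, so the trajectory passes through $\bW_D,\bW_{D-1},\dots$ in that order (from higher $r$ to lower). Between consecutive $\bW_r$, all unsaturated modes grow together at the same rate, which is the second part of the formal statement.

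The main obstacle is rigor at saturation and at initialization.

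At saturation the sign function is discontinuous, so the flow has to be read in a Filippov sense. I would argue that $\sigma_k=s_k$ is attracting: if $\sigma_k$ overshoots, the sign flips. Saturated modes then contribute zero gradient and drop out of the orthogonalization, so they leave the unit speed of the other modes unchanged.

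At initialization, a generic small random start is not aligned. My first attempt would be to show that the misaligned components stay $O(\epsilon)$, where $\epsilon$ is the initialization scale. The argument is that the orthogonalized update has bounded norm and that the aligned directions dominate the gradient after an $O(\epsilon)$ time. If that fails, I would simply assume an aligned, balanced initialization, as \citet{vasudeva2026how} does, and state the result under that assumption.
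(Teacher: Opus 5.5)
Your proposal is correct and is essentially the paper's own proof. It uses the same jointly diagonal, balanced ansatz (the paper's $\bU=\sum_k\sigma^U_k z_k r_k^\top$, $\bV=\sum_k\sigma^V_k q_k z_k^\top$) and the same observation that orthogonalizing the aligned gradients gives every unsaturated mode unit speed, hence $\sigma_k(t)=\min(t^2,s_k)$, saturation at $t=\sqrt{s_k}$, and passage through $\bW_r$ at $t=\sqrt{s_{r+1}}$. The paper does not try to remove the alignment assumption; it simply assumes joint diagonalizability, which is your fallback, so that extension is optional. If you do pursue it, the delicate point is saturation: there the aligned gradient loses rank, and orthogonalization can amplify a small misaligned residual to unit speed, which is the source of the oscillations the paper reports for discretized Spectral GD.
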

An illustration of \cref{theorem: gd,theorem: spectral gd} is provided in \cref{fig:theory}.
The geometrical reason GD is slower is that it encounters many saddle points along the optimization trajectory. In contrast, Spectral GD doesn't pass through these (or other) saddles, but also it doesn't learn the solution gradually---\textit{it loses this type of simplicity bias}. 
Spectral GD can learn modalities appearing with different frequencies in the training set faster and more uniformly than GD, as is pointed out by prior and concurrent work of \citet{su2025isotropic, vasudeva2026how, wang2026muon}. On the other hand, such a learning is more \textit{greedy}, in the sense that it learns all the different singular vectors simultaneously. An orthogonal example from \citet{puli2023don} suggests that faster learning can trade-off for the price of the solution relying more on spurious features---motivating our experiments later. Simplicity bias is often necessary as it forms an implicit curriculum \cite{liu26curriculum} in learning.

\section{Qualitative Extension to Non-Linear Setting}
\label{sec: non-lin}
Equipped with theoretical understanding of simplified Muon in a linear sandbox, we question whether that theory can help us understand more complex settings. To this end, we employ the full Muon to optimize the cross-entropy loss of a non-linear model, CNN, with stochastic batches. For all the details and further experiments, see Appendix \ref{appendix:features}, and for Q\&A Appendix \ref{appendix: qa}.

\paragraph{Task.}
We are interested in model's feature learning through training, analogous to principal components learning in the theory above. Namely, each image in our training data has a \textbf{digit feature} (handwritten MNIST digit \citep{6296535}) , and a \textbf{pixel feature}. All handwritten images of the digit $i$ have the pixel feature---a pixel with intensity 1---at the same location, different one than digits $j\ne i$. This makes both features predictive of the actual label. To evaluate the models, we use 4 different sets, shown in Figure \ref{fig:features}a):
\begin{itemize}
    \item \textit{Test} evaluator comes from the same distribution as the training set, where images contain both the handwritten digit and the pixel feature corresponding to the label; 
    \item \textit{Digit} evaluator contains only the digit feature---this is just the MNIST test set;
    \item \textit{Misaligned; label=digit} evaluator has, on an image of digit $i$, the pixel feature of the label $(i-1)\pmod {10}$. The correct label corresponds to the digit feature;
    \item \textit{Misaligned; label=pixel} has the same inputs as the previous one, but the correct label now corresponds to the pixel feature.
\end{itemize}
\begin{figure}
    \centering
    \includegraphics[width=\linewidth]{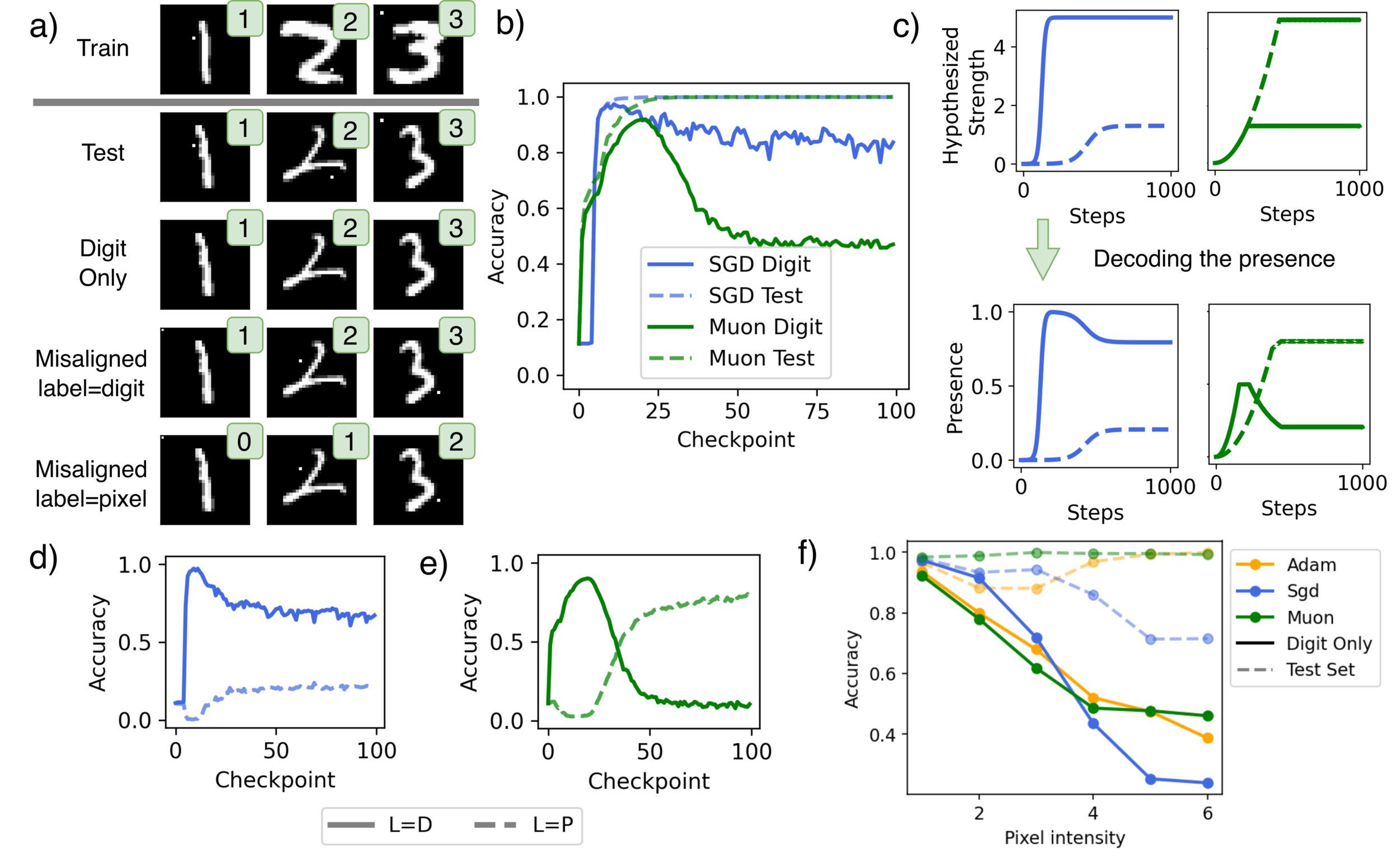}
    \caption{a) Training samples, along with different evaluators in the two feature learning setup. b) Accuracy on Test and Digit only evaluators. c) Converting hypothesized strengths to presence of each feature with (\textit{Left}) GD and (\textit{Right}) Muon. d) Accuracy of SGD on Misaligned evaluators with label=digit (L=D) and label=pixel (L=P). e) Accuracy of Muon on the same sets. We notice how these competing accuracies qualitatively follow the decoded hypothesized presence of the two features. f) Peak accuracy on the Digit only evaluator and Test accuracy (dashed) at the point that peak is achieved. The accuracies are plotted as we increase the intensity of the spurious pixel. The result illustrates that there is not a fixed optimizer better with spurious features; data properties also play a key role in determining the winner.}
    \label{fig:features}
\end{figure}
\paragraph{Experiments.} Taking a look at the accuracies on Test and Digit evaluators in Figure \ref{fig:features}b), we see that SGD pursues a solution relying only on the digit feature for much longer: the two curves separate only at the peak accuracy on Digit set. Muon's curves however start separating earlier, indicating that at the peak accuracy on Digit set, the model is already non-negligibly relying on the pixel feature as well. This reminds us of the theory: in the early training, GD has learned only the largest principal component, where simplified Muon has learned all components in the same time. 

To connect with the theory more explicitly, we need an analogue of singular values. For this purpose, we define a concept of \textbf{strength} of each of the two features, imitating the two singular values. Imagine that the strength of each feature through training can be modeled by curves similar to singular value evolution for each optimizer. Empirically it's not clear how to extract feature strength from the model, but we can measure which feature the model is relying on more---we call the relevant quantity feature \textbf{presence}. We can extract feature presence by measuring the accuracy on Misaligned; label=digit and Misaligned; label=pixel evaluators---a model cannot give the correct answer to both, so there is competition. 

In theory, to go from hypothesized strength $s_i(t)$ of feature $i$ to its hypothesized presence $p_i(t)$ at time $t$, we use the following formula: ${p_i(t) = s_i(t)\min(\frac 1 {\max_t{s_i}(t)}, \frac 1 {s_1(t)+s_2(t)})}$. The first term in the minimum just serves as normalization, not letting the feature be present more than $100\%$. The second term is responsible for competition, making sure that ${p_1(t)+p_2(t)\le 1}$. This means that once both features have large strength, their presence will be proportional to their strength. We plot the decoded feature presence from our hypothesized feature strengths in Figure \ref{fig:features}c).

Comparing hypothesized presence with its empirical counterpart on the Misaligned evaluators for both SGD and Muon (Figure \ref{fig:features}d, e) reveals a striking qualitative similarity. This suggests that \textbf{evolution of the singular values is a good mental model for feature learning with both optimizers}.

Importantly, we also see that for SGD the digit feature has larger strength: the final accuracy on label=digit is larger than on label=pixel (see Appendix \ref{appendix:qa_dp} for a longer run). The contrary holds for Muon model, which tells us that pixel feature is more dominant in the final solution learned by Muon. Crucially, this demonstrates that \textbf{the relative strength of learned features is intrinsically tied to the choice of optimizer, even when the models are exposed to identical training data}. This may be highly relevant when dealing with spurious features, as we investigate next.

\subsection{How simplicity bias interacts with spurious features}
\label{sec: spur}
Spurious features are defined as `simpler' features in the training data that could be predictive, but do not generalize to other settings \citep{puliout}. An example of a spurious feature could be the scanner imprint on a medical scan, an image background in classification of birds, or the pixel feature in our MNIST setup \citep{pulinuisances}. Prior work has claimed that simplicity bias of SGD hurts in these settings compared to Adam and Muon, as it leads to simpler solutions that rely on spurious features \citep{shah2020pitfalls, vasudevarich, vasudeva2026how}. 

Our results in Figure \ref{fig:features}b) indicate differently.  First, they show that SGD can outperform Muon (and Adam, see Appendix \ref{appendix:qa_dp}), measured by the peak accuracy on Digit or Misaligned; label=digit sets. Furthermore, the situation can easily change: when varying the intensity of the spurious pixel in Figure \ref{fig:features}f), observe the transition from SGD being better to Muon as the intensity increases. The hypothesis from the theory explains this: while the intensity is low enough, the digit feature is more dominant and SGD's simplicity bias is helpful. The situation changes for larger intensities, when the simplicity bias focuses on the more dominant, spurious feature, hurting the performance of SGD; Muon learns both features at the same time which in this situation helps. This shows that \textbf{optimizing with Muon is a double-edged sword: while learning is more uniform across the features, task and the data are determining whether that is a good thing or not}.

\section{Spectral GD does not Learn Generalizable Representations}
\label{sec: routing}
The better understanding of Muon's equal feature learning leads us to question the potential downsides compared to GD. Our testbed for that purpose is a toy task which can be solved in multiple ways, but only one---the one understanding the common underlying structure---achieves good out of distribution generalization. In that case, we say that the model has learned the \textit{generalizable representations}.

\paragraph{Setup.} We use the `routing' task setup from \citet{saxe2022neural}, a simplification of a multimodal model. The architecture and task details are explained in Figure \ref{fig:nrr}a,b). While all different input and output sources use their own linear encoders and decoders respectively, the common hidden layer is used by all. During training,
each sample $(x, y)$ will have its input source (i.e. which domain $x$ came from) and output source (domain of $y$), and only corresponding layers are used and trained. One could imagine this setting being a toy simplification for multi-modal networks, where perhaps different sources can include text, images, sound etc. Underlying task is very simple: the goal is to learn the mapping from 4 numbers $\{1,2,3,4\}$ to 4 vectors, as shown in Figure \ref{fig:nrr}b).
However, each input source has a different vector representations for $\{1,2,3,4\}$: for each input source, we generate four $4$-dimensional orthonormal vectors, encoding the four numbers. For simplicity, we don't transform the output vectors based on output domain (but note that still, each output domain has their own linear decoder). We use MSE loss between predicted output vector and the true one (details in Appendix \ref{appendix: nrr}).

The catch of the task is that not all input-output pairs of sources are seen in training. Specifically, we only see samples $(x, y)$ where $x$ came from input source $j$, and $y$ is from output domain $j$ or $j+1 ~(M+1:=1)$, for all $j=1,...,M$. The common linear layer is of dimension $64\times 64$, large enough to memorize all the pairings seen in training, even without realizing that the underlying task has a much simpler solution.
 \begin{figure}
    \centering
    \includegraphics[width=0.9\linewidth]{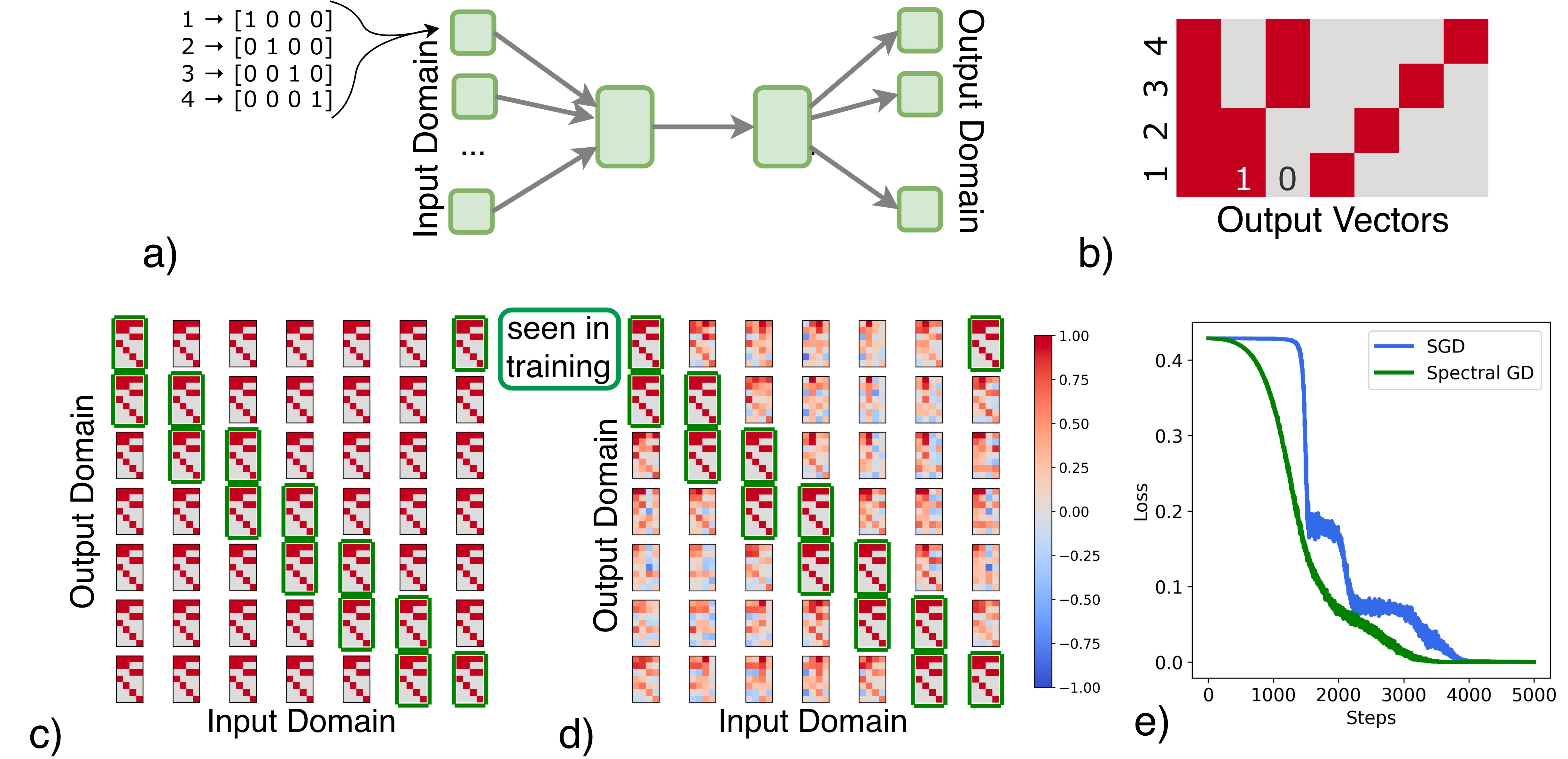}
    \caption{a) The neural network used to solve the task, where each of the gray arrows is a linear layer, with no nonlinearities in between. There is $M=7$ input and output domains. Each input domain has its own, fixed 4 orthonormal vectors to represent $\{1,2,3,4\}$. b) The underlying task we're learning: mapping each number in $\{1,2,3,4\}$ to the output vector shown. Results after training in the `routing' setup with c) SGD and d) Spectral GD, together with e) training loss curves. We plot the function the models learned (4 different column vectors represent the image of $\{1,2,3,4\}$) for all the different input-output pairs of sources. Circled in green are the pairs seen during training.}
    \label{fig:nrr}
\end{figure}
\paragraph{Experiments.} The empirical results are captured in Figure \ref{fig:nrr}c,d), where we show the function learned across all input-output pairs of sources. Training with SGD does result in learning the shared representations across all inputs, because even the input-output pairs that have never been seen during training do end up producing the correct underlying function. This generalization, however, doesn't happen when optimizing with Spectral GD, in the stochastic setting with batches. While Spectral GD does learn to solve the task for input-output pairs seen during training, it does so by memorizing each input-output pair seen, as can be inferred from the poor generalization on input-output pairs not present in training. It is important to note that both algorithms achieve perfect convergence (Figure \ref{fig:nrr}e), thus the hope of Spectral GD recovering the underlying shared representations with more data is gone. Further confirmation comes from the hidden layer’s spectral properties: whereas the SGD solution converges to an approximate low-rank structure (rank $=4$, equaling the dimension of the underlying task), the Spectral GD solution exhibits a significantly higher effective rank with a heavy-tailed spectrum---a clear signature of memorization rather than structural learning.

This example demonstrates that the removal of simplicity bias, while yielding faster convergence, imposes a tangible cost on the structural quality of the learned solution. \textbf{By prioritizing rapid loss reduction, the optimizer bypasses the discovery of common latent structures}, converging instead to a less generalizable solution. Next, we show this behavior can be found in attention.

\section{Theory on Spectral GD with Multi-Head Linear Attention}
\label{sec: transformer_theory}
Intrigued by the results in the simpler architectures, we question whether these effects persist in more modern architectures, particularly in transformers. Although recent work has studied the saddle points and simplicity bias in GD-optimized transformers \citep{boix2023transformers,edelman24iclmarkov,rende24transformer,geshkovski24metastability,zhang2025training,varre25ngram,yuksel2026incremental,saha2026staged}, no such analysis exists for Muon. To fill in this gap, we theoretically study the linear self-attention model \citep{oswald23GD} trained on an in-context linear regression task \citep{garg22function} and build on the training dynamics analysis of \citet{zhang2025training}. The mathematical tractability of this setting comes from the scalar valued output, thus our choice is to study this specific setting.
\begin{figure}
    \centering
    \includegraphics[width=0.9\linewidth]{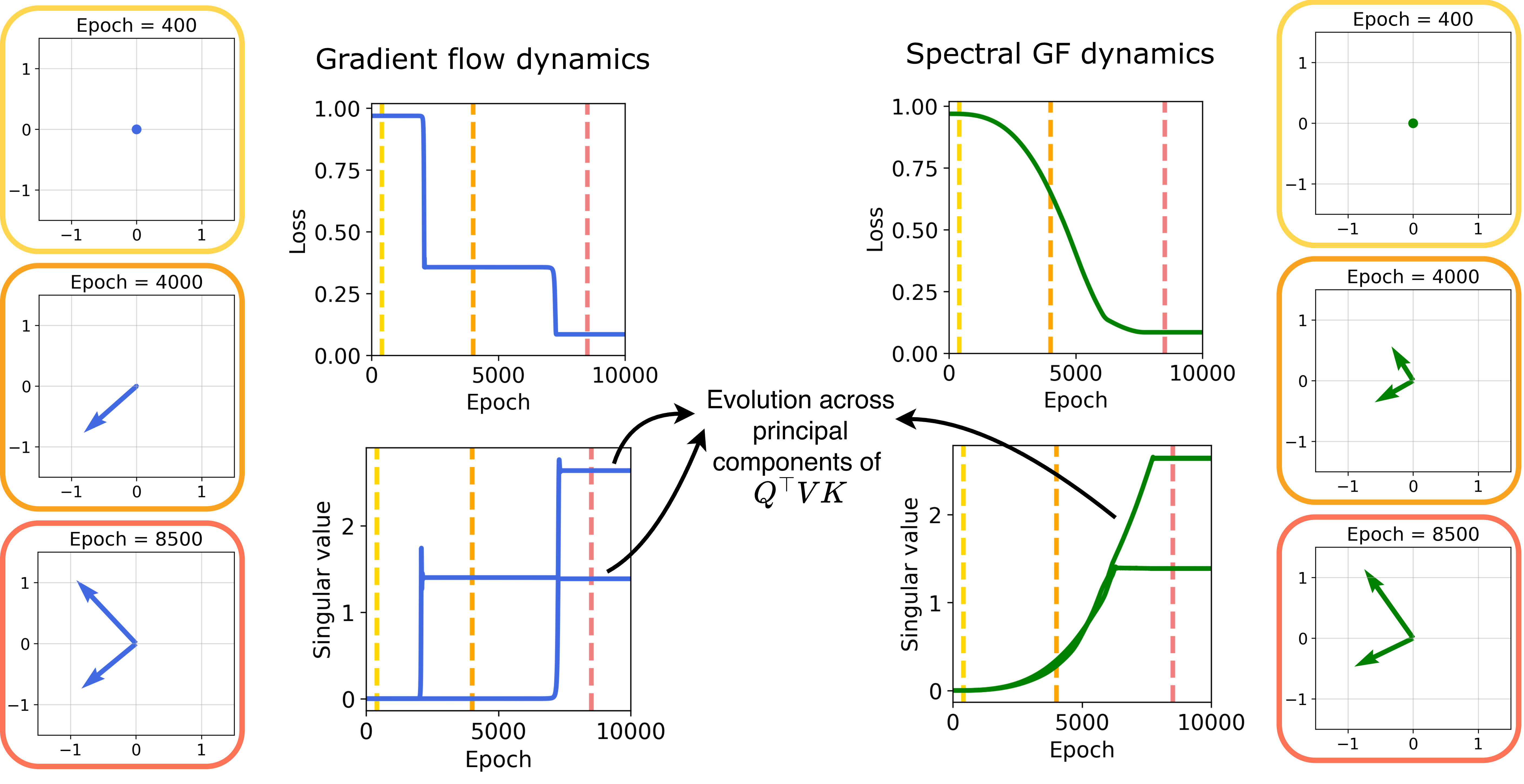}
    \caption{Learning dynamics for GF (Left) and Spectral GF (Right). Top row shows the loss curves, and bottom one the growth of singular values of $\bQ^\top \bV \bK$ for each of the $D=2$ principal components. Framed in orange are the snapshots of $\bk_1$ and $\bk_2$ through training. This simulation follows the theory, illustrating the head-by-head learning with GF and all heads in the same time with Spectral GF.}
    \label{fig:tr_dyn}
\end{figure}

\paragraph{Task.}
Each training sample $\mu=1,\dots, P$ is generated by drawing a $D$-dimensional task vector $\bw_\mu \sim \mathcal N(0, \bI_D)$, and drawing $(N+1)$ $D$-dimensional $\bx$ tokens, $\bx_{\mu, n}, \bx_{\mu, q}\in \mathbb R^{D} \, (n=1,\dots, N)$, independently from $\mathcal N(\mathbf 0, \bL)$. The input sequence is constructed as
\begin{align}
\bX_\mu = \begin{bmatrix}
\bx_{\mu,1} & \cdots & \bx_{\mu,N} & \bx_{\mu,q}  \\
y_{\mu,1} & \cdots & y_{\mu,N} & 0
\end{bmatrix}
= \begin{bmatrix}
\bx_{\mu,1} & \cdots & \bx_{\mu,N} & \bx_{\mu,q}  \\
\bw_\mu^\top \bx_{\mu,1} & \cdots & \bw_\mu^\top \bx_{\mu,N} & 0
\end{bmatrix} 
\in\mathbb R^{(D+1) \times (N+1)} .
\end{align}
The first $N$ columns of $\bX_\mu$ are the context and $\bx_{\mu, q}$ is the query. The target output is $y_q=\bw_\mu^\top \bx_{\mu,q}$, which is the task vector of this sequence applied to the query. This task is called in-context linear regression because the model needs to linearly regress the task vector in context using $N$ pairs of $(\bx_{\mu,n},y_{\mu,n})$ and then apply it to the query.

\paragraph{Model.}
We analyze one-layer linear attention with $H$ heads and rank-one key and query weights in each head.
Following prior work \citep{zhang24jmlr,zhang2025training,cengiz25asymptotic}, we study linear attention in which weight entries irrelevant to this task are initialized to zero. In this regime, the prediction of the model for $y_{\mu, q}$ given input $\bX_\mu$ is $\hat y_{\mu, q} = \sum_{i=1}^H v_i \left(\frac 1 N\sum_{n=1}^N y_{\mu, n} \bx_{\mu, n}^\top \right) \bk_i \bq_i^\top \bx_{\mu, q}$, where the value, key, and query weights in the $i$-th head are denoted as $v_i\in \mathbb R$, and $\bk_i, \bq_i\in \mathbb R^D$. Writing these in matrix form gives $\bK, \bQ\in\Rl^{H\times D}$ and a diagonal matrix $\bV\in\Rl^{H\times H}$ with i-th entry on the diagonal $v_i$.
We train the model with MSE loss, $L = \mathbb E[(y_{\mu, q}-\hat y_{\mu, q})^2]/2$. We denote the in-context covariance of $\bx_{\mu,n}$ as $\hat\bL_\mu = \sum_{n=1}^N \bx_{\mu,n} \bx_{\mu,n}^\top /N$. To facilitate comparison between GD and Spectral GD, we rewrite the gradient flow dynamics from \citet{zhang2025training} in matrix form:
\begin{align*}
    \frac{d\bV}{dt} &=\texttt{diag} \left( \bK \left( \bL^2 - \mathbb{E} \left[ \bLh^2 \right] \bK^\top \bV \bQ \bL \right) \bQ^\top \right)\\
    \frac{d\bK}{dt} &= \bV \bQ \left( \bL^2 - \bL \bQ^\top \bV \bK \mathbb{E} \left[ \bLh^2 \right] \right)\\
    \frac{d\bQ}{dt} &= \bV \bK \left( \bL^2 - \mathbb{E} \left[ \bLh^2 \right] \bK^\top \bV \bQ \bL \right).
\end{align*}
Under small initialization, gradient flow exhibits saddle-to-saddle dynamics \cite{zhang2025training}, learning one head at a time (Figure \ref{fig:tr_dyn} Left). For completeness, we provide the proof in Appendix \ref{appendix: gf lin attn theorem}, under the additional assumptions stated below. Under the same assumptions, we show that Spectral GF's dynamics is characterized by the following theorem, with the formal statement and proof in Appendix \ref{appendix: spectral gd lin attn}.

\textbf{Assumptions.} We assume the attention is parameterized with weight matrices $\bQ, \bK, \bV$, and Spectral GD updates are applied to those matrices. We also assume joint diagonalizability of $\bQ,\bK, \bV$ and $\bL$ at small initialization, meaning the left principal components of $\bK, \bQ$ are the canonical basis vectors in $\Rl^H$, and the right ones are the eigenvectors of $\bL$. We assume $H=D$, and for $H>D$ simulations are in Appendix \ref{appendix: lin attn sim}. 

\begin{theorem}[Spectral gradient flow on linear attention---Informal]
\label{theorem: specgd_attn}
Unlike gradient flow, in the setting described, Spectral gradient flow doesn't pass through saddle points and hence doesn't learn one head at a time. Instead, all heads are being learned at the same time and rate, linearly per component $\bQ, \bK$ and $\bV$. This leads to each head $i$ learning the eigenspace of $\bL$ with eigenvalue $\lambda_i$ in time $\propto \lambda_i^{-1/3}$.\footnote{This is for the case $N\to \infty$, for $N<\infty$ see the formal version.} For a simulation, see Figure \cref{fig:tr_dyn} Right.
\end{theorem}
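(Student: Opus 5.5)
The plan is to use the joint-diagonalizability assumption to reduce the matrix flow to $H=D$ decoupled scalar systems, one per head. We then observe that orthogonalizing a diagonal matrix simply replaces each entry by its sign. Concretely, we work in the eigenbasis of $\bL=\texttt{diag}(\lambda_1,\dots,\lambda_D)$ and write $\bQ=\texttt{diag}(q_i)$, $\bK=\texttt{diag}(k_i)$, $\bV=\texttt{diag}(v_i)$.

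\textbf{Step 1: the moment $\mathbb{E}[\bLh^2]$.} We compute $\mathbb{E}[\bLh^2]$ for Gaussian tokens using fourth-moment (Isserlis) identities:
\begin{align*}
\mathbb{E}[\bLh^2]=\bL^2+\tfrac{1}{N}\left(\bL^2+\text{tr}(\bL)\bL\right).
\end{align*}
This matrix is diagonal in the same basis, with entries $c_i:=\lambda_i^2+(\lambda_i^2+\lambda_i\text{tr}\bL)/N$.

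\textbf{Step 2: the gradients are diagonal.} Plugging into the gradient flow equations above, every gradient is diagonal whenever the parameters are. The $i$-th entries are
\begin{align*}
-\partial_{v_i}L&=k_iq_i\,r_i, & -\partial_{k_i}L&=v_iq_i\,r_i, & -\partial_{q_i}L&=v_ik_i\,r_i,
\end{align*}
where $r_i:=\lambda_i^2-c_i\lambda_i\,v_ik_iq_i$ is the residual of head $i$. Hence the diagonal subspace is invariant under both GF and Spectral GF. For Spectral GF we use that the compact SVD of $\texttt{diag}(g_i)$ gives $\text{orth}(\texttt{diag}(g_i))=\texttt{diag}(\text{sign}(g_i))$, with zero entries mapped to $0$. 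The dynamics therefore decouple into
\begin{align*}
\dot v_i=\text{sign}(k_iq_ir_i),\qquad \dot k_i=\text{sign}(v_iq_ir_i),\qquad \dot q_i=\text{sign}(v_ik_ir_i).
\end{align*}

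\textbf{Step 3: no saddles, and the timescale.} The key contrast with GF is that the speed is $1$ regardless of how small $v_i,k_i,q_i$ are. The multiplicative factor that makes GF linger near the origin for time $\propto 1/\lambda_i$ is therefore absent, and no saddle is visited. From infinitesimal initialization, $r_i\approx\lambda_i^2>0$. After a transient of infinitesimal duration, in which any negatively aligned coordinate flips sign, we have $v_i=k_i=q_i=t$ up to $o(1)$, simultaneously for all heads. This gives linear growth per component, hence product $v_ik_iq_i=t^3$ and the identical rate across heads claimed in \cref{theorem: specgd_attn}. Head $i$ saturates when $r_i=0$, i.e.\ at
\begin{align*}
t_i=(\lambda_i/c_i)^{1/3},
\end{align*}
which in the limit $N\to\infty$ (where $c_i=\lambda_i^2$) gives $t_i=\lambda_i^{-1/3}$.

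\textbf{Step 4 (main obstacle): after saturation.} The right-hand side is discontinuous at $r_i=0$, so the argument has to be made with Filippov solutions or a sliding-mode argument. Overshooting flips the sign of $r_i$ and pushes the trajectory back, so $\{r_i=0\}$ is attracting and head $i$ stays at its optimum. Meanwhile, the remaining heads keep moving at unit speed; they are unaffected because the heads are decoupled and the orthogonalization acts entrywise on diagonal matrices.

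I would also check two further points. First, that the assumed joint diagonalizability at initialization is preserved exactly, which Step 2 gives. Second, that the small-initialization transient contributes only $o(1)$ to each $t_i$. This yields the formal statement, with the explicit finite-$N$ times $(\lambda_i/c_i)^{1/3}$.
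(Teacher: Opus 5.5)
Your proposal is correct and follows essentially the same route as the paper. Joint diagonalization decouples the flow into per-head scalar systems, and orthogonalizing a diagonal matrix reduces each update to a sign, giving $v_j(t)=\min[t,(\lambda_j/\hat\lambda_j)^{1/3}]$ and saturation time $\lambda_j^{-1/3}$ as $N\to\infty$. The paper writes the update directly as an indicator of $v_j\sigma_j^K\sigma_j^Q<\lambda_j/\hat\lambda_j$ and assumes equal positive initialization, so your sign-transient and Filippov/sliding-mode discussion is extra rigor it skips. One slip in a side remark: the GF plateau in this model lasts $\approx 1/(\lambda_i^2\epsilon_i)$, not $\propto 1/\lambda_i$, though this plays no role in your argument.
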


With GF, simplicity bias in deep linear networks took a different form: weights of all hidden neurons evolve, but they grow along one principal component direction at a time. In linear attention, weights in one head grow at a time, while the others remain near zero. We have shown that simplified Muon removes both types of simplicity bias. In the case of GD, \citet{zhang2025training} showed that with heads of higher rank $R>1$, once a head is activated (meaning its weights reach $O(1)$), learning $\bQ$ and $\bK$ empirically follows the dynamics of a deep linear network, as indeed these two matrices are multiplied in the attention. For Spectral GF, this would imply that transformer architecture is prone to losing both types of simplicity biases. Further analysis of learning dynamics of attention matrices with Muon seems promising, and can help us justify major decisions in large scale models. One example is applying Muon to all heads concatenated, as is usually done, or to each separately, as found empirically beneficial by \citet{zeng2026glm}.

\section{A Potential Downside of Using Muon on Transformers}
\label{sec: transformer}
Thinking about generalization, it is important that (i) a single concept is learned by a single head and (ii) closely related concepts are learned by the same head. Muon learning all heads simultaneously may take away the implicit curriculum of SGD, resulting in ungeneralizable representations. In this section, we provide empirical evidence for this.

\paragraph{Model.} We train a two-layer causal transformer on MSE loss, with 2 softmax heads in each layer. Our inputs are one hot vectors and we don't use token embedding and unembedding. We use RoPE positional encoding. For more details, see Appendix \ref{appendix: transformer}, and for ablations see Q\&A in \ref{appendix: qa_tr}.

\paragraph{Task.} The task itself should be simple, featuring shared underlying structure and easy evaluation to test whether the model has learned it. Satisfying all this, we introduce the \textit{cycle task}: the model is trained on next token prediction to learn the numbers from 1 to $N$ in cyclical order, with different skips. Namely, to draw a sample, first draw its skip $s$ from a subset of $\{1,2,...,N-1\}$ uniformly, then the first number in the sequence $i$ also uniformly, independently of the skip. The sample is given by the sequence $(i, (i+s)\pmod N, (i+2s)\pmod N,...)$ with the length $10N$, and $0\equiv N$. Figure \ref{fig:tr}a) contains a couple of examples for the setting $N=4$. The catch this time is that the model sees only a strict subset of skips during training, and the goal is to see whether it learns to generalize through evaluating on unseen skips.

\paragraph{Experiments.} We evaluate the SGD and Muon in the $N=8$ setting, training on the skip set $\{1, 2, 3, -1, -3\}$. Although the models do not encounter skip $-2$ during training, they are exposed to two forward-backward pairs: $(1, -1)$ and $(3, -3)$. We investigate whether observing these symmetries, alongside the presence of skip $2$, provides sufficient signal for the models to extrapolate to the unseen skip $-2$. The results, presented in Figures \ref{fig:tr}b, c), demonstrate that the SGD-optimized transformer learns skip $-2$ nearly as well as skip $2$. In contrast, the Muon-optimized model exhibits poor performance on this extrapolated task, suggesting that \textbf{Muon struggles to learn unified representations that generalize} across different skips. The heads both optimizers learned, shown in Figure \ref{fig:tr}d, e), exhibit the theory-suggested differences: while SGD's solution is effectively low-rank, with higher contribution from one head than the other, Muon's solution uses both heads more equally. This failure to abstract shared rules poses a risk for tasks like mathematics, where the goal is not to memorize specific derivation steps, but to acquire reusable techniques applicable to unseen problems. Concurrent work of \citet{liu2026optimizer} empirically shows that Muon trained and finetuned GPT-2 memorizes more than Adam and SGD counterparts.
\begin{figure}
    \centering
    \includegraphics[width=\linewidth]{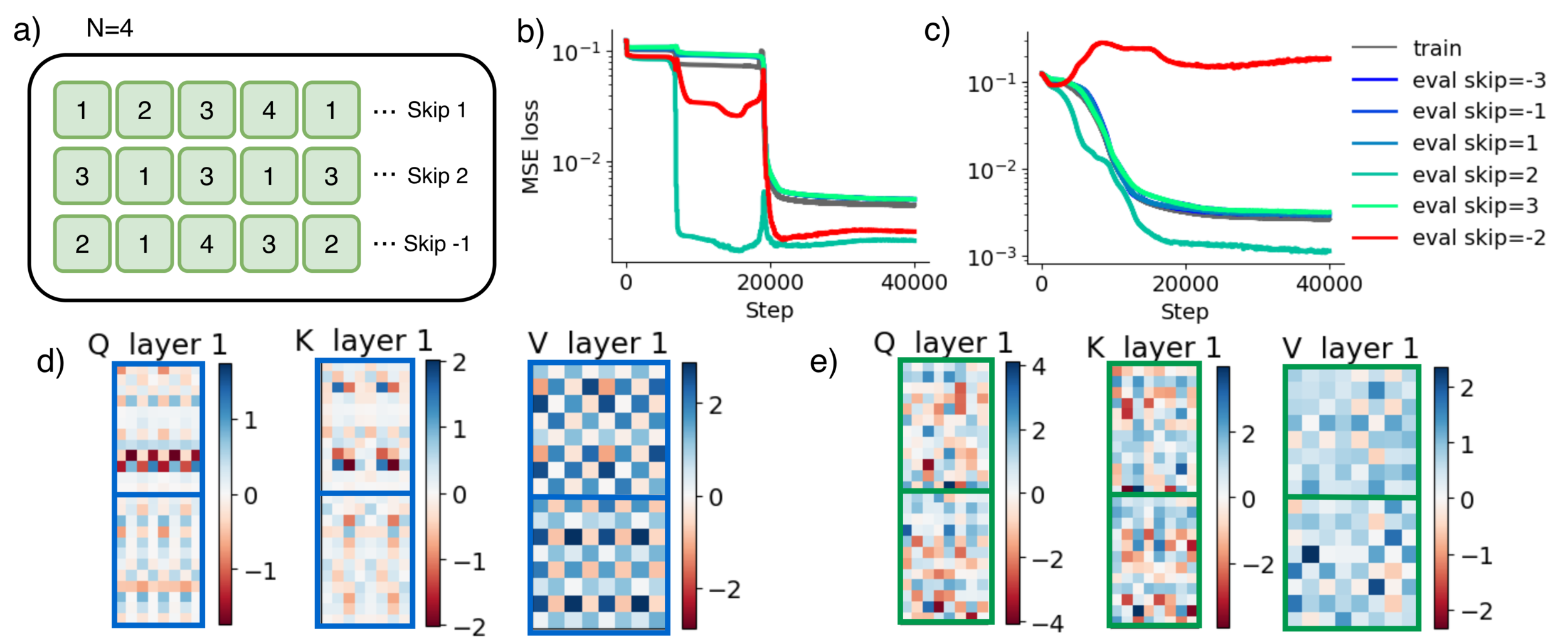}
    \caption{a) Three example sequences with different skips from our task, for $N=4$. Loss on the in- and out-of-distribution data in our transformer setup, when optimized by \textbf{b) SGD} and \textbf{c) Muon}. The model has been trained on all skips shown, apart from skip=-2 (red). d) Final weights of the SGD trained transformer. e) Muon trained instead. The two heads are separated by the framing. Notice how the SGD algorithm picked up the underlying cyclic structure and generalized to unseen skip, where Muon learned all skips more separately, thus not generalizing.}
    \label{fig:tr}
\end{figure}

\section{Discussion}
The recent advancements in deep learning optimizers design have moved the field. While optimization algorithms are usually derived from theoretical principles, it is often unclear how these specific update rules shape the inductive biases and ultimate behavior of the trained deep learning models. We focused on the Muon optimizer \citep{jordan2024muon}, and explored the consequences of its implementational choices. Theoretically, we can explain why it is faster than GD alternatives, but has the trade-off of losing a simplicity bias. Motivated again by the theory, we came up with a couple of examples where SGD outperforms Muon, an outcome rarely seen in prior work. First, in an example with a spurious feature, we showed how the theory applies, and witnessed Muon being worse due to learning all features simultaneously. Contextualizing this within prior work on the pitfalls of simplicity bias, we demonstrated that there is no universally optimal optimizer with spurious features; instead, the interaction between the optimization algorithm and data properties dictates performance. We also evaluated Muon on underlying tasks with shared structure across settings and noticed how Muon fails to learn unified representations. This failure happened even for transformers leading to Muon generalizing worse than SGD. These results show the importance of understanding optimizers' biases, not just comparing them by their speed.

\textbf{Bridging our understanding of learning dynamics closer to practice.} Orthogonal to exploring the downsides of Muon, this work contributes to narrowing the gap in theoretical understanding of learning dynamics in more practical settings. Prior theoretical work analyzed the training dynamics across a wide range of architectures \citep{zhang2026saddletosaddle}, while all assuming gradient descent as an optimization procedure. We explain how the dynamics change when switching from gradient descent to the Muon simplification in the case of linear attention, revealing yet another loss of simplicity in its trajectory. While hinting at their mathematical tractability, we leave exploring other Muon optimized architectures, such as CNNs and ReLU neural networks, for future work.

\paragraph{Limitations and Future Work.} Our work, trying to keep closer to theoretically tractable settings, did not answer the questions about Muon optimizer at larger scale. If anything, we hope to see a shift in these questions, from `How fast is it?' to the ones about the underlying functions learned. One could argue that our example tasks were artificially hiding parts of the data, but can we be sure that is also not the case in LLM pretraining sets? Another open question is whether introducing curriculum into training with Muon can mitigate its ignorance of shared representations. In our experiments in Appendix \ref{appendix: qa_tr}, data curriculum did not help, however, imposing a curriculum to the order of weight learning helped. Also, an elephant in the room is the fact that we don't yet know how to train at large scale with SGD, due to many encountered instabilities. One would potentially need a better way of dealing with saddle points if interested in traversing the same trajectory GD takes. More work on overcoming this issue can be promising \citep{zhao2025deconstructing}.

Altogether, we hope this paper sparks new ideas in the field of deep learning optimizers and that in the future it is more common to analyze the inductive biases of these optimizers and the implications of such. Deciding on what makes a good optimizer is a hard question, and once we have the answer (or an idea for a high quality answer), perhaps it will become easier to act upon it.

\subsubsection*{Acknowledgments}
The authors would like to thank Jatin Prakash for useful discussions and feedback. SD and RR were partly supported by the NIH/NHLBI Award R01HL148248, NSF Award 1922658 NRT-HDR: FUTURE Foundations, Translation, and Responsibility for Data Science, NSF CAREER Award 2145542, ONR N00014-23-1-2634, NIH R01CA296388, NSF 2404476, Optum, and Apple. They were also supported by IITP with a grant funded by the MSIT of the Republic of Korea in connection with the Global AI Frontier Lab International Collaborative Research. YZ was supported by the Gatsby Charitable Foundation (GAT4058).

\newpage
\bibliographystyle{plainnat}
\bibliography{ref}


\newpage
\appendix

 \section{Extended Related Work}
We mostly base our theory in Section \ref{sec: dlnn} for GD on \citet{zhang2026saddletosaddle} and for Spectral GD on \citet{vasudeva2026how}. The theory for GD in Section \ref{sec: transformer_theory} is based on \citet{zhang2025training}. 

\paragraph{Saddle-to-saddle dynamics and simplicity bias.}Saddle-to-saddle dynamics \citep{jacot2021saddle,berthier23diagonal,abbe2023sgd, zhang2026saddletosaddle} is a phenomenon observed across architectures \citep{maennel2018gradient,teney2022evading,boix2023transformers}, causing the learning to happen by gradually increasing the complexity of the solution. This gives rise to a particular type of simplicity bias \citep{arpit2017closer, gidel2019implicit, rahaman2019spectral, hu2020surprising}.
While complex tasks may ultimately require complex representations, Occam's razor remains a vital inductive principle for robust learning and generalization. Because of this, the question of simplicity bias in deep architectures has been broadly studied, both theoretically \citep{valle2018deep, arora2019implicit} and empirically \citep{nakkiran2019sgd, shah2020pitfalls, teney2022evading}. Most of the theoretical papers used gradient descent for the optimizer \citep{nakkiran2019sgd, gidel2019implicit, refinetti2023neural}. This motivated us to think about the biases of other optimizers, especially Muon, as we can show it avoids these saddles. 

\paragraph{Understanding Muon.}
From the theory side, \citet{su2025isotropic} analyses Muon optimizer on isotropic curvature model. \citet{shen2025convergence} provide a convergence rate analysis of (simplified) Muon, in comparison with GD.

Most of the empirical papers focus on the benefits of Muon. In \citet{vasudeva2026how} they compare optimizers on imbalanced classes setting using MNIST-CIFAR dataset and Tiny Stories for language generation. 
\citet{wang2026muon} show that in a transformer, MLP and OV-circuit benefit the most from Muon (versus Adam), and also, motivated by associative memory model, show that Muon outperforms Adam on a memorization task (QA dataset about biographical information) where each dataset entry appears with different frequency. They state '\textit{Muon consistently yields more isotropic weight matrices with broadly distributed spectral energy than Adam, both throughout training and across random initializations, thereby supporting richer feature representations}'. In Section \ref{sec: routing}, we gave an example where 'richer' in their sense (i.e. learning everything there is to learn) is actually bad. Both of these works emphasize how Muon is better in the setting of imbalanced data, learning all modalities more evenly.

\citet{vasudeva2026how} also provides a comparison of the optimizers in a setting with spurious correlations, however, different than ours. In the data setting where 99\% of samples contains a dominant spurious feature, they show that while all optimizers learn the task perfectly, Muon learns it the fastest. They conclude: '\textit{This suggests that Muon’s spectral design promotes balanced learning of these components, leading to superior generalization.}'. While the premise is true, we showed that whether Muon generalizes better depends on the task---in the setting where we want to learn the more dominant features of the data, SGD wins.

Existing literature has largely focused on validating the efficiency benefits of Muon. To complement these findings, we investigated specific scenarios where Muon's inductive biases lead to suboptimal performance. By exploring these trade-offs, we aim to foster a more comprehensive understanding of the optimizer and motivate future algorithmic improvements.

\section{Theory on Deep Linear Networks}
\label{appendix: theory}
Here we formalize the theory on training dynamics for deep linear networks for both GD and Spectral GD. The well-studied framework of deep linear networks \citep{fukumizu98batch, saxe2013exact, saxe2019mathematical, lampinen2018analytic, ji2019gradient, jacot2021saddle, ziyin2022exact} is particularly interesting for understanding the non-linear learning dynamics in deep models. Gradient flow results can be found in the prior work studying the simplicity bias of GD \citep{saxe2013exact, nakkiran2019sgd, gidel2019implicit, jacot2021saddle, zhang2026saddletosaddle}. For  Spectral GD, we largely focus on the results in  \citet{vasudeva2026how}. The theory explains why Spectral GD learns faster, but also reveals the cost of it: loss of the simplicity bias of GD.

\paragraph{Setup.}
Let a forward pass of a two-layer deep linear network be $\hat y = \bV \bU x$, where $\bV\in\R^{d_{\text{out}}\times H}, \bU\in \R^{H\times d_{\text{in}}}$, $x\in \Rl^{d_{\text{in}}}$, and $H$ is the number of hidden neurons. Let the rows of $\bU$ be $u_i^\top$ and the columns of $\bV$ be $v_i$. We are given $n$ input-output pairs $\{(x_i, y_i)\}_{i=1}^n$. The goal is to minimize the MSE loss $L(\bU, \bV) = \frac 1 {2n} \sum_i \|\hat y_i - y_i\|^2$. Here we use full-batch gradient flow, starting from infinitesimal initializations $\bU_0, \bV_0$. Dynamics dependence on the dataset is fully captured by these statistics: 
\begin{align}
\Sigma_{xx} = \frac 1 n \sum_{i=1}^n x_i x_i^\top ,\quad
\Sigma_{yx} = \frac 1 n \sum_{i=1}^n y_i x_i^\top .
\end{align}

\begin{assumption}
We assume $\Sigma_{xx}=\bI$ and the joint diagonalizability of $\Sigma_{yx}$, $\bU_0$ and $\bV_0$.
\end{assumption}
The identity covariance assumption is required for solvable gradient flow dynamics \citep{fukumizu98batch,saxe2013exact}. For relaxing $\Sigma_{xx}=\bI$, we refer to \citet{gidel2019implicit, advani20highd, vasudeva2026how, watanabe2026impact}.
The joint diagonalizability assumption is justified by the fact that under small initialization, the principal components of $U, V$ do align with the ones of $\Sigma_{yx}$ early on in the training, a phenomenon known as the silent alignment \citep{atanasov22silent}.

\paragraph{Gradient flow solution.}
The gradients of $L$ with respect to $\bU$ and $\bV$ are given by 
\begin{align}
\nabla_{\bU} L = {\bV}^\top (\bV\bU\Sigma_{xx} - \Sigma_{yx}), \nabla_{\bV} L = (\bV \bU \Sigma_{xx} - \Sigma_{yx})\bU^\top.
\end{align}
Let
$\Sigma_{yx} = \sum_{k=1}^D s_k q_k r_k^\top$ be the SVD of $\Sigma_{yx}$ where $s_k$ are positive singular values. Let them be ordered $s_1\ge...\ge s_D$.

\begin{theorem}[Gradient Flow Dynamics]
\label{thm:gf}
\theoremGD
\end{theorem}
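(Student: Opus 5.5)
Both parts follow from the decoupling induced by the joint diagonalizability assumption together with $\Sigma_{xx}=\bI$. We then reduce everything to scalar ODEs in the spirit of \citet{saxe2013exact}.

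\textbf{Fixed points.} With $\Sigma_{xx}=\bI$ the gradients are $\nabla_{\bU}L=\bV^\top(\bV\bU-\Sigma_{yx})$ and $\nabla_{\bV}L=(\bV\bU-\Sigma_{yx})\bU^\top$. Suppose $\bV\bU=\sum_{k\le r}s_k q_k r_k^\top$. Then the residual is
\begin{align*}
\bV\bU-\Sigma_{yx}=-\sum_{k>r}s_k q_k r_k^\top .
\end{align*}
The $i$-th row of $\nabla_{\bU}L$ is $v_i^\top$ applied to the residual. Each term contains $v_i^\top q_k$ with $k>r$, and this vanishes because $v_i\in\text{span}\{q_k\}_{k\le r}$ and the $q_k$ are orthonormal. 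Likewise, the $i$-th column of $\nabla_{\bV}L$ is the residual applied to $u_i$. Each term contains $r_k^\top u_i$ with $k>r$, which vanishes because $u_i\in\text{span}\{r_k\}_{k\le r}$. Hence both gradients are zero.

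\textbf{Sequential learning.} I would use the assumption to write $\bU=\bS_U R^\top$ and $\bV=Q\bS_V$. Here $R,Q$ collect the $r_k,q_k$, and $\bS_U,\bS_V$ are rectangular diagonal matrices in a common hidden basis. I then check that the flow preserves this form: substituting into the gradients gives $\dot{\bU}=\bS_V^\top(\bS_\Sigma-\bS_V\bS_U)R^\top$, which is again diagonal in the same bases, and similarly for $\dot{\bV}$. By uniqueness of ODE solutions the ansatz is exact. Mode $k$ then obeys $\dot a_k=b_k(s_k-a_kb_k)$ and $\dot b_k=a_k(s_k-a_kb_k)$. The quantity $a_k^2-b_k^2$ is conserved, and it is essentially $0$ under infinitesimal balanced initialization.

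For $\sigma_k=a_kb_k$ this gives $\dot\sigma_k=(a_k^2+b_k^2)(s_k-\sigma_k)=2\sigma_k(s_k-\sigma_k)$. This is a logistic equation with solution
\begin{align*}
\sigma_k(t)=\frac{s_k e^{2s_kt}}{e^{2s_kt}-1+s_k/\sigma_k(0)} .
\end{align*}
It has a sigmoid shape, with transition time $t_k\approx\frac{1}{2s_k}\log\frac{s_k}{\sigma_k(0)}\propto s_k^{-1}$.

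\textbf{Time separation (main obstacle).} Here infinitesimal initialization must be made quantitative. Take $\sigma_k(0)=\epsilon$ for all $k$. Then $t_k\approx\frac{\log(1/\epsilon)}{2s_k}$. If $s_k>s_{k+1}$, the gap $t_{k+1}-t_k$ grows like $\log(1/\epsilon)\bigl(\frac1{2s_{k+1}}-\frac1{2s_k}\bigr)$, while each sigmoid's transition width is $O(1/s_k)$, independent of $\epsilon$. So for small $\epsilon$ there is a window of times $t\in[t_k+C/s_k,\;t_{k+1}-C/s_{k+1}]$. In that window the explicit formula gives $\sigma_k\ge s_k(1-\delta)$ and $\sigma_{k+1}\le\delta s_{k+1}$, which I would bound directly from the closed form.

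The delicate point is justifying balancedness and the diagonal ansatz as the limit of infinitesimal generic initialization. I would appeal to the stated assumption and the silent-alignment remark rather than prove alignment. For imbalanced modes, the conserved difference $a_k^2-b_k^2=c_k$ gives $a_k^2+b_k^2=\sqrt{c_k^2+4\sigma_k^2}$, which remains close to the balanced case when $c_k=O(\epsilon)$. The sigmoid shape and the timing then follow up to lower-order corrections.
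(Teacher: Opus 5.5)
Your proposal is correct, and the fixed-point part matches the paper's argument. For sequential learning you reach the same logistic equation $\dot\sigma_k=2\sigma_k(s_k-\sigma_k)$ and the same closed form, but by a different reduction.

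The paper imposes the balanced condition $\bV^\top\bV=\bU\bU^\top$ globally. It rewrites the flow as a closed ODE for the product, $\dot\bW=-(\bW-\Sigma_{yx})(\bW^\top\bW)^{1/2}-(\bW\bW^\top)^{1/2}(\bW-\Sigma_{yx})$. It then argues informally that alignment with $\Sigma_{yx}$ persists, because the right-hand side is aligned at initialization.

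You instead parameterize the factors $\bU,\bV$ directly in the aligned bases and get a two-dimensional system in $(a_k,b_k)$ per mode. You justify invariance of the ansatz by ODE uniqueness, and you derive balancedness mode-wise from the conserved quantity $a_k^2-b_k^2$. Your route avoids matrix square roots and makes the invariance step rigorous. It also makes the role of imbalance explicit, through $a_k^2+b_k^2=\sqrt{c_k^2+4\sigma_k^2}$. The paper's product-matrix ODE is more compact and basis-free, but it takes balancedness as an assumption rather than deriving it.

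You also go further than the paper on the time-separation claim. The paper only asserts that a learning time $\propto s_k^{-1}$ implies a window in which mode $k$ is learned and mode $k+1$ is not. Your computation $t_k\approx\frac{1}{2s_k}\log\frac{s_k}{\epsilon}$ shows the gap $t_{k+1}-t_k$ grows like $\log(1/\epsilon)$, while each transition has width $O(1/s_k)$. That comparison is what actually makes the window exist, and it exposes the logarithmic factor hidden in ``$\propto s_k^{-1}$''. It relies on comparable initial scales across modes, which you state explicitly by taking $\sigma_k(0)=\epsilon$.
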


The proof, although standard, can be found in Appendix \ref{appendix:proof_gf}. Combining the two results leads to the explanation of saddle-to-saddle dynamics from small initialization. At the start of training, all singular values of the solution $\bW=\bV \bU$ are effectively 0. The rank of $\bW$ then increases gradually, first learning the highest singular values $s_k$. At the point rank has gone up to $r$ (for all $r\le D$), $\bV \bU$ effectively satisfies the conditions of 1, and the learning trajectory passes close to the corresponding saddle point. This is reflected in a loss plateau, and 2. predicts the time of escaping this saddle. In case of a singular value $s_k$ having multiplicity $m>1$, the rank at that point will increase by $m$ and the whole singular subspace will be learned at the same time.
\paragraph{Spectral gradient flow solution.}Here we use the same notation and assumptions as in the previous case. For spectral gradient descent, the dynamics is different. This time, all principal components are learned simultaneously, with the same speed, until each one saturates.

\begin{theorem}[Spectral Gradient Flow Dynamics]
\label{thm:spectral_gf}
\theoremSpecGD
\end{theorem}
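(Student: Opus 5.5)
The plan is to reduce the matrix flow to $D$ decoupled scalar flows by exploiting the joint diagonalizability assumption, as in the gradient-flow case (Theorem~\ref{thm:gf}), and then solve the scalar flows explicitly. Concretely, I would posit the ansatz $\bU(t) = P\,\mathrm{diag}(a_k(t))\,R^\top$ and $\bV(t) = Q\,\mathrm{diag}(b_k(t))\,P^\top$. Here $R=[r_1,\dots,r_D]$ and $Q=[q_1,\dots,q_D]$ come from the SVD of $\Sigma_{yx}$, and $P$ is a fixed orthonormal frame in $\R^H$, padded with zeros if $H>D$. By assumption this ansatz holds at initialization, with small positive $a_k(0),b_k(0)$.

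The first step is to compute the gradients under the ansatz. Using $\Sigma_{xx}=\bI$,
\[
\nabla_{\bU}L = P\,\mathrm{diag}\big(b_k(a_kb_k-s_k)\big)R^\top,\qquad \nabla_{\bV}L = Q\,\mathrm{diag}\big(a_k(a_kb_k-s_k)\big)P^\top .
\]
These are already SVDs up to signs. Orthogonalization therefore replaces each nonzero diagonal entry by its sign and keeps entries that vanish at zero, because the compact SVD drops zero singular values. Two things follow. The ansatz is invariant: the velocity has the same singular frames, so $P,Q,R$ never rotate. Each mode also obeys
\[
\dot a_k = \mathrm{sign}\big(b_k(s_k-a_kb_k)\big),\qquad \dot b_k=\mathrm{sign}\big(a_k(s_k-a_kb_k)\big).
\]
Since $\sigma_k=a_kb_k$ is the $k$-th singular value of $\bW=\bV\bU$, this gives the full dynamics.

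The second step is to solve the scalar system. While $a_kb_k<s_k$ and $a_k,b_k>0$, both derivatives equal $+1$, so $a_k(t)=a_k(0)+t$ and $b_k(t)=b_k(0)+t$. In the infinitesimal-initialization limit this gives $\sigma_k(t)\approx t^2$. This is the quadratic curve, and it is identical for every $k$ (equal learning at the same rate). Mode $k$ saturates when $t^2=s_k$, i.e.\ at time $t_k\approx\sqrt{s_k}$. Afterwards I would argue that $\sigma_k\equiv s_k$. If $\sigma_k$ overshoots, the sign flips and pushes it back, so $\{a_kb_k=s_k\}$ is an attracting sliding surface. On that surface the orthogonalized gradient vanishes in that mode, so $a_k,b_k$ stay constant. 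Finally, evaluating at $t=\sqrt{s_{r+1}}$ with $s_r>s_{r+1}$ shows that modes $k>r$ have already saturated at $s_k$, while modes $k\le r$ sit at $t^2=s_{r+1}$. That is exactly $\bW_r$. As $t$ increases, $r$ decreases, which gives the ``higher to lower'' order of passage. Ties $s_r=s_{r+1}$ simply merge consecutive $\bW_r$, which explains the condition $s_r>s_{r+1}$.

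The main obstacle is making the post-saturation behaviour rigorous. Spectral flow is a discontinuous ODE, since $\mathrm{orth}(\cdot)$ jumps when a singular value of the gradient crosses zero. A classical solution therefore does not exist on the saturated manifold. I would handle this with Filippov solutions. On the surface $\{a_kb_k=s_k\}$ the convexified right-hand side contains $0$ for that mode. The sign structure makes the surface attracting, and the unsaturated modes are unaffected because the modes are decoupled. This also requires checking that uniqueness (in the Filippov sense) holds along the diagonal ansatz, so that the invariance argument of the first step is legitimate. A secondary issue is unbalanced initialization, $a_k(0)\ne b_k(0)$. It contributes only $O(\text{init})$ corrections to $\sigma_k(t)$ and to $t_k$, and these vanish in the infinitesimal limit.
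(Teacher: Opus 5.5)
Your proposal is correct and takes essentially the same route as the paper: the jointly diagonal ansatz, orthogonalization reducing each mode's gradient to a sign/indicator, and the explicit solution $\sigma_k(t)=\min(t^2,s_k)$. The differences are minor. You solve all modes globally and read off $\bW_r$ at $t=\sqrt{s_{r+1}}$, whereas the paper inducts phase by phase from one $\bW_r$ to the next. You also add a Filippov treatment of the saturation surface, which the paper leaves implicit in its indicator-function dynamics.
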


We provide the proof in Appendix \ref{appendix:proof_spectral}, or refer to \citet{vasudeva2026how}, with slightly different assumptions. Consequently, training with Spectral GD also happens in phases: in each phase, singular values $\sigma_{r+1}(t) = s_{r+1},..., \sigma_D(t)=s_{D}$ are fully learned, while $\sigma_1(t), ..., \sigma_r(t)$ make their transition from $s_{r+1}$ to $s_r$, the next smallest singular value. The contrast with GD is the order of learning the singular values: while GD fully learns the higher ones first, Spectral GD learns all of them in the same time, leading to smaller ones being fully learned first. 
\paragraph{Experiments.}We both validate and illustrate the theory in Figure \ref{fig:theory}, where we perform full dataset (Spectral) GD on standard Gaussian data $x_i\in \R^{d_{\text{in}}}$, and $y_i\in \R^{d_{\text{out}}}$ computed as a noised linear function of $x_i$. In this usual linear regression setting, we sample the regression weights also as a standard normal. The weights $\bU, \bV$ are initialized also as Gaussian, but with small variance ($0.01$). We see that the evolution closely follows the theory.

Empirically, because we're performing a discretization of the dynamics of `Spectral gradient flow', and due to using a spectral method, it does happen that after the last $D-r$ principal components have been learned, the update matrix in the dynamics above is not exactly rank $r$ but also have some noise, and is usually full rank. In that case, Spectral GD does fit this noise, causing a highly oscillatory path in the loss landscape (see Figure \ref{fig:osc}), and the oscillations don't go away even if we include the momentum, as per Muon optimization (Figure \ref{fig:oscmom}). Although in a simple setting of deep linear networks the model always converges to the correct solution (as it is a stable minimum), this suggest that in more complicated settings Spectral GD and Muon may be less robust when using higher learning rate. Potentially, this observation could motivate future improvements of spectral optimizers.

\subsection{Proof of Theorem \ref{thm:gf}}
\label{appendix:proof_gf}

\begin{theorem*}
\theoremGD
\end{theorem*}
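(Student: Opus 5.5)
We plan to prove both parts by reducing the matrix dynamics to independent scalar ODEs along the singular directions of $\Sigma_{yx}$, using $\Sigma_{xx}=\bI$ and the joint diagonalizability assumption. With $\Sigma_{xx}=\bI$ the gradients simplify to $\nabla_{\bU}L=\bV^\top(\bV\bU-\Sigma_{yx})$ and $\nabla_{\bV}L=(\bV\bU-\Sigma_{yx})\bU^\top$.

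\textbf{Part 1 (fixed points).} We show both gradients vanish at any such $(\bU,\bV)$. Write $P_R=\sum_{k\le r} r_kr_k^\top$ and $P_Q=\sum_{k\le r} q_kq_k^\top$. Since $\bV\bU=\sum_{k\le r}s_kq_kr_k^\top$, the residual is $\bV\bU-\Sigma_{yx}=-\sum_{k>r}s_kq_kr_k^\top$. The columns $v_i$ of $\bV$ lie in $\mathrm{span}\{q_k\}_{k\le r}$, so $\bV^\top q_k=0$ for $k>r$, and hence $\nabla_{\bU}L=0$. Likewise the rows $u_i$ of $\bU$ lie in $\mathrm{span}\{r_k\}_{k\le r}$, so $\bU^\top$ has range in that span and $r_k^\top\bU^\top=0$ for $k>r$. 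Therefore $\nabla_{\bV}L=0$. No further argument is needed for this part.

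\textbf{Part 2 (sequential learning).} Under joint diagonalizability we take the ansatz $\bU(t)=\bC\,\mathrm{diag}(a_k(t))\,R^\top$ and $\bV(t)=Q\,\mathrm{diag}(b_k(t))\,\bC^\top$, where $R=[r_k]$, $Q=[q_k]$, and $\bC$ is a fixed $H\times D$ matrix with orthonormal columns. We substitute this into the gradient flow and check that the form is preserved. This gives decoupled ODEs
\[
\dot a_k=b_k(s_k-a_kb_k),\qquad \dot b_k=a_k(s_k-a_kb_k).
\]
Each mode conserves $a_k^2-b_k^2$, and this quantity is $\approx0$ under infinitesimal (balanced) initialization. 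Setting $a_k=b_k$ gives, for $\sigma_k=a_kb_k$, the logistic equation $\dot\sigma_k=2\sigma_k(s_k-\sigma_k)$. Its solution is
\[
\sigma_k(t)=\frac{s_k e^{2s_kt}}{e^{2s_kt}-1+s_k/\sigma_k(0)},
\]
a sigmoid whose transition time is $t_k\approx\frac{1}{2s_k}\log\frac{s_k}{\sigma_k(0)}$, which is $\propto s_k^{-1}$ for a fixed small initialization scale $\epsilon$.

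For the separation claim, take $\sigma_k(0)=\epsilon$ and let $\epsilon\to0$. Then $t_k/t_{k+1}\to s_{k+1}/s_k<1$ (the logarithmic factors differ only by $O(1)$). So there is a window, for example $t\in\big((1+\delta)t_k,(1-\delta)t_{k+1}\big)$, on which $\sigma_k\approx s_k$ up to a factor $e^{-\Omega(s_k t_k\delta)}$ while $\sigma_{k+1}\le \epsilon^{\Omega(\delta)}\to0$.

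\textbf{Main obstacle.} The delicate step is justifying the diagonal ansatz and balancedness. In general the conserved quantity is $\bU\bU^\top-\bV^\top\bV$, which is $O(\epsilon^2)$, and the imbalance only perturbs the scalar ODE by $O(\epsilon^2)$ terms; we would handle this with a Grönwall-type comparison over times of order $\log(1/\epsilon)$. We would rely on the stated joint diagonalizability assumption, appealing to silent alignment, rather than proving alignment from generic initialization. We also have to make explicit that ``$\approx$'' means asymptotic as $\epsilon\to0$.
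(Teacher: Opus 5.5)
Your proposal is correct. Part 1 coincides with the paper's argument. Part 2 reaches the same logistic equation $\dot\sigma_k=2\sigma_k(s_k-\sigma_k)$ and the same sigmoid, but through a different decomposition.

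\textbf{The paper's route.} It works with the end-to-end matrix. It imposes the global balancedness $\bV^\top\bV=\bU\bU^\top$, which gives $\bU^\top\bU=(\bW^\top\bW)^{1/2}$ and $\bV\bV^\top=(\bW\bW^\top)^{1/2}$. From this it obtains the autonomous ODE $\dot\bW=-(\bW-\Sigma_{yx})(\bW^\top\bW)^{1/2}-(\bW\bW^\top)^{1/2}(\bW-\Sigma_{yx})$. It then argues, somewhat informally, that alignment with the singular vectors of $\Sigma_{yx}$ persists because matrix square roots preserve singular vectors. This route is basis-free: it never refers to the hidden basis $\bC$ or to $H$, and it is the form that generalizes to deeper linear networks.

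\textbf{Your route.} You stay at the level of the factors. The ansatz $\bU=\bC\,\mathrm{diag}(a)R^\top$, $\bV=Q\,\mathrm{diag}(b)\bC^\top$ is shown to be invariant by direct substitution plus ODE uniqueness. Balancedness enters only through the exactly conserved per-mode quantity $a_k^2-b_k^2$. This makes the invariance step rigorous and isolates exactly where balancedness is used.

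You also prove something the paper only asserts. The paper stops at ``time of order $1/s_k$''. Your window $\big((1+\delta)t_k,(1-\delta)t_{k+1}\big)$ is nonempty for small $\delta$ and $\epsilon$, because $t_k/t_{k+1}\to s_{k+1}/s_k<1$. It therefore actually establishes the interval claimed in the statement and makes ``$\approx$'' precise as $\epsilon\to0$.

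\textbf{A simplification for your stated obstacle.} Within your ansatz no Gr\"onwall argument is needed. Let $c_k=a_k^2-b_k^2$, which is constant. Then exactly
$\dot\sigma_k=(a_k^2+b_k^2)(s_k-\sigma_k)=\sqrt{4\sigma_k^2+c_k^2}\,(s_k-\sigma_k)$.
For $\sigma_k\ge 0$ the prefactor lies between the balanced value $2\sigma_k$ and $2\sigma_k+|c_k|$. So $\sigma_k$ is sandwiched between two logistic solutions whose initial values differ by at most $|c_k|/2$. Consequently the imbalance matters only while $\sigma_k\lesssim|c_k|$. When $|c_k|$ is of the order of $\sigma_k(0)$ (both set by the initialization scale), it shifts the transition time by only $O(1/s_k)$, which is negligible against $t_k$.
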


\begin{proof}
\textbf{Fixed Points.} 
The gradients for the deep linear network under $\Sigma_{xx}=\bI$ are:
\begin{align*}
    \nabla_\bU L &= \bV^\top(\bV \bU - \Sigma_{yx}), \\
    \nabla_\bV L &= (\bV\bU - \Sigma_{yx})\bU^\top.
\end{align*}
Let the candidate solution be $\bW_r = \bV \bU = \sum_{k=1}^r s_k q_k r_k^\top$. Substituting this into the residual term $(\bV \bU - \Sigma_{yx})$ yields:
\begin{equation*}
    \bW_r - \Sigma_{yx} = \sum_{k=1}^r s_k q_k r_k^\top - \sum_{k=1}^D s_k q_k r_k^\top = -\sum_{k=r+1}^D s_k q_k r_k^\top.
\end{equation*}
By the theorem assumption, $\text{row}(U) \in \text{span}\{r_k\}_{k=1}^r$ and $\text{col}(V) \in \text{span}\{q_k\}_{k=1}^r$, hence:
\begin{align*}
    \nabla_\bU L &= -\bV^\top \left( \sum_{k=r+1}^D s_k q_k r_k^\top \right) = 0, \\
    \nabla_\bV L &= -\left( \sum_{k=r+1}^D s_k q_k r_k^\top \right) \bU^\top = 0.
\end{align*}
Thus, $(\bU, \bV)$ is a fixed point.

\textbf{Sequential Learning.} This is a standard result found in many deep linear networks work \citep{saxe2013exact, gidel2019implicit, li2020towards, jacot2021saddle, zhang2026saddletosaddle}. For completeness, we provide a proof here. 
We analyze the evolution of the product matrix $\bW = \bV \bU$. We assume the standard ``balanced" condition $\bV^\top \bV = \bU \bU^\top$, which is an invariant/conservation law of gradient flow if initialized infinitesimally, as we assume.
The dynamics of the product matrix $\bW$ are given by:
\begin{equation*}
    \dot{\bW} = \dot{\bV}\bU + \bV\dot{\bU} = (\Sigma_{yx} - \bW) \bU^\top \bU + \bV \bV^\top (\Sigma_{yx}-\bW).
\end{equation*}
Under the balanced condition, we have $\bU^\top \bU = (\bW^\top \bW)^{1/2}$ and $\bV \bV^\top = (\bW \bW^\top)^{1/2}$. Substituting this into the dynamics yields:
\begin{equation*}
\label{eq:eq2}
    \dot{\bW} = -(\bW - \Sigma_{yx})(\bW^\top \bW)^{1/2} - (\bW \bW^\top)^{1/2}(\bW - \Sigma_{yx}).
\end{equation*}
The fact that $\bU(0), \bV(0)$ and $\Sigma_{yx}$ are jointly diagonalizable implies that $\bW_0 = \bV(0)\bU(0)$ and $\Sigma_{yx}$ are as well. If $\Sigma_{yx} = QSR^T$ is an SVD decomposition, then SVDs of matrices involved are: $\bW_0 \bW_0^\top = Q D_1 Q^T$,  $\bW_0^\top \bW_0 = R D_2 R^\top$ and $(\Sigma_{yx} -\bW_0) = QD_3 R^\top$.  As taking the square root doesn't change left and right principal components, we infer that the full gradient update in Equation \ref{eq:eq2} is also aligned with the principal components of $\Sigma_{yx}$. Hence the alignment of principal components of $\bW$ and $\Sigma_{yx}$ from the beginning stays true throughout training.

Let $\bW(t) = \sum_{k=1}^D \sigma_k(t) q_k r_k^\top$. Substituting this spectral decomposition into the matrix ODE decouples the system into independent scalar differential equations for each singular value $\sigma_k(t)$:
\begin{equation*}
    \dot{\sigma}_k(t) = -(\sigma_k - s_k)\sigma_k - \sigma_k(\sigma_k - s_k) = 2 \sigma_k(t) (s_k - \sigma_k(t)).
\end{equation*}
This is the logistic differential equation. The solution is:
\begin{equation*}
    \sigma_k(t) = \frac{s_k}{1 + \left(\frac{s_k}{\sigma_k(0)} - 1\right)e^{-2 s_k t}}.
\end{equation*}
This solution shows that $\sigma_k(t)$ follows a sigmoidal trajectory, saturating to $s_k$. Consequently, the time taken for $s_k$ to be learned is of order $1/s_k$, implying that modes with larger singular values are learned significantly faster, establishing the sequential learning property.
\end{proof}

\subsection{Proof of Theorem \ref{thm:spectral_gf}}\label{appendix:proof_spectral}
\begin{theorem*}
\theoremSpecGD
\end{theorem*}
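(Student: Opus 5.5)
We plan to mirror the proof of Theorem~\ref{thm:gf}, replacing the logistic ODE by the one induced by orthogonalization. Under the joint diagonalizability assumption, we write $\bU(t) = \sum_k a_k(t)\, e_k r_k^\top$ and $\bV(t) = \sum_k b_k(t)\, q_k e_k^\top$ for an orthonormal set $\{e_k\}$ in $\R^H$. Here $a_k(0), b_k(0)$ are infinitesimal and positive. We also take the init balanced, so that $a_k(0)=b_k(0)$. Then $\bW = \sum_k \sigma_k q_k r_k^\top$ with $\sigma_k = a_k b_k$. Plugging into the gradients with $\Sigma_{xx}=\bI$ gives
\[
\nabla_\bU L = \sum_k b_k(\sigma_k - s_k)\, e_k r_k^\top, \qquad \nabla_\bV L = \sum_k a_k(\sigma_k - s_k)\, q_k e_k^\top .
\]
These are already in SVD form, with singular values $|b_k(\sigma_k-s_k)|$ and $|a_k(\sigma_k-s_k)|$. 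Hence $\text{orth}(\nabla_\bU L) = \sum_{k:\,\sigma_k\neq s_k} \text{sign}(\sigma_k - s_k)\, e_k r_k^\top$, and analogously for $\bV$. This uses $a_k,b_k>0$, which is preserved because the update only increases them while $\sigma_k<s_k$. So the spectral flow keeps the aligned form, and it decouples into scalar equations. While $\sigma_k<s_k$ we have $\dot a_k = \dot b_k = 1$, and the component freezes once $\sigma_k = s_k$.

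Integrating gives $a_k(t)=a_k(0)+t$ and $b_k(t) = b_k(0)+t$, so $\sigma_k(t) = (a_k(0)+t)(b_k(0)+t) \approx t^2$ for infinitesimal init. This is the quadratic curve in the \emph{Equal Learning} claim, and it is the same for every $k$. Mode $k$ saturates at $t_k$ with $t_k^2 \approx s_k$, i.e.\ $t_k\approx\sqrt{s_k}$. For the \emph{Solution Trajectory} claim, note that all $\sigma_k(t)$ coincide with $t^2$ until the smallest $s_D$ is reached, after which $\sigma_D$ stays at $s_D$. The modes with the next smallest distinct value then saturate, and so on. At time $t=\sqrt{s_{r+1}}$ the modes $k\ge r+1$ with $s_k=s_{r+1}$ have just saturated, the smaller ones are already saturated, and modes $k\le r$ sit at $s_{r+1}$. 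Thus $\bW(\sqrt{s_{r+1}}) = \bW_r$ whenever $s_r > s_{r+1}$, and the times are decreasing in $r$, so the path visits the $\bW_r$ from $r=D$ downward. Multiplicities are handled automatically, since equal $s_k$ saturate simultaneously.

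The main obstacle is justifying the orthogonalization at the saturation points. When $\sigma_k$ hits $s_k$, the gradient direction for that mode vanishes, and $\text{orth}$ is discontinuous as the rank of the gradient drops. We would handle this by showing that $\sigma_k=s_k$ is invariant: the component is absent from the compact SVD, so it receives zero update. On each phase the flow is then smooth on the remaining modes, and we can glue phases, using a Filippov-type or piecewise solution argument. A secondary point is that balancedness is not a conservation law here, and in fact $a_k-b_k$ is conserved instead. This difference is infinitesimal, so $\sigma_k\approx t^2$ up to $O(\text{init})$ corrections, which we will track explicitly.
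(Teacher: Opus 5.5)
Your proposal is correct and follows essentially the same route as the paper. You write $\bU,\bV$ in the aligned basis and observe that the gradients are already in SVD form, so orthogonalization turns each mode into unit-speed growth $\dot a_k=\dot b_k=\mathds 1_{\{\sigma_k<s_k\}}$. Integrating gives $\sigma_k(t)\approx\min(t^2,s_k)$. The differences are organizational and in rigor: the paper argues phase by phase from $\bW_r$ to the next saturation point and takes the zero-initialization limit, while you read the passage through $\bW_r$ at time $\sqrt{s_{r+1}}$ directly off the closed form and are more explicit about positivity of $a_k,b_k$, invariance of saturated modes under the discontinuous $\text{orth}$, and $O(\text{init})$ corrections.
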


\begin{proof}
\textbf{Solution Trajectory.} The proof, with slightly different assumptions, can be found in \citet{vasudeva2026how}. Nonetheless, we follow our setup and provide it here for completeness.

We assumed joint diagonalizability of $U(0) = \sum_{k=1}^D \sigma_k^U(0) z_k r_k^\top$, $V(0) = \sum_{k=1}^D \sigma_k^V(0) q_k z_k^\top$ and $\Sigma_{yx} = \sum_{k=1}^D s_k q_k r_k^\top$, which is justified by the infinitesimal initialization. As we will see, once $U(t), V(t)$ are jointly diagonalizable in this way, they stay so for all $t'\ge t$). Suppose for some $r\le D, t_0\ge 0$, ${W (t_0)= \sum_{k=1}^{r} s_{r+1} q_k r_k^\top + \sum_{k=r+1}^D s_kq_k r_k^\top}$. Indeed, this is satisfied at initialization $t_0=0$, with $r=D$, once we denote $s_{D+1}=0$. If it holds at $t_0$, then 
\begin{align*}
    \nabla_\bU L(\bU(t_0)) &= \bV(t_0)^\top(\bW(t)-\Sigma_{yx}) \\
    &= \left(\sum_{k=1}^D \sigma_k^V (t_0) q_k z_k^\top\right)^\top \left(\sum_{k=1}^{r} (\sigma_{r+1} - \sigma_k)q_k r_k^\top \right)\\
    &=\sum_{k=1}^r \sigma_k^V (t_0)(\sigma_{r+1} -\sigma_k) z_k r_k^\top \\
    \nabla_\bV L(\bV(t_0)) &= (\bW(t_0)-\Sigma_{yx}) \bU(t_0)^\top\\
    &= \left(\sum_{k=1}^{r} (\sigma_{r+1} - \sigma_k)q_k r_k^\top \right) \left(\sum_{k=1}^D \sigma_k^U (t_0) z_k r_k^\top \right)^\top\\
    &=\sum_{k=1}^r \sigma_k^U (t_0)(\sigma_{r+1} -\sigma_k) q_k z_k^\top
\end{align*} 
From this, the orthogonalizations of the gradients are $\text{orth}(\nabla_\bU L(t_0))=-\sum_{k=1}^r z_k r_k^\top, \text{orth}(\nabla_\bV L(t_0))=-\sum_{k=1}^r q_k z_k^\top$. This means that change in $\bU(t), \bV(t)$ happens only in the first $r$ singular vectors, and the dynamics will look as a gradient flow $\dot{\bU}(t) = \sum_{i=1}^r z_k r_k^{\top}, \dot{\bV}(t) =  \sum_{k=1}^r q_k z_k^\top$.
This keeps the matrices jointly diagonalizable, and therefore we can observe the dynamics on the singular values, from initial conditions at $t_0$:
\begin{align*}
    \dot{\sigma_k^V}(t) &= \mathds 1_{\{\sigma_k^V(t)\sigma_k^U(t)<s_k\}}\\
    \dot{\sigma_k^U}(t) &= \mathds 1_{\{\sigma_k^V(t)\sigma_k^U(t)<s_k\}},
\end{align*}
noting that $\sigma_k(t) = \sigma_k^V(t)\sigma_k^U(t)$. By symmetry at infinitesimal initialization, all $\sigma_k^V(t)$ and $\sigma_k^U(t)$ will be the same for $k\le r$, and hence for all $k\le r$, $\sigma_k(t)$ will evolve the same, from $\sigma_k(t_0) = s_{r+1}$ to $\sigma_k(t_1)=s_r$. At time $t_1$, for all $m< k\le r$ with $m$ largest s.t. $s_{m+1}=s_r$, the growth of $\sigma_k(t)$ stops. Hence the solution $W$ passes through $W_{m}$.

\textbf{Equal learning.} Continuing with the notation above, if we start from $\sigma_k^V(0)=\sigma_k^U(0)=0$, the solution of the system is $\sigma_k^U(t)=\sigma_k^V(t)=\min(t, \sqrt{s_k})$, and hence $\sigma_k(t) = \min(t^2, s_k)$. This shows both the equal growth and the quadratic curve for learning singular values of $\Sigma_{yx}$. Consequently, the time to learn $\sigma_k(t) = s_k$ is $t\propto\sqrt{s_k}$.

\end{proof}

\section{Theory on Linear Attention}
\label{appendix:theory_attn}
Here we formalize the theory on linear attention from Section \ref{sec: transformer_theory}. Under the setup described in Section \ref{sec: transformer_theory}, \citet{zhang2025training} showed the training dynamics for each head $i=1,\cdots,H$ is given by
\begin{subequations}
\begin{align*}
\frac {dv_i}{dt} &= \boldsymbol{k}_i^\top \left( \boldsymbol{\Lambda}^2 - \mathbb{E} \left( \hat{\boldsymbol{\Lambda}}^2 \right) \sum_{i'=1}^H v_{i'} \boldsymbol{k}_{i'} \boldsymbol{q}_{i'}^\top \boldsymbol{\Lambda} \right) \boldsymbol{q}_i ,\\
\frac{d\boldsymbol{k}_i}{dt} &= v_i \left( \boldsymbol{\Lambda}^2 - \mathbb{E} \left( \hat{\boldsymbol{\Lambda}}^2 \right) \sum_{i'=1}^H v_{i'} \boldsymbol{k}_{i'} \boldsymbol{q}_{i'}^\top \boldsymbol{\Lambda} \right) \boldsymbol{q}_i ,\\
\frac {d\boldsymbol{q}_i}{dt} &= v_i \left( \boldsymbol{\Lambda}^2 - \boldsymbol{\Lambda} \sum_{i'=1}^H v_{i'} \bq_{i'} \bk_{i'}^\top \mathbb{E} \left( \hat{\boldsymbol{\Lambda}}^2 \right) \right) \boldsymbol{k}_i.
\end{align*}
\end{subequations}
where $\bLh$ is the in-context covariance of $\bx$ tokens and the expectation of $\bLh^2$ is given by
\begin{align*}
\mathbb{E}\left( \bLh^2 \right) = \bL^2 + \frac{\bL + \text{tr}(\bL)\boldsymbol{I}}{N} \bL .
\end{align*}
The reason we are able to write the dynamics compactly is that the loss is only computed for a scalar entry in the output sequence $\hat y_q$. This is made possible in the in-context linear regression setting, as value matrix of the $\bx$-entry is irrelevant for our prediction, and we only care about the $y$-entry $v_i$, a scalar that commutes with other data statistics matrices. 

As mentioned in the main text, we collect the relevant key and query vectors for each head $\bk_i, \bq_i\in \Rl^D$ in matrices $\bK, \bQ\in\Rl^{H\times D}$. Value scalars of each head $v_i$ are collected in the diagonal matrix $\bV\in \Rl^{H\times H}$. Using this notation allows us to rewrite the dynamics in matrix form:
\begin{subequations}  \label{eq:linattn-matrix}
\begin{align*}
\frac{d\bV}{dt} &= \texttt{diag}\left(\bK \left( \bL^2 - \mathbb{E} \left[ \bLh^2 \right] \bK^\top \bV \bQ \bL \right) \bQ^\top\right) ,
\\
\frac{d\bK}{dt} &= \bV \bQ \left( \bL^2 - \bL \bQ^\top \bV \bK \mathbb{E} \left[ \bLh^2 \right] \right) ,
\\
\frac{d\bQ}{dt} &= \bV \bK \left( \bL^2 - \mathbb{E} \left[ \bLh^2 \right] \bK^\top \bV \bQ \bL \right) ,
\end{align*}
\end{subequations}
where $\texttt{diag}(\mathbf A)$ denotes a diagonal matrix with entries on the diagonal being the same as the ones in $\mathbf A$. Writing the dynamics like this doesn't change the the trajectory of gradient flow solutions compared to using the full matrices $\bW_K, \bW_Q\in \Rl^{H\times (D+1)}$ and $\bW_V\in \Rl^{H\times (D+1)\times (D+1)}$. However, in case of Spectral GF, the reparameterization from $\bW_K, \bW_Q, \bW_V$ to $\bK, \bQ, \bV$ does change the dynamics. Nonetheless, we use the reparameterization as it makes the Spectral GF solution mathematically tractable. The key phenomena of all heads being learned at the same time obtained from the theory is still present without the reparameterization, as shown in Figure \ref{fig:attn_full_matrices} in Appendix \ref{appendix: lin attn sim}.

We denote the eigenvectors of $\bL$ as $\{\bg_j\}_{j=1}^D$ and the corresponding eigenvalues as $\{\lambda_j\}_{j=1}^D$
\begin{align*}
\bL &= \sum_{j=1}^{D} \lambda_j \bg_j \bg_j^\top.
\end{align*}
The matrix $\mathbb{E}\left[ \bLh^2 \right]$ has the same eigenvectors as $\bL$ and eigenvalues $\{\hat{\lambda}_j\}_{j=1}^D$
\begin{align}\label{eq:lambdas}
\mathbb{E}\left[ \bLh^2 \right] &= \sum_{j=1}^{D} \hat{\lambda}_j \bg_j \bg_j^\top ,
\quad \text{where } \;
\hat{\lambda}_j = \lambda_j^2 \left( 1 + \frac{1+\text{tr}(\bL)/\lambda_j}{N} \right) .
\end{align}

\begin{assumption}  \label{ass:linattn}
For both GF and Spectral GF, we assume infinitesimal initialization and joint diagonalizability on the initialization of $\bK, \bQ, \bV$ and $\bL$. That is
\begin{align}  \label{eq:linattn-init}
\bV(0) = \sum_{i=1}^{H} v_i(0) \be_i \be_i^\top, \quad 
\bK(0) = \sum_{j=1}^{D} \sigma_j^K(0) \be_j \bg_j^\top, \quad 
\bQ(0) = \sum_{j=1}^{D} \sigma_j^Q(0) \be_j \bg_j^\top,
\end{align}
with $v_i(0)=\sigma_i^K(0)=\sigma_i^Q(0)=\epsilon_i$ for all $i=1,..., H$, where $\epsilon_i$ is infinitesimal.
\end{assumption}
In the case of gradient flow, \citet{zhang2025training} explained with their ansatz that from arbitrary but small initialization, the head vectors $\bk_i, \bq_i$ will align with the eigenvectors $\bg_i$, hinting that the assumption may be redundant. Intuitively, this is because with gradient flow, the growth of $\bk_i$ along each of eigenvector $\bg_i$ will approximately follow the dynamics $\frac{d(\bk_i^\top \bg_i)}{dt} = \lambda_i^2 (\bk_i^\top \bg_i)^2$, exhibiting the fastest grow of $\bk_i$ along the eigenvector associated with largest eigenvalue $\lambda_i$ (and similarly for $\bq_i$). In case of Spectral GD, this doesn't hold, as growth along each eigenvalue is better modeled by $\frac{d(\bk_i^\top \bg_i)}{dt} = \mathds 1_{\bk_i^\top \bg_i<(\lambda_i)^{-1/3}}$. This makes the growth along each $\bg_i$ equal, and thus silent alignment is not happening with Spectral GD. If one was to fully formally analyze the dynamics of Spectral GF without the above assumption, decomposing each $\bk_i, \bq_i$ along all of $\bg_i$ may be crucial. We leave this formality for future work, and show the simulations without this assumption in Figures \ref{fig:tr_dyn}a), \ref{fig:attn_3d} and \ref{fig:attn_20h}.

Under this assumption at initialization, following from the dynamics in Equation \ref{eq:linattn-matrix}, the matrices $\bK, \bQ, \bV$ and $\bL$ stay jointly diagonalizable through training, and the $\texttt{diag}$ operator in Equation \ref{eq:linattn-matrix}a) can be removed. Indeed, if we observe the dynamics at initialization, we have:
\begin{align*}
     &\bC(0):=\bL^2 - \bL \bQ^\top(0) \bV(0) \bK(0) \mathbb{E} \left[ \bLh^2 \right]= \left(\sum_{j=1}^{D} \lambda_j^2 \bg_j \bg_j^\top\right) \\
     &- \left(\sum_{j=1}^{D} \hat{\lambda}_j \bg_j \bg_j^\top\right) \left(\sum_{j=1}^{D} \sigma_j^K(0) \be_j \bg_j^\top\right)^\top \left(\sum_{i=1}^{H} v_i(0) \be_i \be_i^\top\right) \left(\sum_{j=1}^{D} \sigma_j^Q(0) \be_j \bg_j^\top\right) \left(\sum_{j=1}^{D} \lambda_j \bg_j \bg_j^\top\right) \\
    &= \sum_{j=1}^{D} \lambda_j^2 \bg_j \bg_j^\top - \sum_{j=1}^{D} \hat{\lambda}_j \sigma_j^K(0) v_j(0) \sigma_j^Q(0) \lambda_j \bg_j \bg_j^\top \\
    &= \sum_{j=1}^{D} \left( \lambda_j^2 - \lambda_j \hat{\lambda}_j v_j(0) \sigma_j^K(0) \sigma_j^Q(0) \right) \bg_j \bg_j^\top.
\end{align*}

\begin{align*}
    \frac{d\bV(0)}{dt} &= \texttt{diag}\left( \left(\sum_{j=1}^{D} \sigma_j^K(0) \be_j \bg_j^\top\right) \bC(0) \left(\sum_{j=1}^{D} \sigma_j^Q(0) \be_j \bg_j^\top\right)^\top \right) \\
    &= \sum_{j=1}^{H} \sigma_j^K(0) \sigma_j^Q(0) \left( \lambda_j^2 - \lambda_j \hat{\lambda}_j v_j(0) \sigma_j^K(0) \sigma_j^Q(0) \right) \be_j \be_j^\top, \\[1em]
    \frac{d\bK(0)}{dt} &= \left(\sum_{i=1}^{H} v_i(0) \be_i \be_i^\top\right) \left(\sum_{j=1}^{D} \sigma_j^Q(0) \be_j \bg_j^\top\right) \bC(0) \\
    &= \sum_{j=1}^{D} v_j(0) \sigma_j^Q(0) \left( \lambda_j^2 - \lambda_j \hat{\lambda}_j v_j(0) \sigma_j^K(0) \sigma_j^Q(0) \right) \be_j \bg_j^\top, \\[1em]
    \frac{d\bQ(0)}{dt} &= \left(\sum_{i=1}^{H} v_i(0) \be_i \be_i^\top\right) \left(\sum_{j=1}^{D} \sigma_j^K(0) \be_j \bg_j^\top\right) \bC(0) \\
    &= \sum_{j=1}^{D} v_j(0) \sigma_j^K(0) \left( \lambda_j^2 - \lambda_j \hat{\lambda}_j v_j(0) \sigma_j^K(0) \sigma_j^Q(0) \right) \be_j \bg_j^\top.
\end{align*}
We observe that the dynamics completely decouple along the principal components at initialization ($t=0$). Consequently, this decoupled structure is preserved for all time $t \ge 0$, regardless of whether standard or spectral gradient flow is employed. Let the solutions at time $t$ be
\begin{align}  \label{eq:linattn-time_t}
\bV(t) = \sum_{i=1}^{H} v_i(t) \be_i \be_i^\top, \quad 
\bK(t) = \sum_{j=1}^{D} \sigma_j^K(t) \be_j \bg_j^\top, \quad 
\bQ(t) = \sum_{j=1}^{D} \sigma_j^Q(t) \be_j \bg_j^\top.
\end{align}

\subsection{Gradient Flow.}
\label{appendix: gf lin attn theorem}

\begin{theorem}[Gradient flow on linear attention]
    Under Assumption \ref{ass:linattn}, when optimized by gradient flow, the total weights of linear attention evolve as
\begin{align*}
\bQ^\top \bV \bK = \sum_{j=1}^D v_j(t)^3 \bg_j \bg_j^\top ,
\end{align*}
where the magnitude in each eigen-direction $v_j \, (j=1,\cdots,D)$ is given by
\begin{align*}
\frac{d v_j}{dt} &= v_j^2 \left( \lambda_j^2 - \lambda_j \hat\lambda_j v_j^3 \right). \\
\end{align*}
This gives rise to time separation between heads being learned, where head $j$ is learned in time $\propto 1/(\lambda_j^2 \epsilon_j).$
\end{theorem}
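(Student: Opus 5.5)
We will work in the jointly diagonal parameterization already established above. By Assumption \ref{ass:linattn} and the decoupling computation preceding the theorem, the weights stay of the form $\bV(t)=\sum_i v_i(t)\be_i\be_i^\top$, $\bK(t)=\sum_j \sigma_j^K(t)\be_j\bg_j^\top$, $\bQ(t)=\sum_j\sigma_j^Q(t)\be_j\bg_j^\top$ for all $t\ge 0$. Under this ansatz the matrix ODEs reduce to independent scalar systems, one per index $j$ (recall $H=D$). Reading these off from the expressions for $\frac{d\bV}{dt},\frac{d\bK}{dt},\frac{d\bQ}{dt}$ computed at $t=0$, which hold verbatim at any $t$ once the ansatz is preserved, we obtain
\begin{align*}
\dot v_j = \sigma_j^K\sigma_j^Q\,c_j,\quad \dot\sigma_j^K = v_j\sigma_j^Q\,c_j,\quad \dot\sigma_j^Q = v_j\sigma_j^K\,c_j,\qquad c_j:=\lambda_j^2-\lambda_j\hat\lambda_j v_j\sigma_j^K\sigma_j^Q .
\end{align*}

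The next step is a conservation law, the analogue of the balancedness used in the proof of Theorem \ref{thm:gf}. One computes $\frac{d}{dt}(v_j^2-(\sigma_j^K)^2)=2v_j\sigma_j^K\sigma_j^Q c_j-2\sigma_j^K v_j\sigma_j^Q c_j=0$, and likewise for every other pair. Since the initialization has $v_j(0)=\sigma_j^K(0)=\sigma_j^Q(0)=\epsilon_j$, all three stay equal, and by uniqueness of ODE solutions they coincide for all $t$. This gives $\sigma_j^K=\sigma_j^Q=v_j$. Substituting, we get $\dot v_j=v_j^2(\lambda_j^2-\lambda_j\hat\lambda_j v_j^3)$, which is the stated equation. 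The total weight follows directly: $\bQ^\top\bV\bK=\sum_j \sigma_j^Q v_j\sigma_j^K\bg_j\bg_j^\top=\sum_j v_j^3\bg_j\bg_j^\top$, using $\be_j^\top\be_i=\delta_{ij}$.

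For the timescale, we analyze the scalar ODE near $v_j=\epsilon_j\ll 1$. There the cubic term is negligible, so $\dot v_j\approx\lambda_j^2 v_j^2$, which integrates to $v_j(t)=\epsilon_j/(1-\lambda_j^2\epsilon_j t)$. This blows up at $t^*_j=1/(\lambda_j^2\epsilon_j)$. We would make this rigorous with comparison bounds. From above, $\dot v_j\le\lambda_j^2v_j^2$. From below, we split the trajectory into the phase where $v_j\le \tfrac12 v_j^\infty$, with $v_j^\infty=(\lambda_j/\hat\lambda_j)^{1/3}$ the stable fixed point; in this phase $\dot v_j\ge \tfrac78\lambda_j^2v_j^2$. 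Together these give that the time to reach $O(1)$ magnitude is $\Theta(1/(\lambda_j^2\epsilon_j))$. The remaining time to saturate is $O(1)$ and independent of $\epsilon_j$, so it is negligible as $\epsilon_j\to0$.

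The only mildly delicate step is the uniform justification that $\epsilon$-independent saturation time is negligible. Time separation between heads then follows because $1/(\lambda_j^2\epsilon_j)$ differs by diverging amounts across distinct $\lambda_j$ (at comparable $\epsilon_j$) as $\epsilon\to0$. Before head $j$'s blow-up time, its $v_j$ remains $O(\epsilon_j)$, producing the plateaus.
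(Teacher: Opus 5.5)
Your proposal is correct and follows essentially the same route as the paper. Both substitute the jointly diagonal ansatz to obtain decoupled scalar ODEs, use the equal initialization to collapse $v_j,\sigma_j^K,\sigma_j^Q$ into a single variable, and read the plateau length $\approx 1/(\lambda_j^2\epsilon_j)$ off the early-time approximation $\dot v_j\approx\lambda_j^2 v_j^2$. The paper only asserts that the three factors stay equal and argues the timescale heuristically, so your symmetry/uniqueness argument and the comparison bounds $\tfrac78\lambda_j^2 v_j^2\le\dot v_j\le\lambda_j^2 v_j^2$ (for $v_j\le\tfrac12 v_j^\infty$) are a useful tightening rather than a different approach.
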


\begin{proof}
Substituting \cref{eq:linattn-time_t} into \cref{eq:linattn-matrix}, we obtain
\begin{align*}
\frac{d}{dt}\sum_{i=1}^{H} v_i(t) \be_i \be_i^\top
&= \sum_{j=1}^D \sigma_j^K (t)\sigma_j^Q(t) \left( \lambda_j^2 - \lambda_j \hat\lambda_j v_j(t) \sigma_j^K(t) \sigma_j^Q(t) \right) \be_j \be_j^\top ,\\
\frac{d}{dt}\sum_{j=1}^{D} \sigma_j^K(t) \be_j \bg_j^\top
&= \sum_{j=1}^D v_j(t) \sigma_j^Q(t) \left( \lambda_j^2 - \lambda_j \hat\lambda_j v_j(t) \sigma_j^K(t) \sigma_j^Q(t) \right) \be_j \bg_j^\top ,\\
\frac{d}{dt}\sum_{j=1}^{D} \sigma_j^Q(t) \be_j \bg_j^\top
&= \sum_{j=1}^D v_j(t) \sigma_j^K(t) \left( \lambda_j^2 - \lambda_j \hat\lambda_j v_j(t) \sigma_j^K(t) \sigma_j^Q(t) \right) \be_j \bg_j^\top .
\end{align*}
Equating both sides, we get for $j=1,\cdots,D$
\begin{subequations}
\begin{align*}
\frac{d v_j}{dt} &= \sigma_j^K \sigma_j^Q \left( \lambda_j^2 - \lambda_j \hat\lambda_j v_j \sigma_j^K \sigma_j^Q \right) ,\\
\frac{d \sigma_j^K}{dt} &= v_j \sigma_j^Q \left( \lambda_j^2 - \lambda_j \hat\lambda_j v_j \sigma_j^K \sigma_j^Q \right) ,\\
\frac{d \sigma_j^Q}{dt} &= v_j \sigma_j^K \left( \lambda_j^2 - \lambda_j \hat\lambda_j v_j \sigma_j^K \sigma_j^Q \right) .
\end{align*}
\end{subequations}
Because $v_j,\sigma_j^K,\sigma_j^Q$ are initialized to be equal and will stay equal throughout training, the dynamics for each eigen-direction can be summarized by one separable, ordinary differential equation
\begin{align}  \label{eq:linattn-ode}
\frac{d v_j}{dt} &= v_j^2 \left( \lambda_j^2 - \lambda_j \hat\lambda_j v_j^3 \right) ,\quad j=1,\cdots,D .
\end{align}
\cref{eq:linattn-ode} does not admit a closed-form solution of $v_j(t)$ in terms of $t$. However, its behavior is easy to predict from \cref{eq:linattn-ode}. With small initialization $v_j(0)=\epsilon_j$, $v_j(t)$ undergoes a plateau of duration $T_j \approx 1/(\lambda_j^2 \epsilon_j)$. This is because the initial dynamics behaves like $\dot v_j = v_j^2\lambda_j^2$, until $v_j^2\lambda_j^2=\Theta(1)$. After the plateau, $v_j(t)$ grows rapidly and converges to $v_j(\infty)=\left( \frac{\lambda_j}{\hat\lambda_j} \right)^{1/3}$.

The overall weight thus evolves as
\begin{align*}
\bQ^\top \bV \bK = \sum_{j=1}^D v_j(t)^3 \bg_j \bg_j^\top ,
\end{align*}
where $v_j(t)$ solves the differential \cref{eq:linattn-ode}. Since each eigen-direction grows after a plateau of duration $T_j \approx 1/(\lambda_j^2 \epsilon_j)$, distinct eigenvalues $\lambda_j$ or distinct initial conditions $\epsilon_j$ in each head will lead to stage-like dynamics, in which different eigen-directions are learned at different times.
\end{proof}

\subsection{Spectral Gradient Flow.}
\label{appendix: spectral gd lin attn}

\begin{theorem}[Spectral gradient flow on linear attention]
        Under Assumption \ref{ass:linattn}, when optimized by Spectral gradient flow, the total weights of linear attention evolve as
\begin{align*}
\bQ^\top \bV \bK = \sum_{j=1}^D \min\left[ t^3, \frac{\lambda_j}{\hat \lambda_j}\right] \bg_j \bg_j^\top ,
\end{align*}
and the magnitude in each eigen-direction $v_j \, (j=1,\cdots,D)$ is given by
\begin{align*}
v_j(t) &= \min\bigg[t, \left(\frac {\lambda_j}{\hat \lambda_j}\right)^{1/3}\bigg]
\end{align*}
Contrary to time separation in the GF case, Spectral GF learns all heads at the same time, with the same rate, where head $j$ is learned in time $\propto \left(\frac {\lambda_j}{\hat \lambda_j}\right)^{1/3}.$
\end{theorem}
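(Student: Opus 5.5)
The plan mirrors the proof of Theorem \ref{thm:spectral_gf}. I first reduce the Spectral GF to decoupled scalar dynamics along the eigendirections $\bg_j$, and then solve those scalar ODEs explicitly.

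\textbf{Step 1: preserve the joint diagonal form.} Suppose at time $t$ the parameters have the form \cref{eq:linattn-time_t}. Then the computation already carried out at $t=0$ applies verbatim, giving
\begin{align*}
\nabla_{\bV} L &= -\sum_j \sigma_j^K\sigma_j^Q c_j\, \be_j\be_j^\top, \\
\nabla_{\bK} L &= -\sum_j v_j\sigma_j^Q c_j\, \be_j\bg_j^\top, \\
\nabla_{\bQ} L &= -\sum_j v_j\sigma_j^K c_j\, \be_j\bg_j^\top,
\end{align*}
where $c_j=\lambda_j^2-\lambda_j\hat\lambda_j v_j\sigma_j^K\sigma_j^Q$. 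The vectors $\{\be_j\}$ and $\{\bg_j\}$ are orthonormal families, so each expression is already a compact SVD up to signs. Its orthogonalization therefore replaces every nonzero coefficient by its sign, and drops the zero ones:
\begin{align*}
\text{orth}(-\nabla_{\bK}L)=\sum_j \mathrm{sgn}(v_j\sigma_j^Q c_j)\,\be_j\bg_j^\top,
\end{align*}
and analogously for $\bV$ and $\bQ$. The update therefore remains in the span of $\be_j\bg_j^\top$ (respectively $\be_j\be_j^\top$), so the diagonal form is invariant. Here $H=D$ makes the index sets match.

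\textbf{Step 2: reduce to scalar ODEs.} Take positive initial values. While $c_j>0$, each of $v_j,\sigma_j^K,\sigma_j^Q$ has derivative $\mathrm{sgn}(c_j)=1$. Once the product reaches $p_j^\ast:=\lambda_j/\hat\lambda_j$, we have $c_j=0$, the $j$-th direction drops out of the orth, and its derivative is $0$. All three scalars start equal to $\epsilon_j$ and obey identical ODEs, so they stay equal to some common $v_j$. This gives
\begin{align*}
\dot v_j=\mathds 1_{\{v_j^3<\lambda_j/\hat\lambda_j\}},
\end{align*}
with solution $v_j(t)=\min[\epsilon_j+t,(\lambda_j/\hat\lambda_j)^{1/3}]$. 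Taking $\epsilon_j\to 0$ for infinitesimal initialization yields the stated formula. Then $\bQ^\top\bV\bK=\sum_j \sigma^Q_j v_j\sigma^K_j\,\bg_j\bg_j^\top=\sum_j v_j^3\bg_j\bg_j^\top$, which is the claimed $\min[t^3,\lambda_j/\hat\lambda_j]$ form. The saturation time is $(\lambda_j/\hat\lambda_j)^{1/3}$. As $N\to\infty$ we have $\hat\lambda_j\to\lambda_j^2$, so this time becomes $\lambda_j^{-1/3}$, matching Theorem \ref{theorem: specgd_attn}.

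\textbf{Main obstacle.} The difficulty is rigor at the saturation boundary. There the right-hand side is discontinuous: orth of a matrix whose singular value tends to zero jumps. One must argue that a direction which reaches $c_j=0$ stays there. My plan is a Filippov or sliding-mode argument. If $v_j^3$ overshoots, then $c_j<0$ and the sign flips to $-1$, which pushes it back. Hence $v_j^3=\lambda_j/\hat\lambda_j$ is an attracting surface, and the unique Filippov solution on it has zero velocity. Directions whose product is below the threshold keep unit speed, independently of the others, because the orth acts on each orthogonal rank-one component separately. This is exactly the sense in which the analogous step in Theorem \ref{thm:spectral_gf} was treated, and I would adopt the same convention here.
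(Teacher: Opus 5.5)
Your proposal is correct and follows essentially the same route as the paper. Both arguments decouple the spectral flow along the joint eigendirections $\bg_j$, turn the orthogonalized coefficients into signs (the paper writes these directly as the indicator $\mathds 1_{\{v_j\sigma_j^K\sigma_j^Q<\lambda_j/\hat\lambda_j\}}$), use the equal initialization to reduce to $\dot v_j=\mathds 1_{\{v_j^3<\lambda_j/\hat\lambda_j\}}$, and then cube; your exact solution $\min[\epsilon_j+t,(\lambda_j/\hat\lambda_j)^{1/3}]$ and your sliding-mode justification at the saturation surface are refinements that the paper leaves implicit, not a different argument.
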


\begin{proof} 
Similar to the gradient flow case, we observe the dynamics in SVD decomposed form, but with orthogonalization, denoted by \texttt{orth}:
\begin{align*}
\frac{d}{dt}\sum_{i=1}^{H} v_i(t) \be_i \be_i^\top
&=\texttt{orth}\left( \sum_{j=1}^D \sigma_j^K (t)\sigma_j^Q(t) \left( \lambda_j^2 - \lambda_j \hat\lambda_j v_j(t) \sigma_j^K(t) \sigma_j^Q(t) \right) \be_j \be_j^\top\right) ,\\
\frac{d}{dt}\sum_{j=1}^{D} \sigma_j^K(t) \be_j \bg_j^\top
&= \texttt{orth}\left(\sum_{j=1}^D v_j(t) \sigma_j^Q(t) \left( \lambda_j^2 - \lambda_j \hat\lambda_j v_j(t) \sigma_j^K(t) \sigma_j^Q(t) \right) \be_j \bg_j^\top\right) ,\\
\frac{d}{dt}\sum_{j=1}^{D} \sigma_j^Q(t) \be_j \bg_j^\top
&= \texttt{orth}\left(\sum_{j=1}^D v_j(t) \sigma_j^K(t) \left( \lambda_j^2 - \lambda_j \hat\lambda_j v_j(t) \sigma_j^K(t) \sigma_j^Q(t) \right) \be_j \bg_j^\top\right) .
\end{align*}
Dynamics again decouples, and for each head $j=1,\cdots,D$ we have:
\begin{subequations}
\begin{align*}
\frac{d v_j}{dt} &= \mathds 1_{\{v_j \sigma_j^K \sigma_j^Q<\lambda_j/\hat\lambda_j\}} ,\\
\frac{d \sigma_j^K}{dt} &= \mathds 1_{\{v_j \sigma_j^K \sigma_j^Q<\lambda_j/\hat\lambda_j\}} ,\\
\frac{d \sigma_j^Q}{dt} &= \mathds 1_{\{v_j \sigma_j^K \sigma_j^Q<\lambda_j/\hat\lambda_j\}} .
\end{align*}
\end{subequations}
Thus, using that $v_j(0)=\sigma_j^K(0)=\sigma_j^Q(0)=\epsilon_j$, their evolution is characterized by
\begin{align*}
\frac{d v_j}{dt} &= \mathds 1_{\{v_j(t)^3<\lambda_j/\hat\lambda_j\}} ,\quad j=1,\cdots,D .
\end{align*}
This exactly results in 
$$v_j(t) = \min\bigg[t, \left(\frac {\lambda_j}{\hat \lambda_j}\right)^{1/3}\bigg],$$
hence the head $j$ is learned in time $T_j\propto \left(\frac {\lambda_j}{\hat \lambda_j}\right)^{1/3}$, and the full weights dynamics are
\begin{align*}
\bQ^\top \bV \bK = \sum_{j=1}^D \min\left[ t^3, \frac{\lambda_j}{\hat \lambda_j}\right] \bg_j \bg_j^\top.
\end{align*}
When $N\to\infty$, $\hat \lambda_j\to\lambda_j^2$ as can be seen from Equation \ref{eq:lambdas}, and thus the time for learning the head $j$ in this regime is $T_j\propto \lambda^{-1/3}$.

In the case of Spectral GF, all eigen-directions are being learned at the same time and with the same speed, and no stage-like dynamics is present. Notice also that, while the convergence time in the case of GF depends on the initial conditions $\epsilon_j$, the dependence is no longer present in Spectral GF.
\end{proof}

\section{Further Experiments and  Details}
\label{appendix: details}
We note that all the experiments in the paper are small scale, they were ran on a single L40 GPU, requiring not more than 8GB of RAM. The experiments take minutes to finish individually.
\subsection{Deep Linear Networks}
\label{appendix: dln}
More analogous experiments to the ones in Figure \ref{fig:theory} can be found in Figure \ref{fig:more_theory}. 

Additionally, we show the oscillations of the singular values of $\bW$ during training in Figures \ref{fig:osc},\ref{fig:oscmom}, happening as a consequence of our discussion in Appendix \ref{appendix: theory}. 
\begin{figure}
    \centering
    \includegraphics[width=0.9\linewidth]{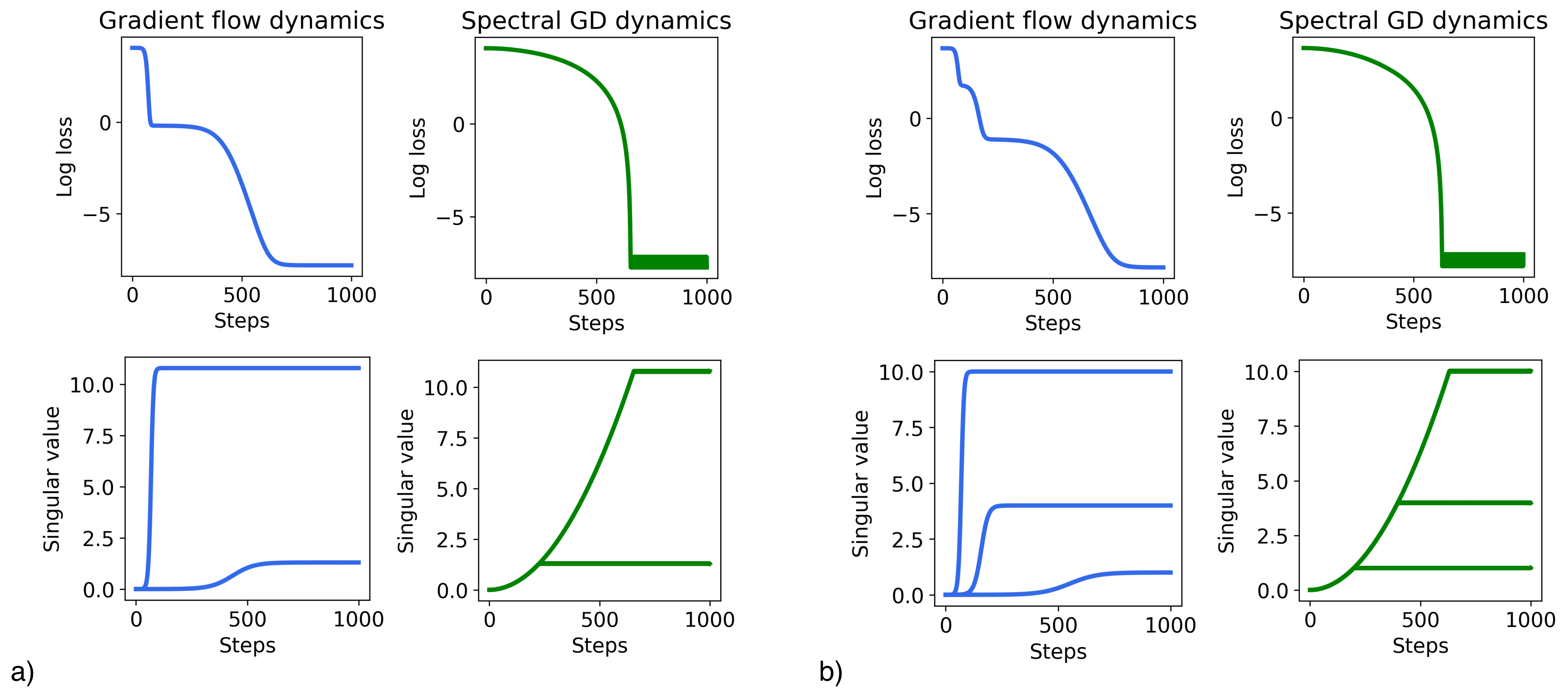}
    \caption{Additional figures supporting the theory from Section \ref{sec: dlnn}. a) $d_\text{in} = d_\text{out}=2$; b) $d_\text{in} = d_\text{out}=3$}
    \label{fig:more_theory}
\end{figure}
\begin{figure}
    \centering
    \includegraphics[width=\linewidth]{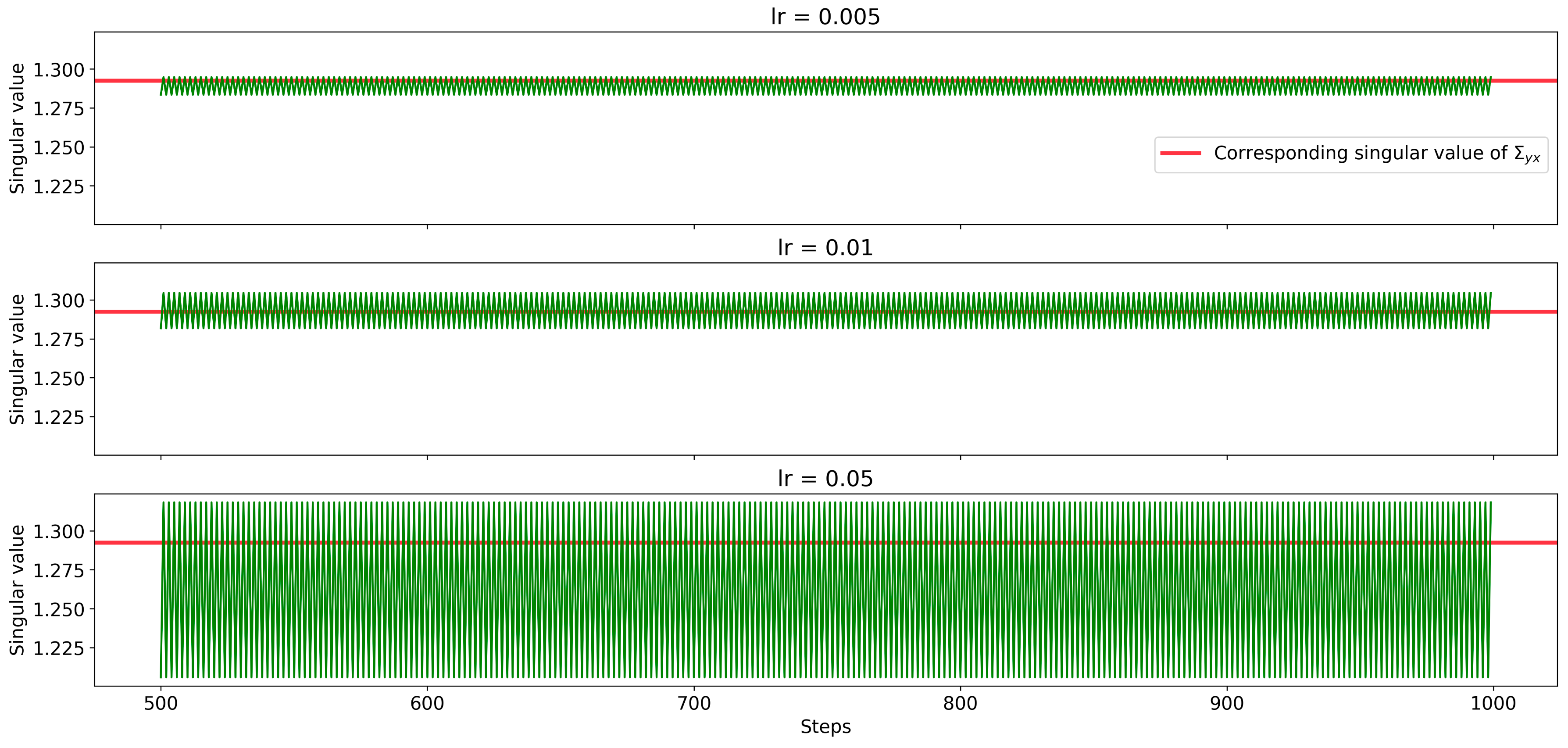}
    \caption{The oscillations of $\sigma_2(t)$ around $s_2$ in the setting from Figure \ref{fig:theory}, shown for different values of the learning rate. This phenomena happens after a principal component is effectively learned by Spectral GD, but not exactly. Then the small noise in the direction of that principal component is amplified by orthogonalization, and the step of order 1 is taken, independently of noise magnitude.}
    \label{fig:osc}
\end{figure}
\begin{figure}
    \centering
    \includegraphics[width=\linewidth]{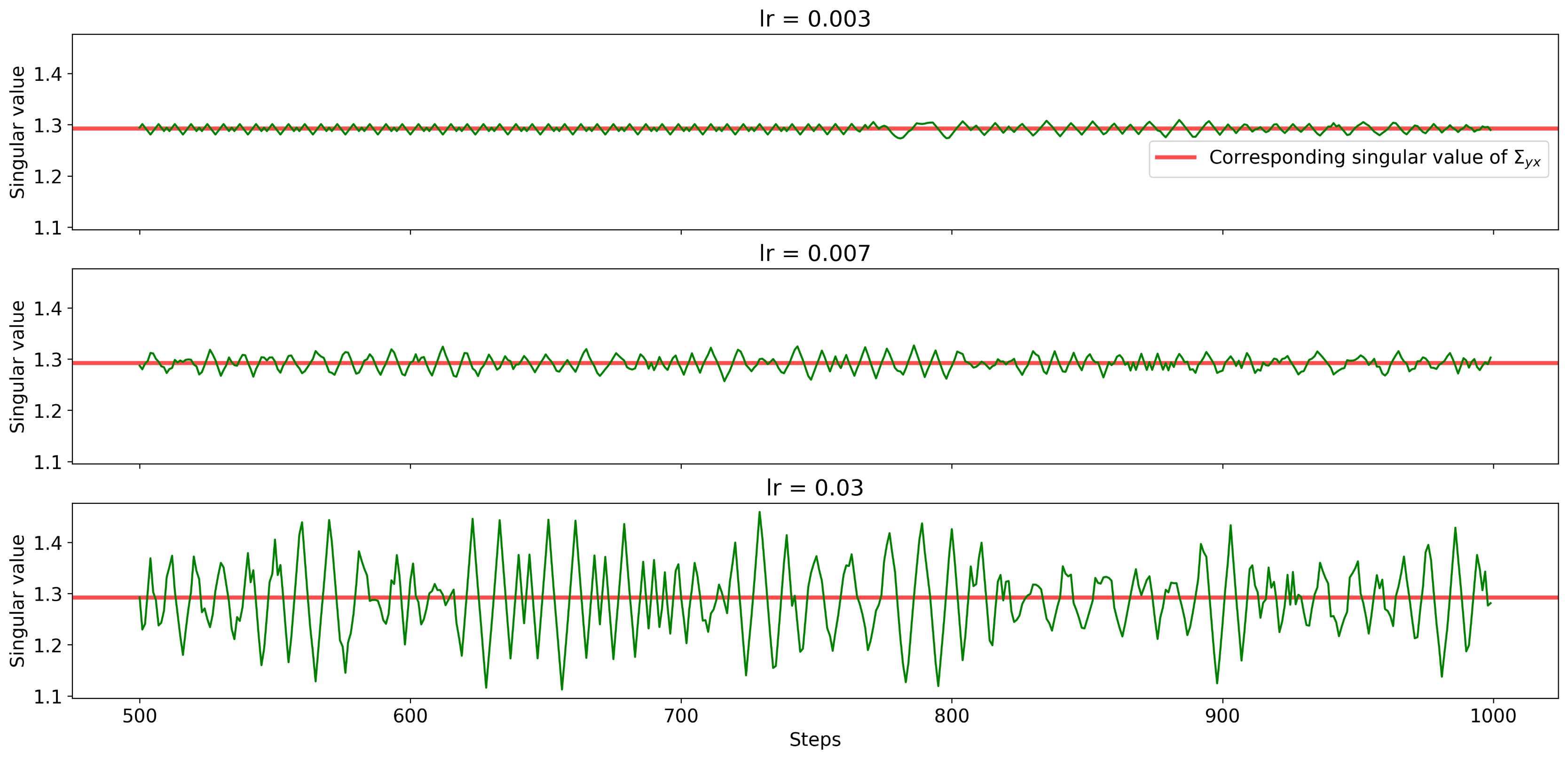}
    \caption{Same setup as Figure \ref{fig:osc}, but this time we're using Spectral GD with momentum, a step closer to recovering Muon optimization.}
    \label{fig:oscmom}
\end{figure}

\subsection{Digit and Pixel Feature Learning}
\label{appendix:features}
Here we provide more implementation details on the results in Figures \ref{fig:features} and \ref{fig:dp_adam}. For all three optimizers we picked the best performing learning rates across multiple runs. The learning rates sweep results are shown in Figure \ref{fig:lr_sweep}. We used Muon implementation \texttt{SingleDeviceMuonWithAuxAdam} from \citet{jordan2024muon}, with the base learning rate applied to Adam optimized parameters, and for the Muon ones the learning rate was $80\times$ larger. Muon optimized parameters had weight decay set to $0.1$, and momentum for both SGD and Muon was 0.95. Adam's beta parameters were set to $(0.9, 0.999)$. For training batch size was set to 64, and we were evaluating results with checkpoints every 100 batches. In total, the base run had 10 epochs. Default pixel feature has intensity 1. We used a simple CNN with  2 convolutional layers followed by 2 linear layers. Weights are initialized as Gaussian, with mean 0 and standard deviation 0.01. Convolution kernel sizes are 3, and the number of filters is 32 in the first and 64 in the second layer, with MaxPool of size 2 in between. The first fully connected layer has output dimension 128, and the second one 10 (the number of classes).
\begin{figure}
    \centering
    \includegraphics[width=\linewidth]{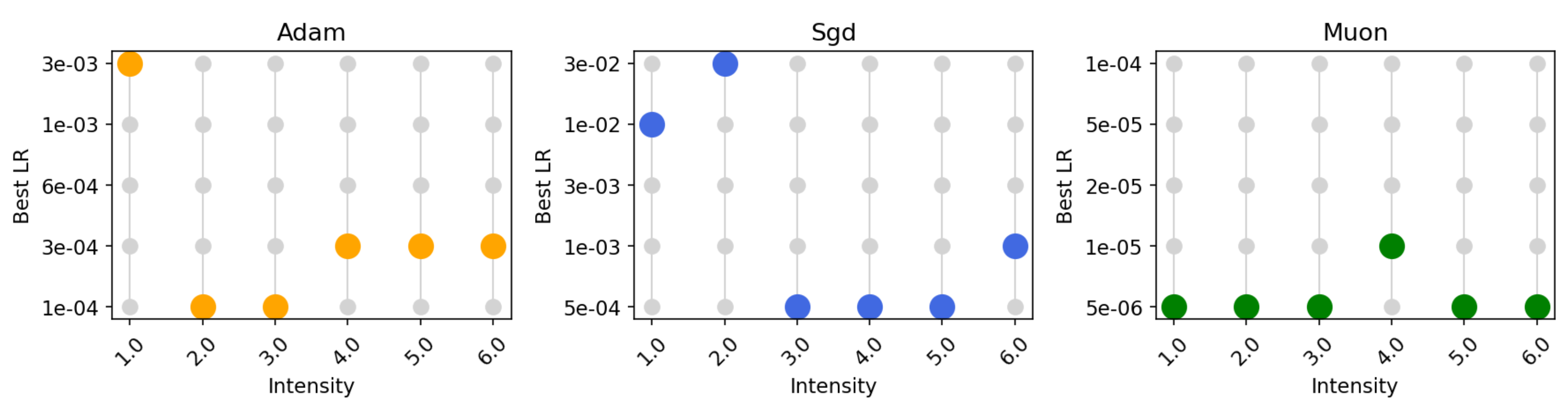}
    \caption{Learning rates used for experiments with pixel and digit feature in Section \ref{sec: non-lin}.}
    \label{fig:lr_sweep}
\end{figure}

\subsection{Routing Task and Shared Representations}
\label{appendix: nrr}
\paragraph{Setup Details.} Here we clarify the setup used for experiments in Section \ref{sec: routing}. We denote a sample $(x_i, y_i)_{j, o}$ to mean that $x_i$ comes from input source $j$ and $y_i$ is going to the output source $o$. During training, we only see samples $(x_i, y_i)_{j, j}$ and $(x_i, y_i)_{j, j+1}$. Denote with $W^{in}_{j}\in \Rl^{64\times 4}$ the input encoder for source $j$, $W^{out}_o\in \Rl^{7\times 64}$ the decoder for output source $o$, and $W^{h}\in \Rl^{64\times 64}$ the hidden linear layer. Then given $(x_i, y_i)_{j, o}$, the output of the network is $\hat y_i = W^{out}_o W^h W^{in}_jx_i$. We use MSE objective to train the network, i.e. minimize $\|\hat y_i-y_i\|^2$. Each gradient update involves a batch containing a single sample from each allowed input-output pair. Full batch sampling procedure can be seen in Algorithm \ref{alg:batch_sampling}.

\begin{algorithm}[ht]
\caption{Sampling a batch in the routing task}
\label{alg:batch_sampling}
\begin{algorithmic}[1]
\REQUIRE Number of sources $m=7$, allowed shifts $k=2$, total numbers in the task $N=4$, orthonormal set of $N$ input vectors in $\R^N$ for each input source $j$: $\{(v_1^j, ..., v_N^j)\}_{j=0,...,m-1}$
\STATE Initialize batch $\mathcal{B} \gets \emptyset$
\FOR{input source $j = 0$ \TO $m - 1$}
    \FOR{shift $s = 0$ \TO $k - 1$}
        \STATE Output source $o \gets (j + s) \bmod m$
        \STATE Sample a random data index $i \sim \mathcal{U}(1, N)$
        \STATE $x_i\gets v_i^j$
        \STATE $y_i\gets$ output vector corresponding to $i$ (Figure \ref{fig:nrr}b)
        \STATE Add sample to batch: $\mathcal{B} \gets \mathcal{B} \cup \{(x_i, y_i)_{j, o}\}$
    \ENDFOR
\ENDFOR
\RETURN Training batch $\mathcal{B}$
\end{algorithmic}
\end{algorithm}

\paragraph{Analysis.}In a similar setup, \citet{saxe2022neural} theoretically demonstrate that GD has an implicit bias towards learning shared representations, with the core argument lying in Theorem \ref{thm:gf} 2. Because of our gained intuition, we identified this setting as the one where Spectral GD may be disadvantaged by the absence of simplicity bias. Mathematically, each gating mechanism follows the dynamics derived by \citet{saxe2022neural}:
\begin{align*}
    \frac{d}{dt} B_1 &= \frac{\sqrt{P}}{M^2} B_2 B_1 \left[ S - B_2 B_1^2 D \right] \\
    \frac{d}{dt} B_2 &= \frac{P}{M^2} B_1^2 \left[ S - B_2 B_1^2 D \right]
\end{align*}
Here, $B_1$ and $B_2$ represent the spectra/mode strengths of the gated components, while $S$ and $D$ denote certain data statistics (see \citet{saxe2022neural}). The focus is on ``pathway counting'' argument: the gradient updates are scaled by pre-factors proportional to $P$, the number of pathways passing through a gated unit. In the initial phase of learning, where dynamics are dominated by exponential growth, these pre-factors act as rate constants; consequently, configurations with maximal $P$ (i.e., $P=M^2$, corresponding to shared representations) are exponentially faster to learn and win the ``neural race''. In contrast, Spectral GD orthogonalizes each of the updates, effectively removing the dependence of learning speed on the pathway multiplicity $P$, and the order of speed of learning for each gating mechanism is the same. In that case, the winning gating strategy is dependent on other conditions, such as the random initialization.

\subsection{Transformer in Cycle Task}
\label{appendix: transformer}
Implementation details related to the transformer model are as follows. We use two layer softmax attention transformer, with 2 heads in each layer, with causal masking and RoPE positional embedding. The token dimension being equal to the embedding dimension was $N=8$, and the inputs were one-hot encodings of each number. The rank of each head was 12. The transformer was attention-only, meaning it didn't have MLPs. The residual connections were present after the first layer, but not after the second (as it was the last). Every weight matrix of dimensions $(\texttt{in\_dim, out\_dim})$ was initialized as Gaussian with mean $\mathbf 0$ and std $0.01/\sqrt{\texttt{in\_dim}\times \texttt{out\_dim}}$. For SGD, learning rate was set to 0.2 for the second layer and 5 times larger for the first (as otherwise the model didn't learn) and momentum to 0.9. As no weights used biases, for Muon run all weights were optimized with \texttt{SingleDeviceMuon} from \citet{jordan2024muon}. The learning rate was 0.0003 with the default momentum of 0.95. Training was done by minimizing MSE loss for each next token prediction. We train for 40,000 steps, each one involving a single batch of size 512. We used $N=8$ numbers in a cycle and sequence length $80$.
\paragraph{Further experiments.} We reproduce the phenomenon across couple of different setups. In Figure \ref{fig:tr_diff_set}a), we show the results when training on the skips $\{1,2,3,-2,-3\}$ and observe whether the transformers generalize to skip -1. In b) we train with transformers with 3 heads, each rank 12, and in c) with 4 heads. 
\begin{figure}
    \centering
    \includegraphics[width=\linewidth]{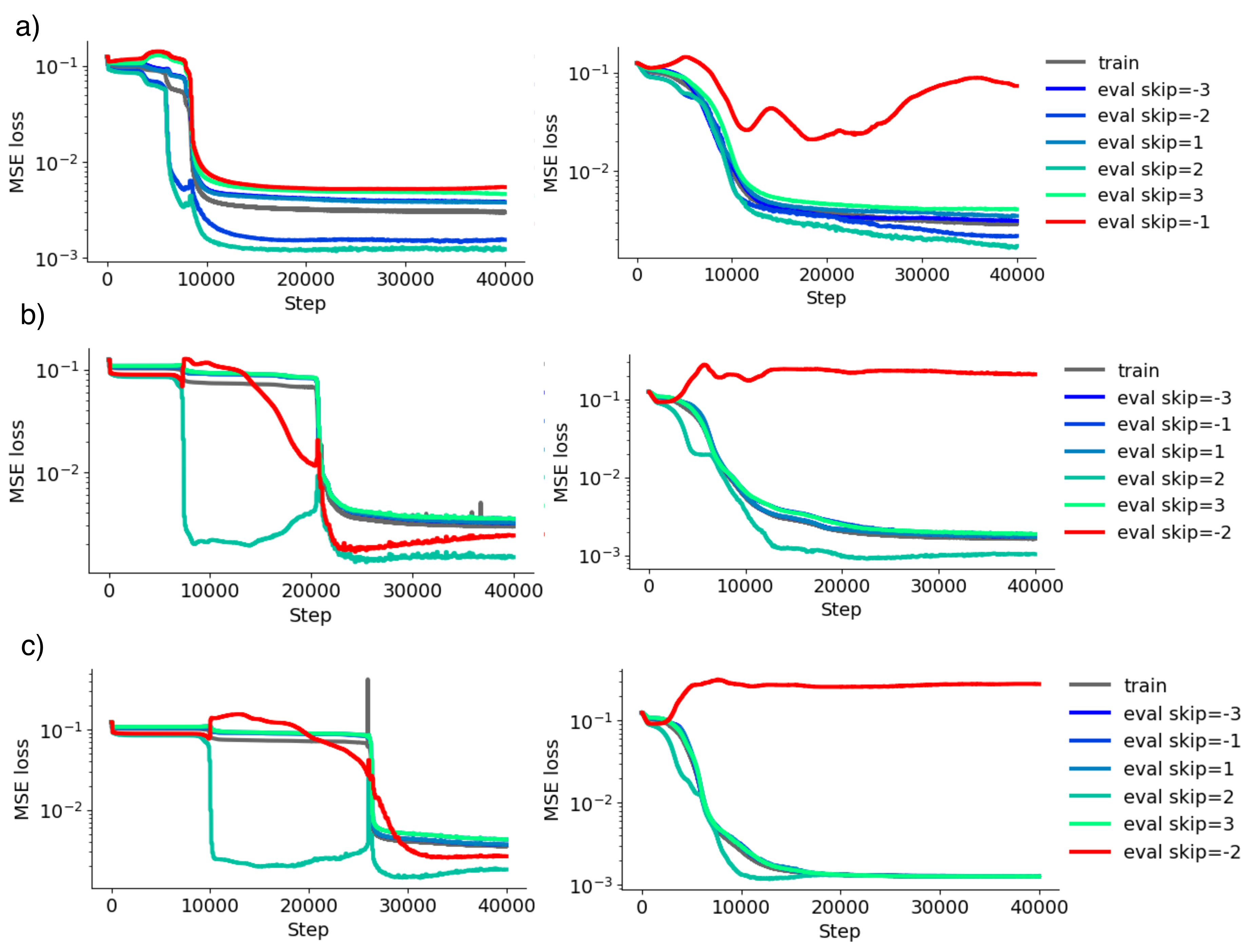}
    \caption{a) SGD (Left) and Muon (Right) optimized transformers, trained on skips $\{1,2,3,-2,-3\}$, and evaluated at skip -1. b) The same run as in the base setting, but with 3 heads per layer instead. c) 4 heads instead. All plots show the same phenomena observed in Section \ref{sec: transformer}: SGD (Left) generalizes to the unseen skip, and Muon (Right) doesn't. }
    \label{fig:tr_diff_set}
\end{figure}
\subsection{Linear Attention Simulations Beyond the Assumptions}
\label{appendix: lin attn sim}
For the simulations in Figure \ref{fig:tr_dyn}, we used the task setup described in Section \ref{sec: transformer_theory}. While the figure is illustrating the learning dynamics of Spectral GD, for its generation we used Muon. The linear attention model consists of three matrices $\bK, \bQ, \bV$ that were optimized either by GD with learning rate $0.01$ or Muon with learning rate $0.00023$. The initialization of $\bK, \bQ$ is Gaussian with mean 0 and std $\texttt{init}/\sqrt{HD}$, where the $\texttt{init}=0.01$ for GD and $\texttt{init}=0.001$ for Muon. $\bV$ is initialized as diagonal, with diagonal entries being Gaussian with mean 0 and std $\texttt{init}/H$. We do not further constrain it to stay diagonal through training, although empirically it seems to be the case. Note that this setting doesn't impose the joint diagonalizability assumption directly, but only through small initialization. For GD this is enough as the growth happens along one direction at a time, aligning the key and query vectors---a phenomenon called silent alignment. For Muon this is not the case, as the key and query vectors grow equally in all directions of eigenvectors of $\bL$. Nonetheless, the general phenomenology of all heads being learned in the same time still holds. 

We show more simulations for $D=H=3$ in Figure \ref{fig:attn_3d}, and for $D=2, H=20$ in Figure \ref{fig:attn_20h}. For the simulations applying Spectral GD to full matrices $\bW_K, \bW_Q, \bW_V$ instead of the reparameterized $\bK, \bQ,\bV$, see Figure \ref{fig:attn_full_matrices}, showing the learning of all heads simultaneously still happens.
\begin{figure}
    \centering
    \includegraphics[width=\linewidth]{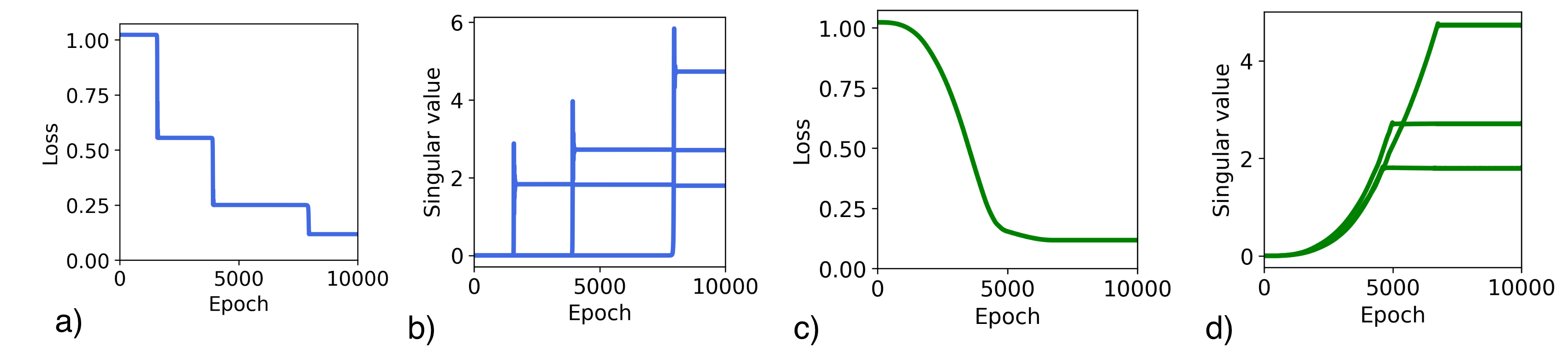}
    \caption{a) Loss for GD, b) evolution of singular values of $\bK\bV\bQ^\top$ for GD optimized attention, c) loss for Spectral GD d) singular values for Spectral GD in the setting $D=H=3, R=1$.} 
    \label{fig:attn_3d}
\end{figure}
\begin{figure}
    \centering
    \includegraphics[width=\linewidth]{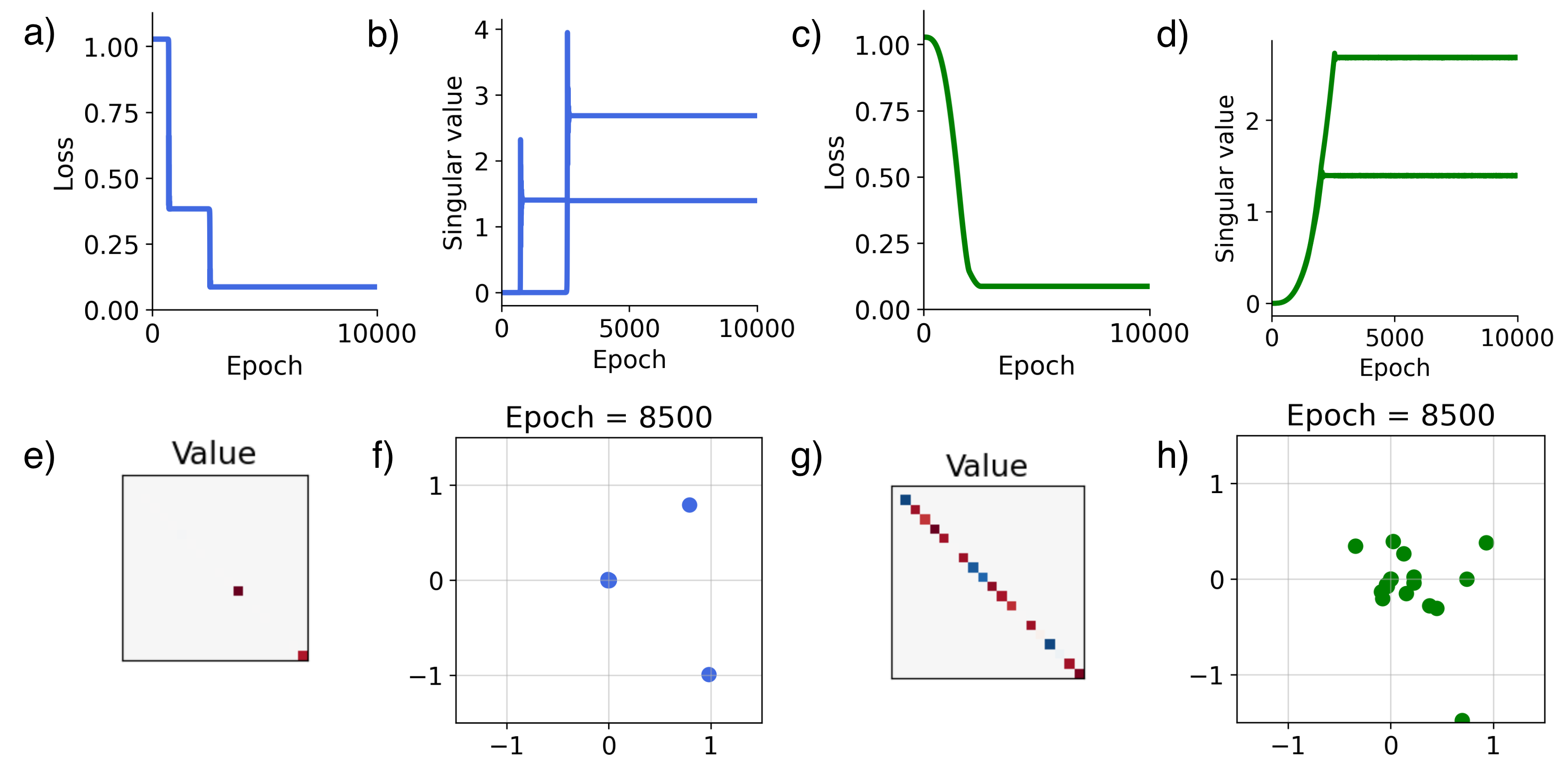}
    \caption{Linear attention learning when $H>D$, here $H=20, D=2$. a) GD loss; b) GD singular values; c) Spectral GD loss; d) Spectral GD singular values; e) GD final value matrix; f) GD key vectors at step 8500; g) Spectral GD final value matrix; h) Spectral GD key vectors at step 8500. All together, we notice how GD activates as many heads as the task requires, but not more. On the other hand, Spectral GD activates all the heads, showcasing its loss of simplicity bias.}
    \label{fig:attn_20h}
\end{figure}
\begin{figure}
    \centering
    \includegraphics[width=\linewidth]{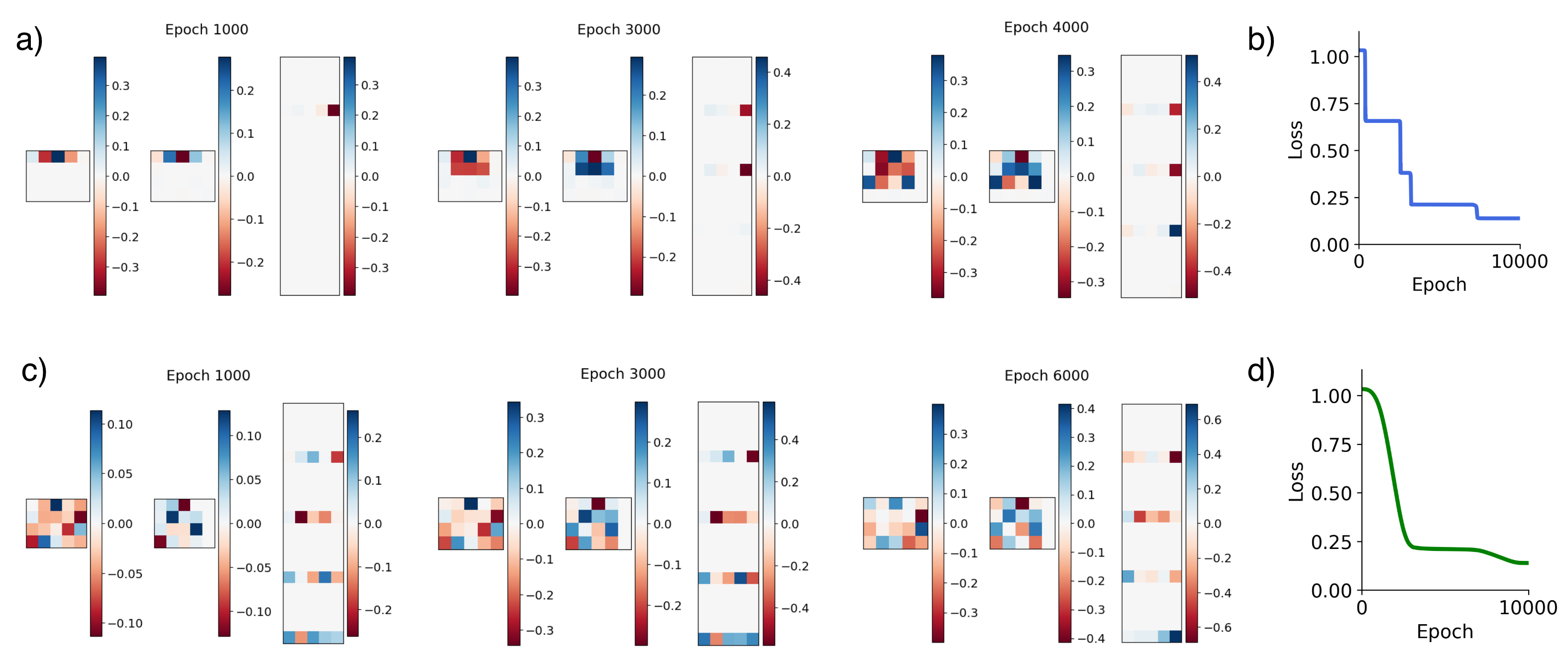}
    \caption{Empirical observations of weight matrices (a,c) $\bW_K, \bW_Q, \bW_V$ in that order, through training and loss curves (b,d) when trained with GD (a,b) and Spectral GD (c,d).}
    \label{fig:attn_full_matrices}
\end{figure}

\section{Questions and Answers}
\label{appendix: qa}
Many sub-questions arose while we were writing the paper. For the flow of presentation, we didn't include them in the main text, but instead collect them here.
\begin{itemize}
    \item \textbf{What about Adam optimizer?} For the theory part, Adam's per-parameter update rule makes it hard to mathematically derive the evolution of singular values of its solution matrix. There is no guarantee even that the principal components will be aligned through training, even if they are at initialization. Nonetheless, it is the case that Adam also doesn't follow the same trajectory as GD, escaping the saddles \citep{jacobs2026never}. In fact, an Adam simplification frequently used by theorists, named SignGD, employed on a simplified, diagonal deep linear network has the exact same dynamics (mathematically) as Spectral GD in the same setting. All that to say, Adam also loses the simplicity bias of GD. Below we show how it compares empirically in each setting we considered.
\end{itemize}
\subsection{Digit and Pixel Feature Learning}
\label{appendix:qa_dp}
\begin{itemize}
    \item \textbf{What if the state where the digit is a dominant feature for SGD is meta-stable, and eventually the pixel becomes dominant?} With all the saddle point plateaus SGD has to escape, we were wondering if the accuracy on Misaligned; label=digit set will eventually drop. We show a longer experiment, this time with 100 epochs in Figure \ref{fig:dp_long}. Noticing how SGD accuracy on Misaligned datasets stabilized, we doubt that the situation will change after further training. 
    \item \textbf{How do we know it is a competition between fully present features rather than one of the features being transient?} To answer this, we fit a linear probe on the last hidden layer. Results in Figure \ref{fig:probe} suggest that throughout training, both features are present, and no transience occurs. This explains that the reason the accuracy on Misaligned; label=digit decreases is the increasing strength of pixel feature, not the decreasing strength of the digit one.
    \item \textbf{Have you plotted average probability of the correct token instead of accuracy?} Probability for each relevant label has a nice property of being continuous per sample, where accuracy on a sample contributes either with 0 or 1. For this reason, we also plot the probabilities on Misaligned datasets in Figure \ref{fig:prob_spur}a,b), analogous to the plots in Figure \ref{fig:features}d,e). The qualitative similarity with the hypothesized feature presence remains.
    \item \textbf{How does Adam perform?} We saw in Section \ref{sec: spur} that in the presence of spurious features Adam can also be worse than SGD, contrary to the story in \citet{vasudevarich}. In Figure \ref{fig:dp_adam} we show the accuracy curves of Adam in the two feature learning setup. 
    \item \textbf{Have you tried the setting where spurious pixel is highly, but not fully correlated with the label?} To this end, we train on a set where $\alpha$ of all images contain the spurious pixel, for $\alpha=0.95$ and $\alpha=0.9$. Peak accuracy for all three optimizers on Digit only set is almost perfect. For that reason, we plot the peak accuracy on Misaligned; label=digit set instead. The results as we vary the intensity of the spurious pixel are shown in Figure \ref{fig:prob_spur}c, d). Once again, SGD is better for low pixel intensities, but as the intensity of spurious pixel grows, the pixel feature becomes more dominant and hurts the performance of SGD more than Muon and Adam.
\end{itemize}
\begin{figure}
    \centering
    \includegraphics[width=0.8\linewidth]{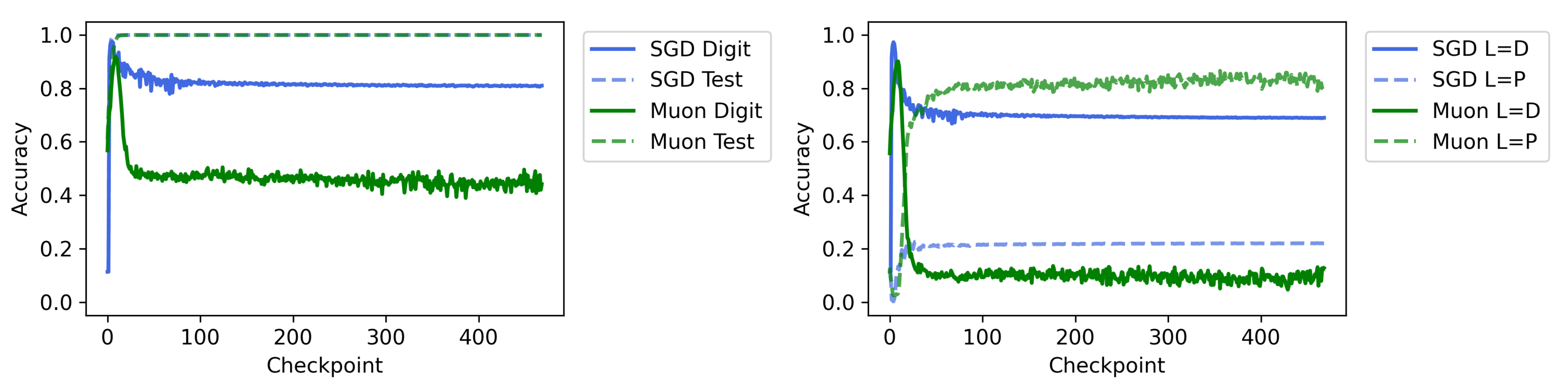}
    \caption{Accuracies of Muon and SGD algorithms ran for 100 epochs on (Left) Test and Digit only and (Right) Misaligned evaluators.}
    \label{fig:dp_long}
\end{figure}
\begin{figure}
    \centering
    \includegraphics[width=0.8\linewidth]{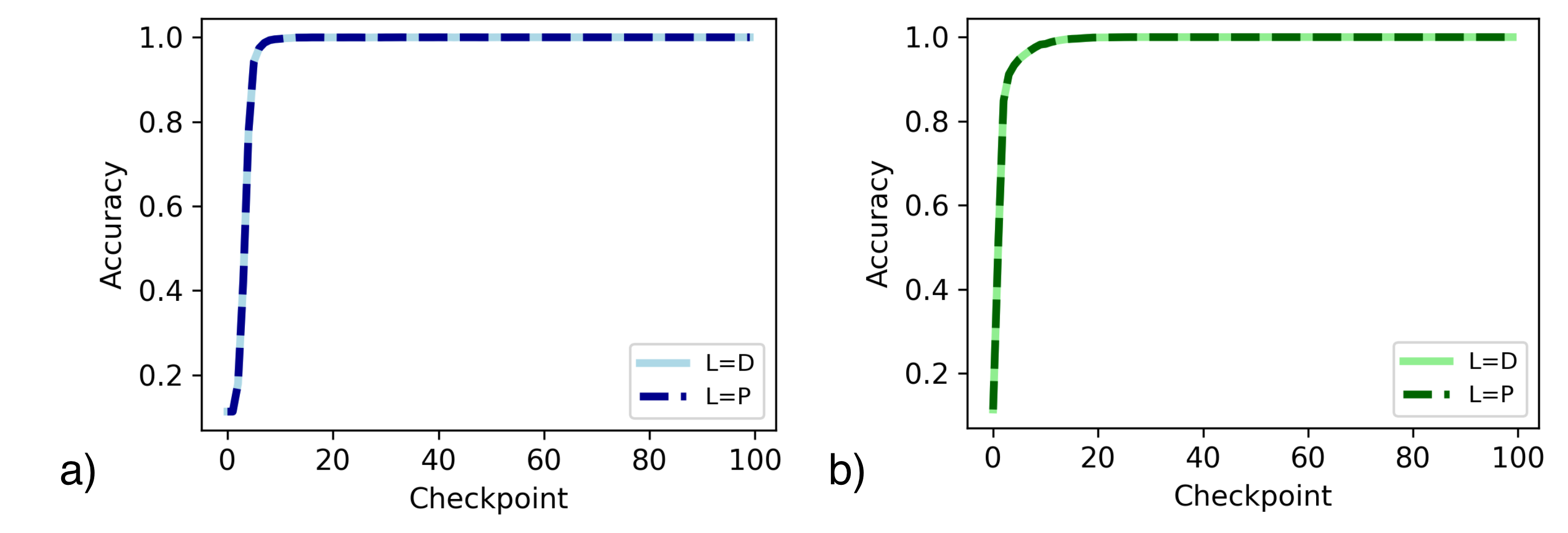}
    \caption{Fitting a linear probe on the final layer of the models trained with a) SGD and b)Muon. The perfect accuracy on both Misaligned sets tells us that both the digit and pixel feature are present.}
    \label{fig:probe}
\end{figure}
\begin{figure}
    \centering
    \includegraphics[width=\linewidth]{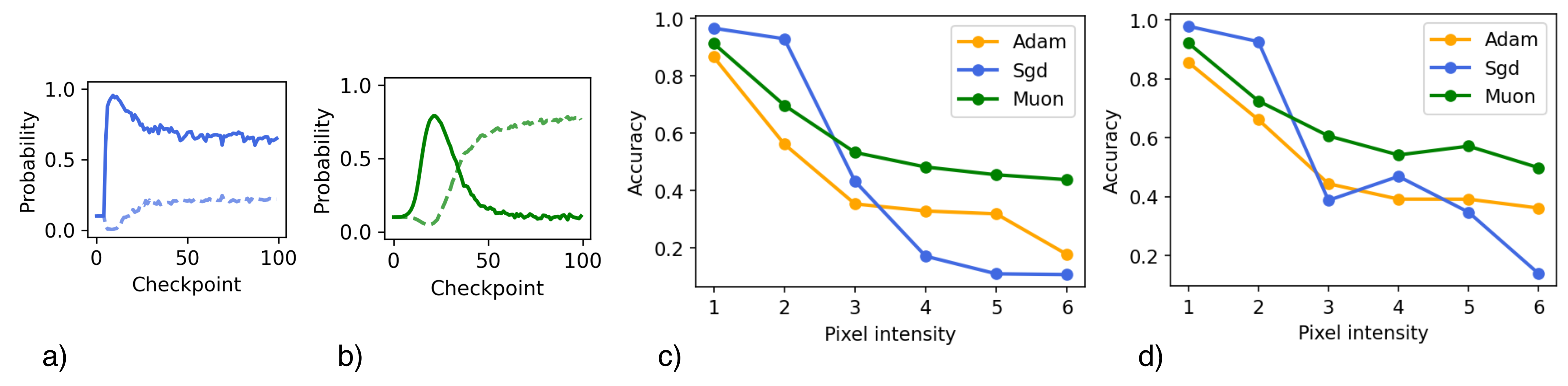}
    \caption{Probabilities of the label=digit (full line) and label=pixel outputs, on the base runs optimized by a) SGD and b) Muon. The curves behave similarly to the accuracy ones in Figure \ref{fig:features}d,e). The peak accuracies on Misaligned; label=digit set when only a) $\alpha=0.95$, b) $\alpha=0.9$ fraction of training samples has the spurious pixel.}
    \label{fig:prob_spur}
\end{figure}
\begin{figure}
    \centering
    \includegraphics[width=0.8\linewidth]{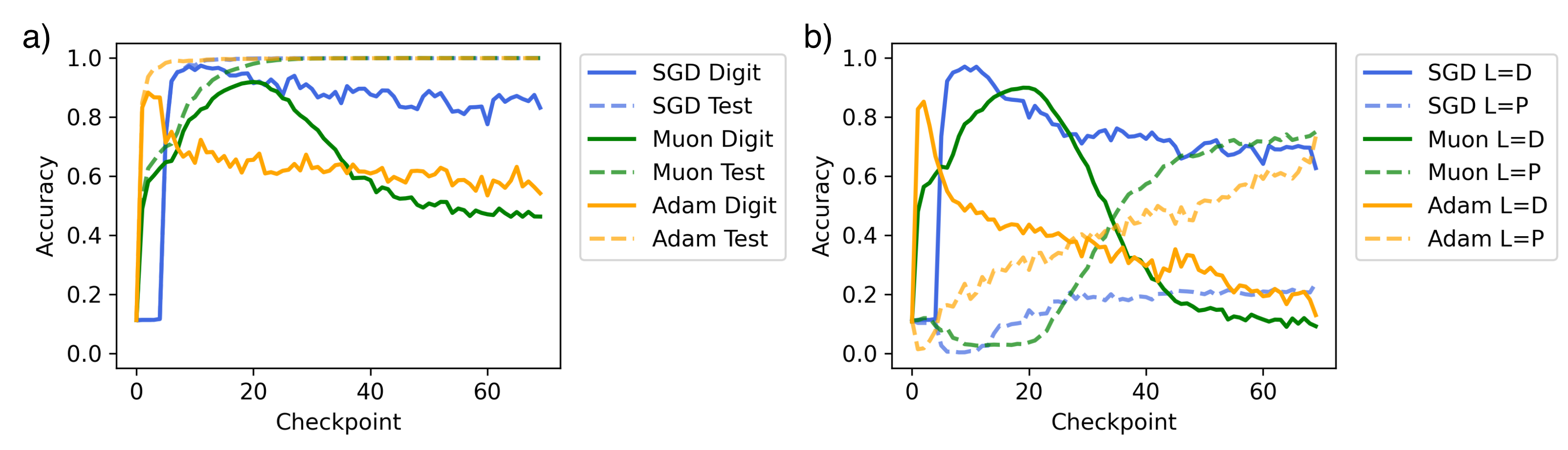}
    \caption{Same setup as in Figure \ref{fig:features}b,d), but now together with Adam's performance.}
    \label{fig:dp_adam}
\end{figure}
\subsection{Transformer Learning Cycles}
\label{appendix: qa_tr}
\begin{itemize}
    \item \textbf{How do Adam and AdamW compare in your setting?} Adam and AdamW, while behaving differently than Muon, also seemingly fail to capture the underlying similarity between different cycle skips. In Figure \ref{fig:tr_adam}, we provide their loss curves.
    \item \textbf{What happens if you switch to cross-entropy loss?} If we use cross-entropy loss instead of MSE, both optimizers seemingly don't learn shared representations, with addition of SGD's loss also exploding eventually (Figure \ref{fig: tr_ce}).
    \item \textbf{What happens if you add Layer Norm?} We train the transformers with a fixed (unlearnable, \texttt{elementwise\_affine=False}) layer norm; for SGD we lowered the learning rate to 0.02. We see that both optimizers seemingly failed to capture the shared structure in Figure \ref{fig:tr_norm}.
    \item \textbf{What happens if you add MLPs?} Results of adding the MLPs with hidden layer ratio being 4 are shown in Figure \ref{fig:tr_mlp}. We notice seemingly noisier training with both optimizers, and SGD also loosing the plateaus; both runs didn't result in learning the unseen skip -2.
    \item \textbf{Maybe Muon needs more regularization, what happens if you increase the weight decay?} The original run is ran with the default setup of \texttt{SingleDeviceMuon} from \citet{jordan2024muon}, which has weight decay 0. Here we present results if we increase the weight decay to 0.1 and 0.2 in Figure \ref{fig:tr_wd}. We see that the weight decay doesn't help. Again this may be because weight decay constraints both heads equally, and to see the regularization that could produce a similar behavior to SGD, one needs to constrain them unequally, perhaps with a weight decay schedule per head.
    \item \textbf{What happens if you use only one head?} We plot the results in Figure \ref{fig:tr_one}. With one head, SGD run was more unstable, so we lowered the learning rate to 0.1. Muon still doesn't generalize to unseen skip, and perhaps this time the reason lays more closely to the theory on deep linear networks, or it could be due to all the nonlinearities deviating from the theory setting.
    \item \textbf{What happens if the head rank $R$ is smaller than $N$, is SGD still generalizing?} The results with ranks $R=8,4$ suggest that SGD struggles to find generalizable structures in these settings.
    \item \textbf{Can curriculum help Muon learn all skips with shared representations?} We were curious about this, and introduced couple of different curricula to see whether they can recover the effect of gradual learning with SGD. In Figure \ref{fig:curr}, we see that none of our curricula really helped with learning the skip -2 which is unseen in training. Our hypothesis is that Muon's problem with activating all heads at the same time is still causing the problem, where the information for different skips, though highly related, ends up being learned by different heads.
    \item \textbf{Can dynamic curriculum in weight activations help learn all skips with shared representations?} Another type of curriculum is allowing the weights to update gradually through training. The motivation behind this is the observed dynamic curricula in weight activation of GD. We implement weight activation "head-by-head", and within a head, $QK$ circuit is activated "neuron-by-neuron". A part of weights being active means the optimizer update is applied to it from that point onwards. That means that the value matrix for the first head is activated first, and the $Q, K$ matrices rows are activated one by one as training progresses. Once all the rows of the $Q, K$ matrices are active, the second head actives. $V$ is fully active immediately, while $QK$ circuit activates neuron-by-neuron (row-by-row). We show the results when using AdamW optimizer with this dynamic weights curricula in Figure \ref{fig:tr_dyn_curr}, observing it does better than the plain AdamW, learning the unseen skip -2. We note that the success of learning the unseen skip highly varied when we change the learning rate. The use of Muon with the dynamic curricula in weights is not possible because of orthogonalization.
\end{itemize}
\begin{figure}
    \centering
    \includegraphics[width=\linewidth]{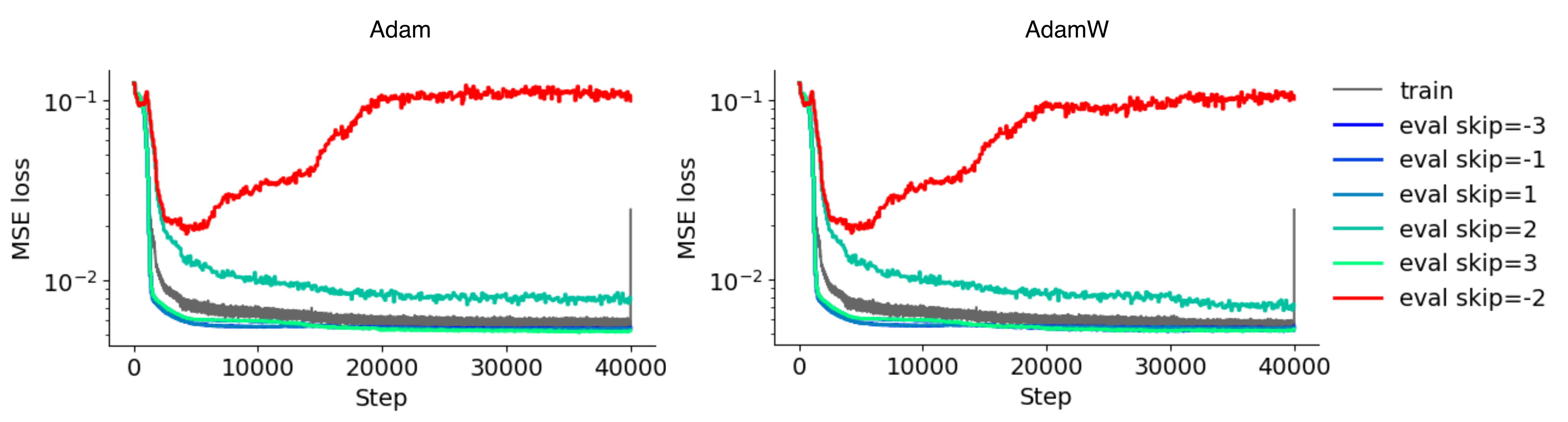}
    \caption{Results on transformer cyclic task for Adam and AdamW optimizers, trained in the same setup of Figure \ref{fig:tr}.}
    \label{fig:tr_adam}
\end{figure}
\begin{figure}
    \centering
    \includegraphics[width=\linewidth]{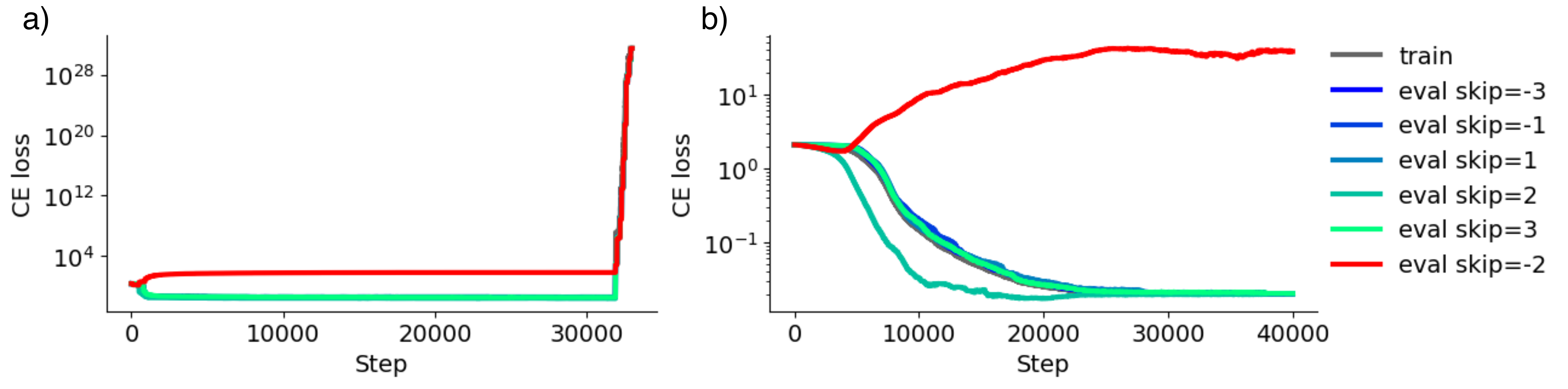}
    \caption{Cross-entropy loss curves for a) SGD and b) Muon in the otherwise same setting of the cycle task as in the main text.}
    \label{fig: tr_ce}
\end{figure}
\begin{figure}
    \centering
    \includegraphics[width=\linewidth]{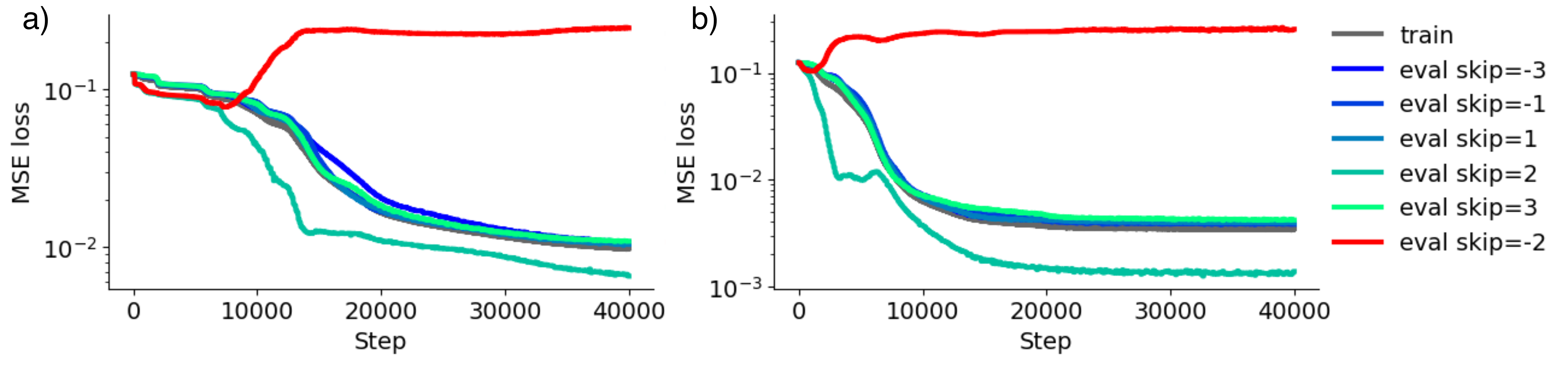}
    \caption{MSE loss curves on transformers with layer norm trained with a) SGD and b) Muon.}
    \label{fig:tr_norm}
\end{figure}
\begin{figure}
    \centering
    \includegraphics[width=\linewidth]{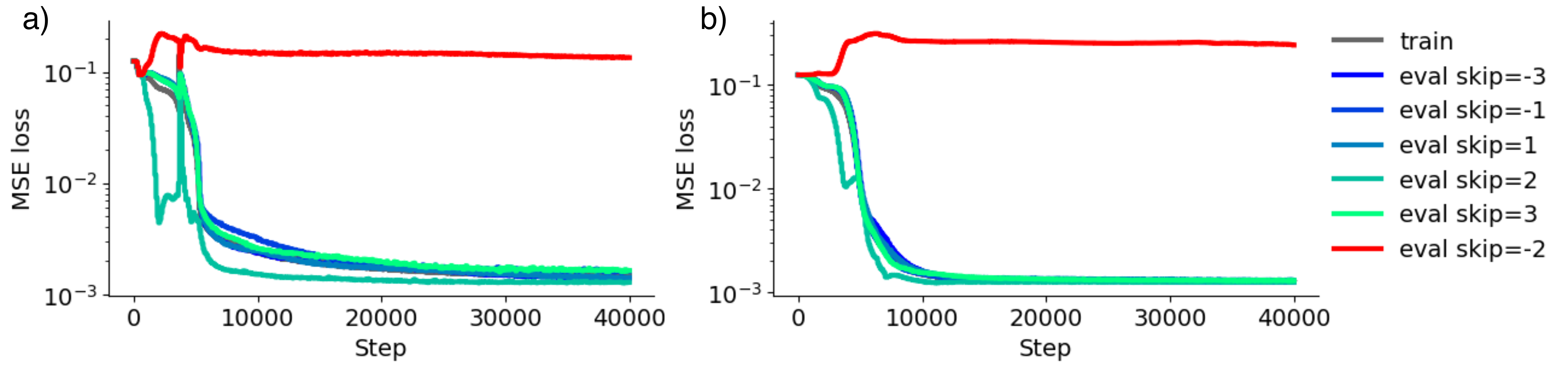}
    \caption{The loss curves shown for a) SGD and b) Muon, when the models now additionally have MLPs after attention layers.}
    \label{fig:tr_mlp}
\end{figure}
\begin{figure}
    \centering
    \includegraphics[width=\linewidth]{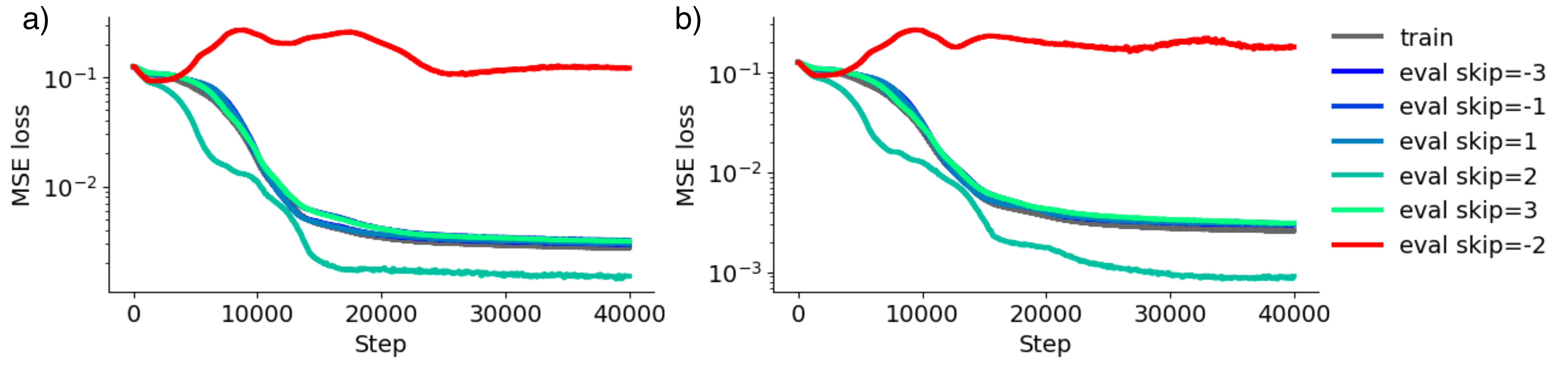}
    \caption{Muon trained transformer on our task with weight decay a) 0.1 and b) 0.2. We see that weight decay did not improve the generalization of the model.}
    \label{fig:tr_wd}
\end{figure}
\begin{figure}
    \centering
    \includegraphics[width=\linewidth]{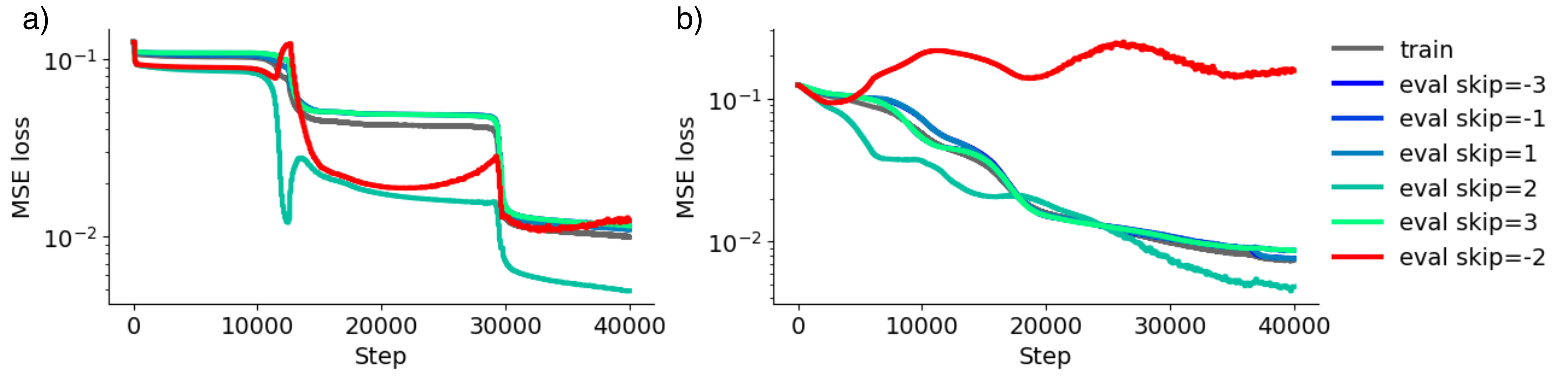}
    \caption{a) SGD and b) Muon trained transformers, now with a single head of dimension 12.}
    \label{fig:tr_one}
\end{figure}
\begin{figure}
    \centering
    \includegraphics[width=\linewidth]{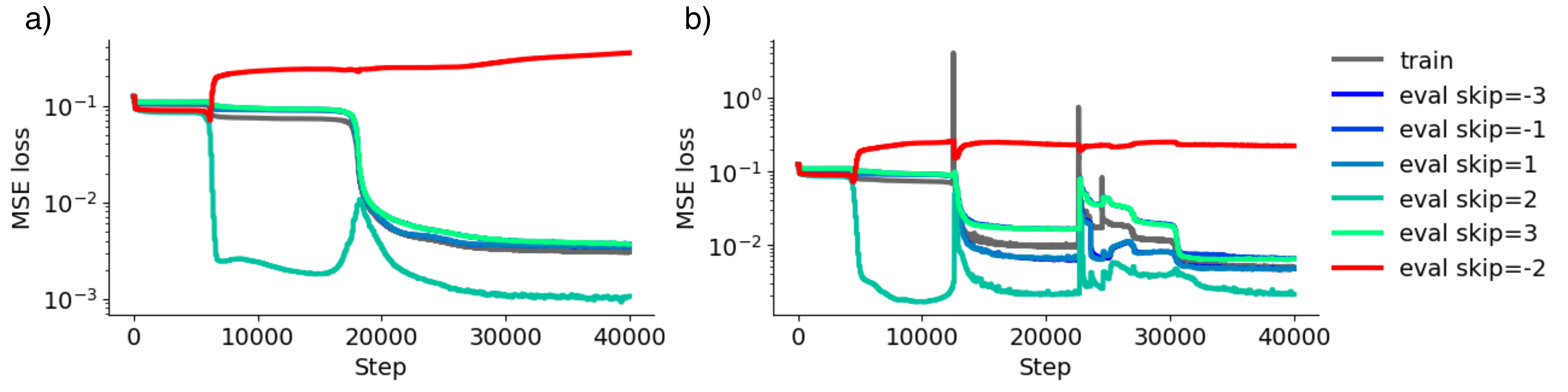}
    \caption{Base setting SGD trained transformer, but instead with ranks a) R=8 and b)R=4 per head. Having smaller rank took away the generalization. We also notice more training instabilities with $R=4$.}
    \label{fig:tr_lr}
\end{figure}
\begin{figure}
    \centering
    \includegraphics[width=\linewidth]{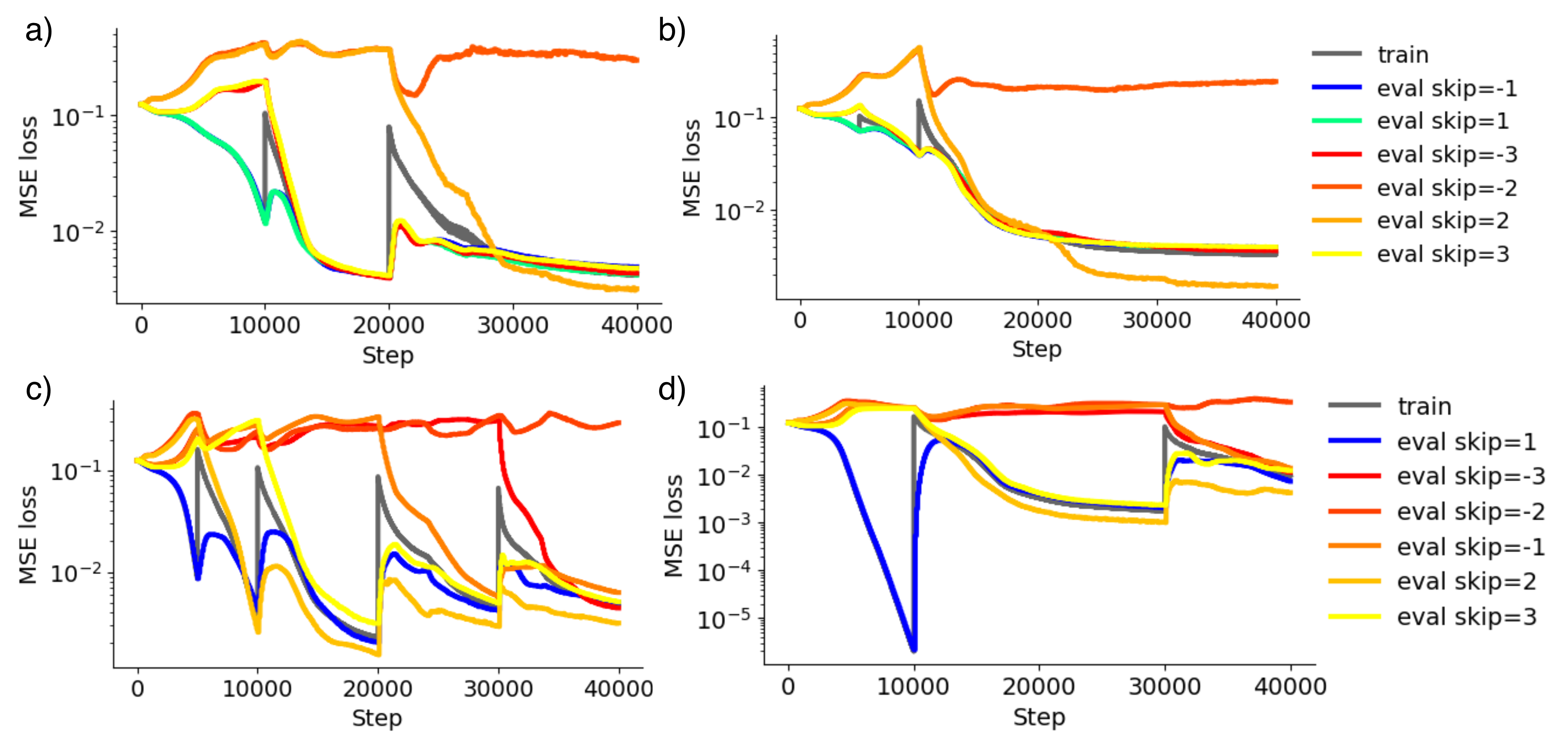}
    \caption{Introducing a couple of different curricula into the task of learning cycles with Muon. Curricula are a) at step 0: skips 1 and -1, at step 10000: skips 3 and -3, at step 20000: skip 2; b) at step 0: skips 1 and -1, at step 5000: skips 3 and -3, at step 10000: skip 2 c) at step 0: skip 1, at step 5000: skip 2, at step 10000: skip 3, at step 20000: skip -1, at step 30000: skip -3; d) at step 0: skip 1, at step 10000: skips 2, 3, at step 30000: skips -1 and -3.}
    \label{fig:curr}
\end{figure}
\begin{figure}
    \centering
    \includegraphics[width=0.7\linewidth]{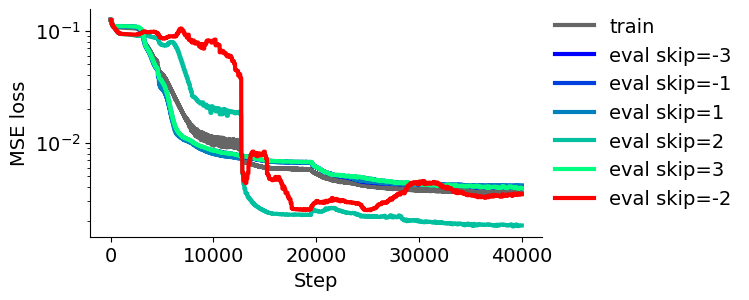}
    \caption{Employing the dynamic curriculum in weight activation, using AdamW optimizer with learning rate 0.0004. We see that learning the unseen skip is more successful now, compared to vanilla AdamW run.}
    \label{fig:tr_dyn_curr}
\end{figure}


\end{document}